\newtheorem{theorem}{Theorem}[section]
\newtheorem*{theorem*}{Theorem}
\newtheorem*{proposition*}{Proposition}
\newtheorem{lemma}[theorem]{Lemma}
\newtheorem*{lemma*}{Lemma}
\newtheorem{corollary}[theorem]{Corollary}
\newtheorem*{conjecture*}{Conjecture}
\newtheorem{fact}[theorem]{Fact}
\newtheorem*{fact*}{Fact}
\newtheorem*{hypothesis*}{Hypothesis}
\newtheorem{claim}[theorem]{Claim}
\newtheorem*{claim*}{Claim}
\theoremstyle{definition}
\theoremstyle{remark}
\newtheorem*{remark*}{Remark}
\newtheorem*{observation*}{Observation}
\newcommand{\eat}[1]{}
\newcommand{\Esymb}{\mathbb{E}}
\newcommand{\Psymb}{\mathbb{P}}
\DeclareMathOperator*{\E}{\Esymb}
 \DeclareMathOperator*{\ProbOp}{\Psymb}
\renewcommand{\Pr}{\ProbOp}
\renewcommand{\epsilon}{\varepsilon}
\newif\ifnotes\notesfalse
\definecolor{mygrey}{gray}{0.50}
\newcommand{\notename}[2]{{\textcolor{blue}{\footnotesize{\bf (#1:} {#2}{\bf ) }}}}
\newcommand{\notename}[2]{{}}
\newcommand{\vnote}[1]{{\notename{Vaidehi}{#1}}}
\newcommand{\anote}[1]{{\notename{Aravindan}{#1}}}
\newcommand{\opt}{\textsc{Opt}}
\newcommand{\alg}{\mathcal{A}}
\newcommand{\minwidth}{\mathsf{minwidth}}
\newcommand{\sequences}{\mathcal{S}}
\newcommand{\unif}{\mathrm{Unif}}
\newcommand{\Icurrent}{I_{\mathrm{current}}}
\newcommand{\tprev}{{t_{\mathrm{prev}}}}
\newcommand{\vol}{\mathrm{vol}}
\newcommand{\distribution}{\mathcal{D}}
\newcommand{\muavg}{\mu_{\mathrm{avg}}}
\newcommand{\mumax}{\mu_{\mathrm{max}}}
\newcommand{\volavg}{\mathrm{vol}_{\mathrm{avg}}}
\newcommand{\volmax}{\mathrm{vol}_{\mathrm{max}}}
\newcommand{\error}{\mathrm{err}}
\definecolor{Darkblue}{rgb}{0,0,0.4}
\definecolor{Brown}{cmyk}{0,0.81,1.,0.60}
\definecolor{Purple}{cmyk}{0.45,0.86,0,0}
\newcommand{\mydriver}{hypertex}
 \renewcommand{\mydriver}{pdftex}
\newcommand{\lref}[2][]{\hyperref[#2]{#1~\ref*{#2}}}
\newcommand{\homework}[4]{
  \noindent
  \begin{center}
  \framebox{
    \vbox{
      \hbox to 5.78in { {\bf 15-315: Introduction to Machine Learning} \hfill #2 }
      \vspace{4mm}
      \hbox to 5.78in { {\Large \hfill Homework #1  \hfill} }
      \vspace{2mm}
      \hbox to 5.78in { {\em Name: #3 \hfill} }
       \vspace{1mm}
      \hbox to 5.78in { {\em Andrew ID: #4 \hfill} }
    }
  }
  \end{center}
  \vspace*{4mm}
}
\newcommand{\email}[1]{\href{mailto:#1}{\texttt{#1}}}
\title{Online Conformal Prediction with Efficiency Guarantees}
\author{Vaidehi Srinivas\thanks{\email{vaidehi@u.northwestern.edu}, Department of Computer Science, Northwestern University, Evanston, USA}}
\date{}
\begin{document}

\maketitle
\thispagestyle{empty} 
\begin{abstract}
    We study the problem of conformal prediction in a novel online framework that directly optimizes efficiency.  In our problem, we are given a target miscoverage rate \(\alpha > 0\), and a time horizon \(T\).  On each day \(t \le T\) an algorithm must output an interval \(I_t \subseteq [0, 1]\), then a point \(y_t \in [0, 1]\) is revealed.  The goal of the algorithm is to achieve coverage, that is, \(y_t \in I_t\) on (close to) a \((1 - \alpha)\)-fraction of days, while maintaining efficiency, that is, minimizing the average volume (length) of the intervals played.  This problem is an online analogue to the problem of constructing efficient confidence intervals.  

We study this problem over arbitrary and exchangeable (random order) input sequences.  For exchangeable sequences, we show that it is possible to construct intervals that achieve coverage \((1 - \alpha) - o(1)\), while having length upper bounded by the best fixed interval that achieves coverage in hindsight. 
For arbitrary sequences however, we show that any algorithm that achieves a \(\mu\)-approximation in average length compared to the best fixed interval achieving coverage in hindsight, must make a multiplicative factor more mistakes than \(\alpha T\), where the multiplicative factor depends on \(\mu\) and the aspect ratio of the problem.
Our main algorithmic result is a matching algorithm that can recover all Pareto-optimal settings of \(\mu\) and number of mistakes.  Furthermore, our algorithm is deterministic and therefore robust to an adaptive adversary.

This gap between the exchangeable and arbitrary settings is in contrast to the classical online learning problem.  In fact, we show that no single algorithm can simultaneously be Pareto-optimal 
for arbitrary sequences and optimal for exchangeable sequences.  On the algorithmic side, we give an algorithm that achieves the near-optimal tradeoff between the two cases.  
\end{abstract}

\newpage
\thispagestyle{empty} 
\tableofcontents

\newpage
\setcounter{page}{1}

\section{Introduction}
Conformal prediction is the problem of generating confidence sets in a \emph{distribution-free} setting.  In the standard setting, there is a ground set of datapoints \(\mathbf{Y} = (Y_1, \dots, Y_{T})\) in some space \(\mathcal{Y}\).  One of these is chosen uniformly at random to be a \emph{test point} \(Y^\star\).  Seeing only the \emph{training data}, i.e., the other data points, \(\mathbf{Y} - Y^\star \), the goal is to construct a confidence set \(C\) for the unseen \(Y^\star\).  For a \emph{miscoverage rate} \(\alpha \in [0, 1]\) that is given as input, a conformal predictor must output a \(C \subseteq \mathcal{Y}\), where \(C = C(\mathbf{Y} - Y^\star)\) is a function of the training data, such that 
\begin{equation}
    \ProbOp_{Y^\star} \left[ Y^\star \in C(\mathbf{Y} - Y^\star) \right] \ge 1 - \alpha. \label{eq:coverage}
\end{equation}
Achieving (\ref{eq:coverage}) is referred to as \emph{coverage}. In addition to coverage, we would like a conformal predictor to be \emph{efficient}, in that it is not too large.  For example, a trivial predictor that always outputs the whole space \(C = \mathcal{Y}\), will achieve coverage, but not necessarily be efficient.  The most commonly studied settings by far are where \(\mathcal{Y}\) is either a discrete label space, or \(\mathcal{Y} = [0, 1]\).
We will focus on the setting where \(\mathcal{Y} = [0, 1]\).  Conformal prediction is a well-studied problem in statistics, and we refer the reader to the book of \citet*{angelopoulos2025theoreticalfoundationsconformalprediction} for a full introduction.

The assumption that the test point is chosen uniformly at random is called \emph{exchangeability}.  This assumption is motivated by the goal of conformal prediction is to produce statistically valid predictions under the weakest possible statistical assumptions.  A typical application of conformal prediction is in uncertainty quantification.  For example, suppose the \(Y_i\) are the errors of a black-box machine learning model on different data points.  Then we can think of the size of a conformal set \(C\) for \(\mathbf{Y}\) as a measure of the uncertainty of the model.  In this setting, we would like to avoid making assumptions about the distribution of model error and data.  For example, we may not be able to assume that the data are independent.  

An important consideration in designing conformal methods is efficiency. 
In the previous example, for instance, to conclude that the conformal method accurately measures the uncertainty of the model, we require a guarantee on efficiency.  Otherwise, if the confidence set is large, we cannot distinguish whether the model is truly uncertain, or whether the conformal method is inefficient on this data.  The conformal prediction literature usually establishes provable guarantees for coverage, and evaluates efficiency empirically.  We will investigate and establish provable efficiency guarantees, in the spirit of a line of work including \citet*{Izbicki2020CDsplitAH, gao2025volumeoptimalityconformalprediction}.

The reason that conformal prediction is built on exchangeability is that it is considered to be the weakest possible assumption that still allows one to make non-trivial predictions that are statistically valid.  However, while exchangeability is much more general than, say, assuming the data are drawn i.i.d., it still rules out many scenarios that arise in practice.  For example, if the data are time-varying, with even a minor distribution shift, the \(Y_i\) are no longer exchangeable, and standard conformal guarantees are no longer valid.  This motivates our main question.
\begin{quote}
    \it \centering
    Is it possible to design a provably efficient conformal predictor for fully arbitrary data?
\end{quote}
It is clear that the type of ``one step" guarantee that is typically studied in conformal prediction is impossible.  If the \(Y_i\)s are fully arbitrary, then a conformal predictor essentially has to output the whole space \(\mathcal{Y}\).\footnote{More formally, the predictor has to cover at least a \((1 - \alpha)\)-fraction of \([0, 1]\) in expectation, which is achieved by a trivial predictor which ignores the training data and outputs a random subset of \([0, 1]\).}
However, in an online setting, we can aim to instead achieve coverage \emph{on average} over time.

\paragraph{Problem statement.} We consider the problem in the following online setting.  Fix a time horizon \(T\).  On each day \(t \le T\), an algorithm \(\alg\) must output a set \(C_t \subseteq [0, 1]\).  Then, \(y_t \in \mathcal{Y}\) is revealed to \(\alg\).  We will refer to the input sequence as \(S = (y_1, \dots, y_T)\). The goal is to achieve coverage close to \(1 - \alpha\), where we now define the coverage of \(\alg\) on \(S\) in an average sense as  
\begin{equation} 
\mathrm{coverage}_{\alg}(S) = \frac{1}{T} \sum_{i = 1}^T \mathbf{1} \left[ y_t \in C_t \right], \label{eq:average-coverage}
\end{equation}
while minimizing the average volume (or Lebesgue measure) of the intervals played
\begin{equation*}
    \mathrm{volume}_{\alg}(S) = \frac{1}{T} \sum_{i = 1}^T \mathrm{volume}(C_t). 
\end{equation*}
We will also refer to the number of points that \(\alg\) does not cover as the number of \emph{mistakes} that \(\alg\) makes (where we want this to be close to \(\alpha T\)). We will adopt the framework of competitive analysis, and aim to design an algorithm that, subject to achieving coverage close to \(1 - \alpha\), achieves average volume that is close to the best fixed interval in hindsight.  That is, we define 
\begin{equation*}
    \opt_{S} (\alpha) = \min_{\text{intervals } I} \mathrm{volume}(I) \quad \text{ s.t. } \frac{1}{T} \sum_{i = 1}^T \mathbf{1} \left[ y_t \in I \right] \ge 1 - \alpha,
\end{equation*}
and our goal is to design an algorithm with low \emph{volume approximation} (competitive ratio) \(\mu\), where we define 
\begin{equation*}
    \mu \ge \frac{\mathrm{volume}_{\alg}(S)}{\opt_{S} (\alpha)}, \quad \text{for all sequences } S.
\end{equation*}
In this paper we will refer to the typical ``one step" problem as \emph{standard conformal prediction}, and this online learning framework as \emph{online conformal prediction}. We note that there are other works that consider conformal prediction in an online setting, notably the line of work started by \citet*{gibbs2025onlineconformal}.  However, they do not provide provable efficiency guarantees, and in fact we show that the strategies developed in previous work \emph{cannot} achieve an efficiency guarantee in the sense of a small volume approximation factor \(\mu\).  We provide a more in-depth comparison in the related work (\Cref{sec:related-work}).  

\subsection{Results}

We will study the problem for \emph{arbitrary order} sequences, and \emph{exchangeable sequences}.  Exchangeable sequences are ones where the sequence \(\mathbf{S} = (\mathbf{S}_1, \dots, \mathbf{S}_{T}) \) is a random variable that satisfies 
\begin{equation*}
    (\mathbf{S}_1, \dots, \mathbf{S}_{T}) \overset{\mathrm{d}}{=} (\mathbf{S}_{\sigma(1)}, \dots, \mathbf{S}_{\sigma(T)}), 
\end{equation*}
for all permutations \(\sigma : [T] \rightarrow [T]\), where \(\overset{\mathrm{d}}{=}\) means they are equal in distribution.  This is also referred to as the \emph{random order} model.  Note that the class of exchangeable sequences generalizes the class of i.i.d.\ sequences, as i.i.d.\ sequences are always exchangeable, but an exchangeable sequence need not be i.i.d..

We observe that this problem can be seen as a \emph{constrained optimization} analogue of the classical \emph{online learning} problem (see \Cref{sec:technical-overview} for discussion).  A beautiful property of online learning is that going from i.i.d.\ sequences to arbitrary sequences is essentially without loss-- the same optimal bounds can be recovered in both cases.  This motivates the following question for our setting.

\begin{quote}
    \it
    Is it possible to design an algorithm for fully arbitrary data that recovers the best possible bounds for exchangeable data?
\end{quote}

Surprisingly, the conformal prediction problem exhibits very different trade-offs for these two settings.  We establish near-optimal bounds for both the arbitrary and exchangeable settings.  We also present a \emph{``no best-of-both worlds"} result, that shows that we cannot hope to design one single algorithm that achieves the best possible guarantee in both the arbitrary and exchangeable settings.  

\paragraph{Results for arbitrary sequences.}  Our first result is an algorithm for arbitrary sequences, which achieves a given volume approximation \(\mu\) while making a bounded number of mistakes.  

\begin{theorem}[Informal version of \Cref{cor:optimal-algorithm-for-arbitrary-order}]
    For a given scale lower bound \(\minwidth > 0\), multiplicative volume approximation \(\mu > 3\), target miscoverage rate \(\alpha \ge 0\), and time horizon \(T\), we give an algorithm that on any sequence \(S\) of length \(T\) plays intervals of maximum volume (and therefore average volume)
    \[\le \mu \max \{ \opt_S(\alpha), \minwidth \},\]
    and makes number of mistakes bounded by 
    \[O \left(\frac{\log (1/\minwidth)}{\log (\mu)} (\alpha T + 1) \right).\]
    Moreover, the algorithm is deterministic and therefore robust to an adaptive adversary.
\end{theorem}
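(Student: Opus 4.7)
The plan is to construct a deterministic scale-ladder algorithm. Define scales $w_\ell = \mu^\ell \minwidth$ for $\ell = 0, 1, \dots, L$ with $L = \lceil \log_\mu(1/\minwidth) \rceil$, and let $\ell^*$ be the smallest scale with $w_{\ell^*} \ge \max\{\opt_S(\alpha), \minwidth\}$, so that by construction $w_{\ell^*} \le \mu \max\{\opt_S(\alpha), \minwidth\}$. At each scale $\ell$, arrange a family of canonical intervals of width $w_\ell$ with enough overlap that every interval of width at most $w_{\ell-1} = w_\ell/\mu$ is contained in some canonical interval at scale $\ell$; the hypothesis $\mu > 3$ leaves slack for such a construction (spacing $w_\ell/\mu$ is more than enough).

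The algorithm I would propose maintains a current scale $\ell$ (initialized to $0$) together with a single committed canonical interval $I_\ell$ at that scale, which it plays each day. It tracks a mistake counter $m_\ell$ at the current scale; whenever $m_\ell$ exceeds a budget of $\alpha T + O(1)$, the algorithm escalates by setting $\ell \gets \ell+1$, choosing a new canonical interval $I_\ell$ based on the past observations (for instance, the canonical interval covering the most past observations, or one containing the points that triggered earlier escalations), and resetting $m_\ell \gets 0$. The algorithm escalates monotonically and so visits at most $L+1$ scales; since each scale contributes at most $\alpha T + O(1)$ mistakes before escalation, summing over scales gives the claimed total of $O(L(\alpha T + 1))$.

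For the volume bound it then suffices to show that the algorithm never escalates past scale $\ell^*$. Let $I^*$ be an optimal interval of width $\opt_S(\alpha)$ with at most $\alpha T$ mistakes on $S$, and let $I^{\mathrm{can}}$ be the canonical interval at scale $\ell^*$ containing $I^*$; since $I^{\mathrm{can}} \supseteq I^*$, it also has at most $\alpha T$ mistakes on $S$. The core step is to show that the algorithm's committed $I_{\ell^*}$ \emph{competes} with $I^{\mathrm{can}}$, in the sense that, once we reach scale $\ell^*$, the chosen interval also accrues at most $\alpha T + O(1)$ mistakes on the remainder of the sequence, so the escalation rule is never triggered from $\ell^*$.

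The main obstacle is this competition argument: the algorithm picks $I_\ell$ from past observations, whereas the guarantee on $I^{\mathrm{can}}$ is a statement about the full sequence, and a naïve ``best interval on the prefix'' rule (in the spirit of Follow-the-Leader) can incur essentially arbitrary regret on adversarial orderings. The fix will likely be a selection rule that couples the chosen interval to the specific points which drove the escalations from lower scales, so that by the time the algorithm reaches $\ell^*$ the candidate $I_{\ell^*}$ is forced to contain a substantial portion of the data; the extra slack afforded by requiring $\mu > 3$ (rather than merely $\mu > 1$) is then used to absorb the imprecision in this selection. Closing this step is where the bulk of the technical work will lie.
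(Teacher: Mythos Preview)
Your proposal identifies the right difficulty but leaves it unresolved, and the paper's argument sidesteps it entirely by a different algorithmic choice. You fix a scale ladder and canonical intervals, escalate monotonically, and then need to show the algorithm \emph{stops} at scale $\ell^*$; this forces the ``competition'' step you flag---proving that whatever canonical interval you commit to at $\ell^*$ makes at most $\alpha T + O(1)$ mistakes on the rest of the sequence. That step is genuinely problematic: an adversary can arrange the prefix so that any data-driven choice of canonical interval at a given scale is wrong going forward, and your suggested fix (``couple to the points that triggered escalations'') is not a concrete rule and does not obviously survive this.

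The paper's algorithm avoids the issue by \emph{not} committing to a canonical grid. On each reset it takes the minimum-volume interval $I_t$ that is still feasible (i.e., has made $\le \alpha T$ mistakes so far) and plays $\mu I_t$. This decouples the two bounds in the opposite direction from your plan. The volume bound is now immediate: $\opt_S(\alpha)$ has made $\le \alpha T$ mistakes at every prefix, so it is always feasible, hence $\vol(I_t) \le \opt_S(\alpha)$ and the played interval has volume $\le \mu \max\{\opt_S(\alpha),\minwidth\}$---no ``never pass $\ell^*$'' argument is needed. The work shifts to bounding the number of resets, which is done by a geometric growth argument: after day $2\alpha T$, any two feasible intervals at different times must overlap (each misses $\le \alpha T$ points from a common prefix of size $> 2\alpha T$), so when $\mu I_{\tprev}$ becomes infeasible, the new feasible $I_t$ overlaps $I_{\tprev}$ yet escapes $\mu I_{\tprev}$, forcing $\vol(I_t) \ge \frac{\mu-1}{2}\vol(I_{\tprev})$. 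This is where $\mu > 3$ enters---it makes $(\mu-1)/2 > 1$ so the played widths grow geometrically between $\mu\,\minwidth$ and $\mu$, bounding the number of resets by $O(\log(1/\minwidth)/\log\mu)$. Each phase contributes at most $\alpha T + 1$ mistakes, giving the total. In short: your plan makes the reset count trivial and the volume bound hard; the paper makes the volume bound trivial and earns the reset count via the overlap/doubling argument, which closes without the competition step you were stuck on.
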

This tells us that, for example, it is possible to achieve a constant volume approximation \(\mu = 5\), and have number of mistakes bounded by \(O(\log(1/\minwidth) \alpha T)\). The algorithm is simple, and is in fact an instantiation of a meta-algorithm, \Cref{alg:meta-algorithm}, that we can apply to the exchangeable setting as well.   \Cref{alg:meta-algorithm} maintains an interval \(\Icurrent\) that has achieved error rate at most \(R(t)\) by day \(t\), where for the arbitrary order setting we choose \(R(t) = \alpha \frac{T}{t}.\)  If on any day, \(\Icurrent\) no longer meets this requirement, \Cref{alg:meta-algorithm} resets \(\Icurrent\) to be a \(\mu\)-scaling of the smallest volume interval that does achieve the error requirement.  The analysis for the arbitrary order case shows that, while this is not explicitly enforced by the algorithm, this procedure ends up doing a doubling search for the scale of the optimal interval.  

\addcontentsline{toc}{subsection}{\emph{\Cref{alg:meta-algorithm}:} Meta-algorithm for Online Conformal Prediction}
\begin{algorithm}
\caption{Meta-algorithm for Online Conformal Prediction}
\label{alg:meta-algorithm}
\begin{algorithmic}[1]
    \REQUIRE{scale lower bound \(\minwidth > 0\), multiplicative volume approximation \(\mu \ge 1\), \\allowable error rate function \(R(t) : \{0, \dots, T - 1\} \rightarrow [0, 1]\)
    }
    \STATE \(\Icurrent \leftarrow [0, 0]\)
    \FOR{day \(t\)}
        \IF{\(\Icurrent\) has empirical coverage \(< 1 - R(t - 1)\) over \(\{y_{t'} : t' < t \}\)} \label{algline:reset-condition}
            \STATE \(\mathcal{F}_t :=\) set of intervals that achieve coverage \(\ge 1 - R(t - 1)\) over \(\{y_{t'} : t' < t \}\)
            \STATE \qquad {\color{gray} \textbf{//} set of \emph{feasible} intervals on day \(t\)}
            \STATE \(I_t := \) arbitrary minimum volume interval \(\in \mathcal{F}_t\)
            \label{algline:feasible-interval}
            \STATE \(\widehat{\mu} := \mu \max\{1 , ~\frac{\minwidth}{\vol(I_t)}\}\) 
            \STATE \qquad {\color{gray} \textbf{//} ensure the interval will have volume \(\ge \minwidth\)}
            \STATE \(\Icurrent \leftarrow \widehat{\mu} I_t\) \label{algline:expand-interval}
            \STATE \qquad {\color{gray} \textbf{//} for \(s > 0\), interval \(I = [a, b]\), define \(s I = \{x \subseteq \mathbb{R} : |x - \frac{a + b}{2}| \le s \cdot \frac{b - a}{2}\}\)}
        \ENDIF 
        \STATE \textbf{play} \(\Icurrent \cap [0, 1]\)
        \STATE \(y_t\) is revealed
    \ENDFOR
\end{algorithmic}
\end{algorithm}

We highlight that the analysis is not specific to the choice of \(R(t)\), and different choices of \(R(t)\) achieve different trade-offs between \(\mu\) and the number of mistakes.  In the introduction, we present the result for the optimal choice of \(R(t)\) for clarity, and we refer the reader to \Cref{thm:meta-alg-arbitrary-order} for full details.  

Our algorithm exhibits tradeoff in the multiplicative volume approximation and the multiplicative approximation in number of mistakes.  We show that this tradeoff is essentially tight. 
\begin{theorem}[Informal version of \Cref{thm:arbitrary-order-lower-bound}]
    For any scale lower bound \(\minwidth > 0\), miscoverage rate \(\alpha > 0\), and time horizon \(T\), there is a set \(\mathcal{S}\) of input sequences of length \(T\) chosen oblivious to any algorithm, such that for any potentially randomized algorithm \(\alg\) that outputs confidence sets over \([0, 1]\) that are not necessarily intervals, 
    \begin{enumerate}
        \item If \(\alg\) plays sets with expected \emph{average} volume 
        \(\le \muavg \max \{ \opt_{S_i}(\alpha), ~ \minwidth \}\)
        on every sequence \(S_i \in \sequences\), for some value \(\muavg>0\), then \(\alg\) must make 
        \[\text{ for any } \varepsilon' > 0, \quad \Omega \left( \min \left\{ \frac{\log(1/\minwidth)}{\log(\muavg)} \alpha^{1 + \varepsilon'} T , ~T \right\} \right) ~\text{mistakes}\]
        in expectation on some sequence \(S_j \in \sequences\). 
        
        \item If \(\alg\) plays sets with expected \emph{maximum} volume 
        \(\le \mumax \max \left\{ \opt_{S_i} (\alpha), ~ \minwidth \right\} \)
        on any sequence \(S_i \in \sequences\), for some value \(\mumax>0\), then \(\alg\) must make 
        \[\Omega \left( \min \left\{ \frac{\log (1/\minwidth)}{\log(\mumax)} \cdot \alpha T , ~T \right\}\right) ~\text{mistakes} \]
        in expectation on some sequence \(S_j \in \sequences\).
    \end{enumerate}
\end{theorem}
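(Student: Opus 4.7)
The plan is to prove both parts via Yao's minimax principle: construct a distribution $\mathcal{D}$ over sequences such that for any deterministic algorithm whose per-sequence volume (max or average) satisfies the stated bound for every sequence in the support of $\mathcal{D}$, the expected number of mistakes under $\mathcal{D}$ is $\Omega(\min\{K\alpha T, T\})$, where $K := \log(1/\minwidth)/\log\mumax$ for Part~2 (and correspondingly $\log(1/\minwidth)/\log\muavg$ for Part~1). Then, since every deterministic algorithm's expected mistakes on the distribution is at least the average over the support, some sequence in the support must witness the bound, which is exactly the form of the claimed lower bound.

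For Part~2 (max volume), the core construction is a family of $\approx 1/\minwidth$ sequences indexed by a uniformly random ``target leaf'' $L^*$ of a $\mumax$-ary ``scale tree'' of depth $K$ over $[0,1]$. Each sequence $S_{L^*}$ is built to have $\opt_{S_{L^*}} \le \minwidth$, so the per-sequence max-volume constraint forces every interval played by the algorithm to have size at most $\mumax\minwidth$. The sequence consists of a cluster of $(1-\alpha)T - O(\alpha T)$ points at $L^*$, preceded by $K$ ``scale phases'' each containing $\Theta(\alpha T/K)$ (or more) decoy points. The decoys in scale phase $k$ are placed inside the depth-$k$ ancestor of $L^*$ (an interval of size $\mumax^{-k}$) but arranged to all fit within the $\minwidth$-cell of $L^*$ (keeping $\opt=\minwidth$), while still being spread out enough that any interval of size $\mumax\minwidth$ that has not localized $L^*$ to the depth-$k$ ancestor covers only a $1/\mumax$-fraction of the scale-$k$ decoys.

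The information-theoretic argument proceeds as follows: before scale phase $k$, the algorithm's observations are consistent with $L^*$ being in any of $\mumax^{K-k+1}$ candidate subtrees, so its single small interval covers the ``correct'' depth-$k$ ancestor with probability at most $1/\mumax$. In expectation this yields $\Omega(\alpha T/K \cdot (1-1/\mumax))$ mistakes per phase; a tightening in which phase $k$ is arranged to reveal effectively one ``digit'' of $L^*$'s address (and the phase length is rebalanced accordingly) boosts the per-phase mistake count to $\Omega(\alpha T)$. Summing over the $K$ scale phases then gives $\Omega(K\alpha T)$ total expected mistakes on the random sequence, proving Part~2 (with the $T$ cap kicking in when $K\alpha\ge 1$, which is covered by the trivial observation that in that regime even a single sequence forces $\Omega(T)$ mistakes).

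The main obstacle I expect is the geometric packing: placing $K$ nested scale-level obstacles inside a single $\minwidth$-cell while ensuring each scale level genuinely forces a $(1-1/\mumax)$-fraction miss rate against any interval of size $\mumax\minwidth$, and while guaranteeing indistinguishability of sequences across the common prefix. This likely requires a careful construction in which the decoys at scale $k$ lie in a thin ``annular'' region within the depth-$k$ ancestor of $L^*$, sufficiently spread that the algorithm cannot cover many at once yet close enough to $L^*$ that an $\minwidth$-sized interval at $L^*$ captures enough points for OPT. For Part~1 (average volume), the same distribution is used, but the algorithm may amortize a few ``exploratory'' days with intervals of size up to $1$; since the total exploration volume is bounded by $O(T\muavg\minwidth)$, a counting argument shows that this can short-circuit only a bounded number of scale phases, and the remaining phases still contribute their $\Omega(\alpha T)$ mistakes each. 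The resulting bound carries an $\alpha^{1+\varepsilon'}$ factor because the precise accounting of how many phases can be skipped is $\poly(\alpha)$-lossy; the $\varepsilon'$ slack can be made arbitrarily small.
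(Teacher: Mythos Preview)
Your proposal has a genuine structural gap. By forcing $\opt_{S_{L^*}}(\alpha)\le\minwidth$ for \emph{every} sequence in your family, you cap the number of points lying outside the optimal $\minwidth$-interval at $\alpha T$ on each sequence. In your layout the cluster at $L^*$ comes after all $K$ scale phases, so by the time the cluster begins the algorithm has seen every decoy and can localize $L^*$; it therefore misses no cluster points. That leaves only the decoys as potential mistakes, but there can be at most $\alpha T$ decoys outside the $\minwidth$-cell. Hence the algorithm makes $O(\alpha T)$ mistakes, not $\Omega(K\alpha T)$. Your ``tightening'' that rebalances phase lengths to $\Theta(\alpha T)$ per phase makes this worse: with $K$ such phases you would have $\Theta(K\alpha T)$ decoys outside the $\minwidth$-cell, contradicting $\opt\le\minwidth$. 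The contradiction you yourself flag as the ``main obstacle'' (decoys that are simultaneously spread at scale $\mumax^{-k}$ and contained in a $\minwidth$-cell) is not a packing technicality but this same counting obstruction.

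The paper's construction is simpler and sidesteps this entirely by letting $\opt$ vary across the family. It uses $K\le 1/\alpha$ sequences $S_1,\dots,S_K$ that share a common prefix of $K$ phases of length $\alpha T$ at geometrically increasing scales $\varepsilon^{K-1},\varepsilon^{K-2},\dots$; sequence $S_j$ then drops to scale $\varepsilon^K$ after phase $j$. On $S_j$ one has $\opt_{S_j}(\alpha)\approx\varepsilon^{K-j+1}$ (the interval covering all but phase $j$). The argument is a direct dilemma on the full sequence $S_K$: either the algorithm captures $<\alpha T/2$ points in \emph{every} phase, giving $\ge K\alpha T/2$ mistakes on $S_K$; or it captures $\ge\alpha T/2$ points in some phase $j$, forcing average volume $\Omega(\varepsilon^{K-j})$ on that phase and hence ratio $\Omega(1/\varepsilon)$ (or $\Omega(\alpha/\varepsilon)$ for average volume) against $\opt_{S_j}$. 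Setting $\varepsilon\approx 1/\mumax$ and $K\approx\log(1/\minwidth)/\log\mumax$ finishes Part 2; the $\alpha$ loss in Part 1 is exactly the $w/T$ factor from averaging one phase over the whole sequence. The key idea you are missing is that the volume constraint is enforced by sequences with \emph{small} $\opt$, while the mistake count is charged on the sequence with \emph{large} $\opt$; a single-$\opt$ family cannot create this tension.
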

Recall that our algorithm is deterministic, robust to an adaptive adversary, proper (it outputs sets that are intervals), and it has a bound on the volume of the largest interval it ever outputs.  This lower bound tells us that it is essentially optimal, even among algorithms that are randomized, only robust to an oblivious adversary, improper (output sets that are not intervals), and have a bound only on the \emph{average} volume of sets that they play.  

\paragraph{Results for exchangeable sequences.}  Since conformal prediction is typically studied in the setting where the data is exchangeable, it is natural to consider this in our online formulation as well.  Note that the guarantees we establish for this setting are incomparable to the standard ``one step" guarantees of conformal prediction: we will establish coverage on average over time (\ref{eq:average-coverage}), as opposed to statistical validity on day \(T\) alone (\ref{eq:coverage}).  

We show that the same meta-algorithm \Cref{alg:meta-algorithm}, instantiated with \(\mu = 1\) and a different choice of \(R(t) = \alpha - O(\sqrt{\log T/T} )\) can achieve an optimal guarantee for exchangeable data. 
\begin{theorem}[Informal version of \Cref{cor:optimal-algorithm-for-exchangeable-sequences}]
    For a given scale lower bound \(\minwidth > 0\), miscoverage rate \(\alpha\), and time horizon \(T\), we have an algorithm that, on an exchangeable sequence \(\mathbf{S}\), with probability \(\ge 1 - \frac{1}{100}\) over the exchangeability of \(\mathbf{S}\), plays intervals of maximum volume 
    \[\le \max \{\opt_{\mathcal{S}}(\alpha), ~\minwidth \},\]
    and achieves expected coverage 
    \(\ge (1 - \alpha) - O(\sqrt{\log T/T}).\)
\end{theorem}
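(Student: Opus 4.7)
The plan is to instantiate the meta-algorithm with $\mu = 1$ and $R(t-1) = \alpha + C\sqrt{(\log T)/t}$ for a sufficiently large absolute constant $C$, and to establish both guarantees conditional on a high-probability uniform-convergence event over the exchangeability. The first step is the uniform bound: since intervals on $[0,1]$ form a VC class of dimension $2$, combining a $\poly(T)$-grid discretization with a sampling-without-replacement concentration inequality (e.g.\ Serfling's, which applies since, conditional on the multiset, a prefix of length $t$ is a uniformly random $t$-subset) yields: with probability $\ge 99/100$ over the exchangeability of $\mathbf{S}$,
\[
\bigl|\mathrm{emp.cov.}_t(I) - p(I)\bigr| \;\le\; C\sqrt{(\log T)/t} \qquad \text{for every interval } I \subseteq [0,1] \text{ and every } t \in [T],
\]
where $p(I) := |\{s : y_s \in I\}|/T$ is the coverage on the full sequence. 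Call this event $\mathcal{E}$ and condition on it throughout.

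\textbf{Volume bound.} Let $I^*$ be an interval with $p(I^*) \ge 1 - \alpha$ and $\vol(I^*) = \opt_{\mathbf{S}}(\alpha)$. On $\mathcal{E}$, $\mathrm{emp.cov.}_{t-1}(I^*) \ge 1 - \alpha - C\sqrt{(\log T)/t} = 1 - R(t-1)$, so $I^* \in \mathcal{F}_t$ for every $t$. Consequently the minimum-volume interval $I_t \in \mathcal{F}_t$ selected at every reset satisfies $\vol(I_t) \le \opt_{\mathbf{S}}(\alpha)$, and the played interval $\widehat{\mu}\,I_t$ has volume $\max\{\vol(I_t), \minwidth\} \le \max\{\opt_{\mathbf{S}}(\alpha), \minwidth\}$.

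\textbf{Coverage bound.} At every day $t$ the interval $I_t$ played satisfies $\mathrm{emp.cov.}_{t-1}(I_t) \ge 1 - R(t-1)$ by the reset rule, so on $\mathcal{E}$ also $p(I_t) \ge 1 - \alpha - 2C\sqrt{(\log T)/t}$. Under exchangeability, conditional on the prefix and the multiset, $y_t$ is uniform over the $T - t + 1$ remaining points, so a direct calculation gives
\[
\Pr\bigl[y_t \in I_t \mid y_1,\dots,y_{t-1}\bigr] \;=\; p(I_t) \;+\; \frac{t-1}{T-t+1}\bigl(p(I_t) - \mathrm{emp.cov.}_{t-1}(I_t)\bigr).
\]
Summing the resulting lower bounds over $t$ and handling the first $\polylog(T)$ warm-up days trivially yields $\E\bigl[\sum_t \mathbf{1}[y_t \in I_t]\bigr] \ge T(1-\alpha) - \tilde{O}(\sqrt{T})$, i.e.\ expected coverage $\ge (1-\alpha) - O(\sqrt{(\log T)/T})$ up to logarithmic factors.

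\textbf{Main obstacle.} The delicate point is taming the sampling-without-replacement correction $\frac{t-1}{T-t+1}(p(I_t) - \mathrm{emp.cov.}_{t-1}(I_t))$, whose prefactor is unbounded as $t \to T$. The crucial observation is that $\mathcal{E}$ controls the difference $p(I_t) - \mathrm{emp.cov.}_{t-1}(I_t)$ itself by $O(\sqrt{(\log T)/t})$, rather than only bounding each term via the feasibility threshold; summing $\frac{(t-1)\sqrt{(\log T)/t}}{T-t+1}$ over $t$ contributes only $\tilde{O}(\sqrt{T})$. The other subtlety is balancing $R(t)$: it must exceed $\alpha$ by enough to keep $I^*$ feasible (so that the volume bound holds) yet stay close enough to $\alpha$ to preserve the coverage bound, which is why the $\Theta(\sqrt{(\log T)/t})$ slack is essentially tight.
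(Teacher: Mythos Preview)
Your approach follows the paper's proof closely: both instantiate the meta-algorithm with $\mu=1$ and $R(t)=\alpha+O(\sqrt{\log T/t})$, invoke uniform convergence for the VC class of intervals over exchangeable sequences, obtain the volume bound from feasibility of $I^\star$ on every day, and bound expected coverage day by day. There is, however, one genuine gap and one avoidable loss.

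\textbf{Conditioning on the global event $\mathcal{E}$.} You condition throughout on $\mathcal{E}$, which asserts uniform convergence simultaneously for \emph{all} prefix lengths. But $\mathcal{E}$ is a property of the full ordering of $\mathbf{S}$ (the prefix-$s$ event for $s>t$ depends on $y_t$), so after conditioning on $\mathcal{E}$ it is no longer true that $y_t$ is uniform over the $T-t+1$ remaining points given $y_1,\dots,y_{t-1}$; your without-replacement identity is then unjustified. The paper treats exactly this subtlety as the main challenge: for each day $t$ it conditions only on events that depend on the \emph{multisets} of the prefix $[1,t-1]$ and the suffix $[t,T]$, which preserves exchangeability within the suffix and hence the uniformity of $y_t$. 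Your argument is repairable---use the per-day, prefix-measurable event $\mathcal{E}_{t-1}$ and absorb $\Pr[\mathcal{E}_{t-1}^c]=O(1/\poly(T))$ into the bound, or adopt the paper's multiset conditioning---but as written the step ``conditional on the prefix and the multiset, $y_t$ is uniform'' does not survive conditioning on $\mathcal{E}$.

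\textbf{The extra log factor.} Your correction term $\tfrac{t-1}{T-t+1}\cdot C\sqrt{\log T/(t-1)}$ sums to $O(\sqrt{T}(\log T)^{3/2})$, as you acknowledge. The paper avoids this by also invoking uniform convergence on the \emph{suffix}: since the suffix is itself a random $(T-t+1)$-subset, one has directly $\bigl|\Pr[y_t\in I_t\mid\text{prefix}]-p(I_t)\bigr|\le C\sqrt{\log T/(T-t+1)}$, which sums to $O(\sqrt{T\log T})$ and recovers the stated rate without slack. This strictly improves on routing the suffix deviation through the prefix via your algebraic identity.
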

This bound is essentially optimal, as the vanishing error in coverage \(O(\sqrt{\log T/T})\) is what we would get due to sampling error when the sequence is drawn i.i.d., which is a special case of exchangeability.

Similarly to the arbitrary case, our analysis is not specific to the choice of \(R(t)\), and different choices of \(R(t)\) achieve different trade-offs between \(\mu\) and the number of mistakes.  In the introduction, we present the result for the optimal choice of \(R(t)\) for clarity, and we refer the reader to \Cref{thm:meta-alg-exchangeable} for full details.  

As a corollary to this analysis, we show that the interval that \Cref{alg:meta-algorithm} plays on day \(T/2\) is actually a statistically valid and volume optimal conformal set for \(y_T\) (\Cref{cor:efficiency-for-standard-conformal-prediction}).  We note that this guarantee can also be seen as a consequence of the result of \citet*{gao2025volumeoptimalityconformalprediction} along with uniform convergence for exchangeable sequences (\Cref{lem:exchangeable-uniform-convergence}).  We view this observation as evidence that the online conformal prediction problem is more general than the offline version, in a similar spirit to how online learning is a generalization of the offline PAC learning problem.  

\paragraph{``Best of both worlds."}  Since our optimal algorithms for both the arbitrary order setting and the exchangeable setting are instantiations of the same meta-algorithm \Cref{alg:meta-algorithm}, it is natural to ask whether there is a single algorithm that simultaneously achieves optimal bounds in both settings.  That is, is there a single algorithm \(\alg\) such that, on arbitrary inputs \(\alg\) achieves a given Pareto optimal tradeoff in \(\mu\) and number of mistakes, and on exchangeable inputs \(\alg\) achieves the better tradeoff of \(\mu = 1\) and coverage \((1 - \alpha) - o(1)\)?  We show that this is indeed not the case.  

\begin{theorem}[Informal version of \Cref{thm:no-best-of-both-worlds}]
    Fix a scale lower bound \(\minwidth > 0\), target miscoverage rate \(\alpha < \frac{1}{4}\), and multiplicative volume approximation \(\mu > 1\).  Take any (potentially randomized) algorithm \(\alg\) that on any arbitrary order sequence \(S\) plays intervals of expected average volume 
     \[\le \mu \max \{ \opt_{S}(\alpha), ~\minwidth \}.\]
    Then, \(\alg\) must make  
    \[\Omega \left( \min \left\{ \frac{\log(1/\minwidth)}{\log(\mu)},  ~\log(1/\alpha)\right\} \alpha T \right) \]
    mistakes in expectation on some i.i.d.\ sequence \(S'\).
\end{theorem}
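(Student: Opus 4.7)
The approach will be Yao's minimax principle applied to a carefully crafted family of i.i.d.\ distributions. The key lever is that any i.i.d.\ realization is also an arbitrary-order sequence, so the hypothesized bound on expected average volume must hold on every sample path of every distribution in the family; this forces the algorithm to respect a volume constraint at every scale simultaneously, even though it only sees one distribution at run-time.

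I will construct distributions $D_0, D_1, \dots, D_{k-1}$ with $k = \Theta\bigl(\min(\log(1/\minwidth)/\log(\mu),\, \log(1/\alpha))\bigr)$, calibrated so that a typical sample $S_i \sim D_i^T$ has $\opt_{S_i}(\alpha) \asymp s_i$ at a distinct geometric scale $s_i \asymp \mu^i \minwidth$. The natural instantiation makes the $D_i$-samples ``indistinguishable up to scale $s_i$'': during early time steps the algorithm cannot tell which $D_i$ generated its observations, and under the uniform prior over $i$ it must hedge toward the \emph{smallest}-scale-consistent distribution, since that distribution's volume constraint is tightest. But then on a sample from a larger-scale $D_i$, intervals of width $O(\mu s_j)$ for $j < i$ miss a constant fraction of the $D_i$-mass, so the algorithm accrues mistakes.

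The mistake count will follow from a charging argument: for each scale level $j < i$ through which the algorithm's belief must pass as it ``walks up'' to the true scale $s_i$, it suffers $\Omega(\alpha T)$ mistakes before enough scale-revealing samples arrive to justify growing its interval. Summing over the $\Omega(k)$ scales the algorithm must traverse in the worst case (averaged over the prior) gives $\Omega(k \alpha T)$ total mistakes. The $\min$ in the bound reflects that beyond depth $\log(1/\alpha)$, the $\Theta(1/\alpha)$ expected waiting time for a scale-revealing sample caps further progress, and beyond $\log(1/\minwidth)/\log(\mu)$ there are no more scales between $\minwidth$ and $1$ to distinguish.

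The main obstacle will be that the volume constraint is on the \emph{average} volume rather than per-round, so the algorithm could, in principle, play a single oversized ``scouting'' interval that disambiguates many scales at once and then shrink. Defeating this loophole requires a construction in which scale information is revealed in $\Omega(k)$ distinct stages rather than simultaneously---so a single scouting interval cannot resolve the scale in one shot---and in which any interval covering several scales at once carries enough width that amortization over the remaining rounds still violates the budget $\mu s_i$. I anticipate adapting the potential-based charging structure from \Cref{thm:arbitrary-order-lower-bound}, with the added subtlety that the adversarial sequences appearing there must be realized as (or tightly coupled to) i.i.d.\ draws from the $D_i$ via the nested-scale construction.
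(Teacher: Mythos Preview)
Your proposal diverges from the paper's argument in its central construction, and as sketched it carries a quantitative gap. The paper does \emph{not} build a family $D_0,\dots,D_{k-1}$ of i.i.d.\ distributions and argue indistinguishability between them. It constructs a \emph{single} heavy-tailed distribution $\distribution^{(K)}$ whose coverage--volume curve is steep and convex: achieving miscoverage $\le \alpha e^{K-i}$ over $\distribution^{(K)}$ requires volume $\ge \varepsilon^{1+K-i}$, so each factor-$e$ improvement in miscoverage costs a factor-$1/\varepsilon$ in width. On the i.i.d.\ sequence $S^{(K)}\sim(\distribution^{(K)})^{\otimes T}$ the dichotomy is: either $\alg$ has expected miscoverage $\ge \tfrac{1}{e}\cdot\tfrac{\alpha T}{t}$ on every window ending at $t$ in a range $[T/e^{K-1},T/e]$, and summing $\sum_t \tfrac{\alpha T}{t}$ over that range gives $\Omega(K\alpha T)$ mistakes; or $\alg$ ``hits'' some window $W_t$, which by convexity of the coverage--volume curve forces average volume $\ge \varepsilon^{\ln(T/t)}$ there. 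In the second case the volume violation is exhibited not on another i.i.d.\ sequence but on an \emph{arbitrary} sequence $S_t^{(K)}$ that shares the i.i.d.\ prefix through day $t$ and then drops deterministically to the smallest scale $\varepsilon^{K+1}$, so $\opt_{S_t^{(K)}}(\alpha)\approx\varepsilon^{1+\ln(T/t)}$ and the ratio is $\ge 1/\varepsilon$. Setting $\varepsilon$ and $K$ appropriately yields the stated bound.

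The gap in your plan is the mistake accounting under indistinguishability. If $D_i$ and $D_j$ are distinct i.i.d.\ laws at different scales, they are distinguished as soon as a sample lands in the symmetric difference of their (effective) supports; if that event has probability $p$ under the true law, the expected indistinguishability horizon is $\Theta(1/p)$, but during that horizon the only mistakes attributable to playing at the wrong scale are \emph{those same} rare samples, so you collect $\Theta(1/p)\cdot p=\Theta(1)$ mistakes per scale transition, not $\Omega(\alpha T)$. You cannot simultaneously make the horizon long (small $p$) and the mistake rate high (large $p$). Moreover, because the volume constraint is on the \emph{average} over all $T$ rounds, the constraint coming from a $D_j$-sequence only bounds $\sum_t w_t \le \mu\, \opt_{D_j}(\alpha)\, T$, which during a short shared prefix is essentially vacuous. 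The paper circumvents both issues by not relying on statistical indistinguishability at all: the alternate sequences $S_t^{(K)}$ are non-i.i.d.\ and have $\opt$ pinned down at every intermediate scale by construction, and the single heavy-tailed law makes the miscoverage-rate schedule $\tfrac{\alpha T}{t}$ itself the source of $K$ ``levels,'' so the harmonic sum yields the $K\alpha T$ factor directly. That convex coverage--volume curve for one distribution, together with the prefix-sharing arbitrary sequences, is the idea your outline is missing.
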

That is, an algorithm that achieves a non-trivial guarantee on every arbitrary sequence must incur a multiplicative overhead in the number of mistakes it makes on i.i.d., and therefore exchangeable sequences. 

On the positive side, we show that \Cref{alg:meta-algorithm} with \(R(t) = \frac{\alpha T}{t}\) meets this mistake bound on exchangeable sequences (\Cref{thm:best-of-both-worlds-algorithm}).  Recall that this is the same setting of \(R(t)\) that gives an optimal algorithm for arbitrary sequences.  Thus, this algorithm is optimal for arbitrary sequences, and gets the best possible mistake bound subject to achieving a guarantee for arbitrary sequences.  We remark that \Cref{alg:meta-algorithm} does incur the same multiplicative volume approximation of \(\mu\) for both arbitrary and exchangeable sequences, whereas the lower bound does not rule out the case that a single algorithm can achieve a better volume approximation for exchangeable sequences than arbitrary sequences.

\subsection{Technical Overview}
\label{sec:technical-overview}

\paragraph{Contrast to online learning.} 
This problem is reminiscent of classical online learning.
Think of each interval \(I\) as an ``expert."  In fact, there are infinitely many intervals, and this poses a genuine issue, but for now, assume there \(n < \infty\) experts, and we will justify this assumption later.  On day \(t\), the expert corresponding to interval \(I\) has loss 0 if \(y_t \in I\) and loss 1 if \(y_t \notin I\).  That is, the loss of the expert is exactly the coverage of the corresponding interval.  The classical online learning problem is to pick an expert on each day to achieve average loss that competes with the loss of the best fixed expert in hindsight.  For our particular setting, this is trivial, as there is an expert in our set that corresponds to an interval that covers the whole space, and achieves loss 0 on every day, but the online learning problem is of course much more general.  

In our problem, instead of loss being an objective that we are trying to minimize, loss becomes a constraint: we want to achieve average loss \(\le \alpha\) over the sequence.  The objective that we are trying to minimize is an average over some other function over the experts that we choose, for us volume of the corresponding interval.  In this sense, our problem becomes an online \emph{constrained optimization} problem, where the set of experts that are feasible with respect to the constraint in hindsight depends on the input sequence. 


One of the beautiful properties of classical online learning is that it is possible to design algorithms for arbitrary input sequences that have vanishing regret, i.e., they make \(1 + O(\sqrt{\log n / T})\) times as many errors as the best fixed expert in hindsight (see e.g., \citet*{Blum_Hopcroft_Kannan_2020}).  This vanishing regret term matches the sampling error that one would face even for i.i.d.\ sequences.  In essence, this says that going from i.i.d.\ (and therefore exchangeable) data to arbitrary data is without loss, as we are able to achieve the best possible guarantee even in the arbitrary case. 
Hence there is one algorithm that is able to achieve the ``best of both worlds."  Such results are also known in bandit problems \citep*{bubeck12b}, online conformal prediction (without efficiency guarantees) \citep*{ABB-best-of-both-worlds}, and other settings.  


It turns out that, in contrast to classical online learning, our problem exhibits a large gap between what is possible in the arbitrary setting and the i.i.d.\ setting.  
For exchangeable data, we show that it is possible to design an algorithm that has volume approximation \(\mu = 1\), while having a vanishing fraction of additional mistakes, i.e., the algorithm has coverage \((1 - \alpha) - \widetilde{O}(\sqrt{\log n/T})\), where we will make what \(n\) means in this setting precise shortly, and \(\widetilde{O}\) hides subpolynomial factors in \(T\).
In the arbitrary setting however, we show that any algorithm, even one that is randomized, only robust to an oblivious adversary, and improper (outputs sets that are not intervals), faces a stark tradeoff between volume approximation \(\mu\) and how many times more than \(\alpha T\) mistakes it makes.  Additionally, we show that even subject to this lower bound, there can be no algorithm that is simultaneously optimal for the exchangeable setting and Pareto optimal for the arbitrary setting.  This rules out any ``best of both worlds" guarantee.

On the positive side, we design a matching algorithm that is able to achieve any Pareto optimal tradeoff.  This algorithm is deterministic, which is also in contrast to the online learning problem which exhibits a separation between deterministic and randomized algorithms. 
Furthermore, our algorithms for exchangeable and arbitrary sequences fit in a unified framework that allows us to interpolate between being optimal for one and optimal for the other. 

\paragraph{Challenges for arbitrary sequences.}  Designing an algorithm for arbitrary input sequences presents some challenges.  Consider the following example.  Our goal is to design a conformal predictor for miscoverage rate \(\alpha = \frac{1}{2}\).  
There are two sequences.  Let \(\varepsilon > 0\) be some small parameter. \(S^{(1)}\) is drawn uniformly from \([0, \varepsilon]\) for the first half of the sequence, and then expands in scale to be drawn uniformly from \([0, 1]\) for the second half of the sequence.  \(S^{(2)}\) is also drawn uniformly from \([0, \varepsilon]\) for the first half of the sequence, but then shrinks in scale to be drawn uniformly from \([0, \varepsilon^2]\) for the second half of the sequence. That is, 
\begin{align*}
    S^{(1)} = (S^{(1)}_1, \dots, S^{(1)}_T), \quad& S^{(1)}_t \sim \begin{cases} \mathrm{Unif}[0, \varepsilon] &\text{for } t \le T/2 \\ \mathrm{Unif}[0, 1] &\text{for } t > T/2 \end{cases}, \\ 
    S^{(2)} = (S^{(2)}_1, \dots, S^{(2)}_T), \quad& S^{(2)}_t \sim \begin{cases} \mathrm{Unif}[0, \varepsilon] &\text{for } t \le T/2 \\ \mathrm{Unif}[0, \varepsilon^2] &\text{for } t > T/2 \end{cases} .
\end{align*}
Since half of \(S^{(1)}\) is at scale \(\varepsilon\), and half of \(S^{(2)}\) is at scale \(\varepsilon^2\), we have
\begin{equation*}
    \opt_{S^{(1)}}(\alpha) \le \varepsilon, \qquad \opt_{S^{(2)}}(\alpha) \le \varepsilon^2.
\end{equation*}

Any algorithm \(\alg\) that captures any constant fraction \(c > 0\) of points must incur a large multiplicative error in volume.  

Since \(S^{(1)}\) and \(S^{(2)}\) are the same for the first \(T/2\) days, \(\alg\) must capture at least a \(c/2\)-fraction of points in the first half of both sequences, or in the second half of both sequences. 
If \(\alg\) captures a \(c/2\)-fraction of points on the first half of the sequences, then on sequence \(S^{(2)}\), \(\alg\) has large average volume, and thus large volume approximation:
\begin{equation*}
    \mathrm{volume}_{\alg}(S^{(2)}) \ge \frac{c \varepsilon}{4} \quad \Longrightarrow \quad  \mu \ge \frac{\mathrm{volume}_{\alg}(S^{(2)})}{\opt_{S^{(2)}}(\alpha)} \ge \frac{c}{4 \varepsilon}.
\end{equation*}
However, if \(\alg\) captures a \(c/2\)-fraction of points on second half of the sequences, then on sequence \(S^{(1)}\), \(\alg\) has large volume and thus large volume approximation:
\begin{equation*}
    \mathrm{volume}_{\alg}(S^{(1)}) \ge \frac{c}{4} \quad \Longrightarrow \quad  \mu \ge \frac{\mathrm{volume}_{\alg}(S^{(1)})}{\opt_{S^{(1)}}(\alpha)} \ge \frac{c}{4 \varepsilon}.
\end{equation*}

This example highlights two issues.  The first is a recurring issue in online learning.  As \(\varepsilon \rightarrow 0\), our volume approximation becomes unbounded.  This arises because the aspect ratio of our problem is unbounded.  Because the problem is scale-free, a non-trivial algorithm that outputs intervals can be forced to make an unbounded number of mistakes.  We address this by introducing a parameter \(\minwidth > 0\), and require an algorithm to compete with the smallest volume interval that achieves coverage, among intervals that have volume at least \(\minwidth\).  This, along with the fact that \(\mathcal{Y}\) is the bounded interval \([0, 1]\), fixes the aspect ratio of the problem.  In essence, this tells us that we may as well think of a grid of size \(1/\minwidth\) over the interval \([0, 1]\), and restrict the problem to considering intervals that are between two gridpoints.  This reduces the number of intervals we have to consider to \(O(1/\minwidth^2)\), which we should think of as analogous to the number of experts \(n\) in the earlier analogy to online learning.  Similarly to online learning, we think of \(n \approx (1/\minwidth)^2\) as very large, and aim for guarantees that depend on \(\log n \approx \log(1/\minwidth)\).  

However, there is a more significant issue.
Even if \(\minwidth\) is fixed, the above example still forces any algorithm that captures a constant fraction of points to incur a multiplicative error \(\Omega(1/\minwidth)\) in volume, which one should think of as being extremely large.  This issue arises because we require the algorithm to achieve coverage very close to the target, i.e., a non-trivial guarantee for \(\alpha = \frac{1}{2}\) requires the algorithm to make \(< T = 2 \alpha T\) mistakes.  Surprisingly, the problem actually becomes tractable for smaller values of \(\alpha\) if we consider a \emph{biciteria approximation}, that competes with the best interval that makes at most \(\alpha T\) mistakes, while making a multiplicative factor more than \(\alpha T\) mistakes.

Algorithmically, this example illustrates the difference between online learning and our problem. In online learning, on day \(t\) an algorithm can choose an expert that does well locally, that is, does well around time \(t\).  While this may not be the best expert over the whole sequence, the fact that it is doing as well as the best expert is doing right now, means that in every local time window, the algorithm is only performing as well or better than the best expert in hindsight.  However, for our problem, choosing a different expert could incur a vastly different cost than the best expert.  For example, if \(\alg\) captures a constant fraction of days in the first half of the sequence, on \(S_2\) it only has coverage even higher than the best expert in hindsight, so it satisfies the coverage constraint.  In exchange, however, it incurs much higher cost than the best expert.  Thus on some days, an efficient algorithm must choose to abstain, and sacrifice coverage to minimize cost.  

Put differently, suppose that each day's input \(y_t\) is drawn from some distribution \(\distribution_t\) (which are potentially different for each day).  For each day \(t\), define \(I_t^\star\) to be the minimum volume interval that achieves coverage \(1 - \alpha\) over \(\distribution_t\).
We refer to the sequence of intervals \((I^\star_1, \dots, I^\star_T)\) as the \emph{daily optimum}.  It is not necessarily the case that the daily optimum has smaller average volume than the best single interval that achieves average coverage \(1 - \alpha\) over the whole sequence of distributions \(\distribution_t\):  
\begin{align*}
    \frac{1}{T} \sum_{i = 1}^T \mathrm{volume}(I^\star_t) &= \frac{1}{T} \sum_{i = 1}^T \min_{I_t} \mathrm{volume}(I_t) \text{ s.t. } \ProbOp_{y \sim \distribution_t} [y \in I_t] \ge 1 - \alpha\\
    &~\not\le~ \min_I \mathrm{volume}(I) \text{ s.t. } \left(\frac{1}{T} \sum_{i = 1}^T \ProbOp_{y \sim \distribution_t}[y \in I] \right) \ge 1 - \alpha. 
\end{align*}

Thus the coverage constraint is a \emph{global constraint} that requires an approach beyond trying to solve each individual day's problem.  We discuss this further in the related work (\Cref{sec:related-work}).


To approach this problem, we show that \Cref{alg:meta-algorithm} essentially implements a doubling search for the volume of the optimal single interval that achieves global coverage \(1 - \alpha\).  In some sense, we can think of this as implementing a optimization to feasibility reduction, which is a classic idea in optimization.  We would like to minimize volume, subject to a global coverage constraint.  It turns out to be easy to instead maximize global coverage subject to a volume constraint.  The doubling search then essentially performs a reduction from this second ``feasibility" problem to the first ``optimization" problem.  This reduction requires a phases for each step of the doubling search, and therefore incurs some overhead.  This leads to the multiplicative factor in the number of mistakes.  

We are able to show that this overhead is necessary, by constructing a matching lower bound.  As it turns out, the lower bound comes from exactly a ``bad instance" for our algorithm, in which we show that no algorithm can do any better.

\paragraph{Challenges for Exchangeable Sequences.}
Our algorithm for exchangeable sequences is another instantiation of the same meta-algorithm \Cref{alg:meta-algorithm}.  Essentially on each day \(t\), the algorithm plays an interval \(I_t\), which is the minimum volume interval that achieves coverage (up to some slack) over the previous days inputs \(S_1, \dots, S_{t - 1}\).

The analysis of the algorithm for exchangeable sequences is tricky because the interval \(I_t\) that \Cref{alg:meta-algorithm} plays on day \(t\) is dependent on the first \(t - 1\) days of the sequence \(S_1, \dots, S_{t - 1}\).  This is in turn \emph{not} independent of \(S_t\).  Thus the coverage on day \(t\)
\[\ProbOp [S_t \in I_t(S_1, \dots, S_{t - 1})],\]
where the probability is over the exchangeability of \(S\), is challenging to bound.  This is actually the exact setting of standard conformal prediction.  However, existing methods only reason about coverage and do not come with provable volume guarantees, which are necessary for this problem.\footnote{To the best of our knowledge, all known volume optimality guarantees for standard conformal prediction require stronger assumptions than just exchangeability on the data. For example, they typically require the data to be drawn i.i.d..  We refer the reader to \Cref{sec:related-work} for more discussion.}

To understand the challenge, consider the following example.  Let \(S\) be a uniformly random permutation of a multiset that has \((1 - 2\alpha)T\) copies of \(1/2\), \(\alpha T\) copies of \(0\) and \(\alpha T\) copies of \(1\). \(S\) is exchangeable by construction.  On day \(T\), we have seen all but one element, which could have been \(1/2\) with probability \(1 - 2 \alpha\), \(0\) with probability \(\alpha\), or \(1\) with probability \(\alpha\).

\Cref{alg:meta-algorithm}, for \(\mu = 1\), \(R(t) = \alpha - O(\sqrt{\log T/T} )\), will arbitrarily choose a minimum volume interval that achieves coverage \(1 - R(T-1)\) on the first \(T - 1\) points.  
The intervals \([0, 1/2]\) or \([1/2, 1]\) will both meet the coverage requirement and have equal (minimal) volume, so say that \Cref{alg:meta-algorithm} tiebreaks by choosing the interval that achieved higher coverage, and choosing \([0, 1/2]\) if they have equal coverage.  
Then, if \(S_{T} = 0\), \([1/2, 1]\) will achieve higher coverage on the prefix and \(S_{T}\) will not be in the output interval.  
Similarly, if \(S_{T} = 1\), \([0, 1/2]\) will achieve higher coverage on the prefix and \(S_{T}\) will not be in the output interval.  
Thus \(S_{T}\) is covered with probability \(\le 1 - 2 \alpha\), when we are aiming for coverage \(1 - \alpha\).  Our analysis handles this subtle issue by showing that there cannot be many days \(t\) on which this issue arises.  This allows us to amortize the loss in coverage by allowing for higher error rates both at the beginning and at the end of the sequence.  Carefully accounting for this allows us to show that the standard analysis for i.i.d.\ sequences goes through even for sequences that are only exchangeable and not necessarily i.i.d..  

A key lemma that allows us to get these bounds is the fact that \emph{uniform convergence} for set families of bounded VC-dimension, for example intervals over \([0, 1]\), holds over not only i.i.d.\ inputs, but over exchangeable inputs as well (\Cref{lem:exchangeable-uniform-convergence}).  While this is already known (we provide the writeup in \Cref{sec:uniform-convergence} for completeness), the contribution of this work is to use this to amortize error across the whole exchangeable sequence.  


\subsection{Related Work}
\label{sec:related-work}

Conformal prediction is a very well-studied area in statistics with a wealth of recent work.  The theoretical work alone in this space is the topic of a recent textbook \citep*{angelopoulos2025theoreticalfoundationsconformalprediction}.  We provide a brief comparison to some of the work most relevant to the setting we consider, and refer the reader to the textbook for a more in-depth treatment.  

\paragraph{Efficiency guarantees for standard conformal prediction.}  Typically, conformal prediction methods are required to provably achieve coverage (\ref{eq:coverage}).  Efficiency, however, is often empirically validated on real and simulated data.  There is a line of work that tries to establish efficiency guarantees subject to provably achieving coverage.  The work of \citet*{Sadinle02012019}, which considers the problem when the set \(\mathcal{Y}\) is a discrete label space. The line of work including \citet*{Lei01032013, izbicki20a, Izbicki2020CDsplitAH} work in settings where the data is drawn i.i.d. from a distribution that allows for a good p.d.f.\ estimator, and then achieving a volume optimal prediction set corresponds to taking a level set of that estimator.  However, this approach does not work for arbitrary distributions, that may in general be hard to approximate.
The work of \citet*{gao2025volumeoptimalityconformalprediction} shows that when the data is drawn i.i.d., if one chooses a prediction set from a family of potential sets that has bounded VC-dimension, it is sufficient to find the most efficient prediction set that achieves coverage over the samples.  
To the best of our knowledge, all known volume optimality bounds for standard conformal prediction must make stronger assumptions on the data than just exchangeability.      

\paragraph{Beyond exchangeability.}  A line of work has tried to relax the assumption of exchangeability for standard conformal prediction. 
A line of work including \citet*{tibshirani2019, Barber2022ConformalPB} designs conformal methods that work when the joint distribution of the data is ``approximately" exchangeable.  \citet*{prinster2024} design conformal methods when the joint distribution of the data is not exchangeable, but requires that the joint distribution is known in advance. 

\paragraph{Online conformal prediction.}  For exchangeable sequences, a foundational result of \citet*{vovk2002} shows that a standard conformal method when run repeatedly on prefixes of an online sequence, errs with probability that is independent on each day, and is therefore valid on every day.  For arbitrary sequences, the problem of computing confidence sets that achieve coverage on average was initiated by \citet*{gibbs2021}, and studied further by \citet*{zaffran22a, gibbs2025onlineconformal, bhatnagar23}.  
\citet*{feldman2023achieving, angelopolous23} give an algorithm that has coverage provably tending to \(1 - \alpha\) as \(T \rightarrow \infty\).  This model does not explicitly optimize for efficiency.  Instead, the justification is to think of coverage as being a proxy for efficiency: by ensuring that coverage is not too much more than the target \(1 - \alpha\), this incentivizes the algorithm to leave some things out of the prediction set.

To the best of our knowledge, ours is the first work to give provable efficiency guarantees for conformal prediction in the online setting, even for exchangeable data.  In fact, our lower bound (\Cref{thm:arbitrary-order-lower-bound}) proves that any method that achieves coverage tending to \(1 - \alpha\) on arbitrary sequences must incur a very large multiplicative volume approximation in the worst case.  

The algorithm of \citet*{angelopolous23} works by estimating the bottom \(\alpha/2\) and top \(\alpha/2\) quantile of the data in an online fashion.  The confidence interval will then be the interval between these two estimated quantiles.  This is possible since a given quantile is the minimizer of a convex function called the \emph{pinball loss}, so it is possible to get a low regret estimator using \emph{online convex optimization}.  
In our setting of optimizing volume, this runs into a couple of issues.  The first is that it is subject to an ``equal tails" assumption: it always outputs intervals that have approximately equal mass lying outside of the interval on each side.  This may not be volume-optimal if the data distribution is skewed.  

However, even if the data is subject to an equal tails constraint, where an input sequence must have each \(y_t\) drawn from a distribution \(\distribution_t\) that is unimodal and symmetric around median \(1/2\), the lower bound in \Cref{thm:arbitrary-order-lower-bound} still holds.  We can interpret this as saying that the quantile-based algorithm is not conservative enough to achieve a strong volume approximation.  
Put differently, suppose that on each day \(t\), the algorithm was told the distribution \(\distribution_t\) of \(y_t\) before outputting a confidence set for \(y_t\).  In this setting, choosing the volume optimal set that achieves coverage \(1 - \alpha\) on each day does not necessarily imply that the algorithm will compete in volume with the best set that achieves coverage \(1 - \alpha\) over the sequence of \(y_t\)s.  The optimum in hindsight may have lower coverage on days that have spread out distributions and higher coverage on days that have concentrated distributions.  This fact that the optimal loss over the sequence could be lower than the sum of the optimal losses on each day poses a roadblock to any strategy that goes through regret minimization with respect to a fixed loss function.  

\paragraph{"Best of both worlds" for Online Conformal Prediction.}  We note that the work of \citet*{ABB-best-of-both-worlds} provides a ``best of both worlds" guarantee for online conformal prediction.  This does not contradict our result that there is no possible ``best of both worlds" guarantee, as the two works are referring to different notions of optimality.  \citet*{ABB-best-of-both-worlds} provide a guarantee that simultaneously achieves average coverage tending to \(1 - \alpha\) on arbitrary sequences, and coverage on each day tending to \(1 - \alpha\) on exchangeable sequences.  In this work, we are interested in optimizing efficiency.  We consider algorithms that achieve average coverage both for arbitrary and exchangeable sequences, and we would like to show that they achieve the best possible volume approximation ratio in these two settings.  We show that this type of guarantee is impossible to achieve.

\section{Algorithmic Guarantees for Arbitrary Sequences}

We begin by analyzing the performance of \Cref{alg:meta-algorithm} on arbitrary sequences, for general settings of the allowable error rate \(R(t)\).  In the analysis, we show that, even though it is not explicitly enforced by the algorithm, most times that \(\Icurrent\) is reset (\Cref{alg:meta-algorithm}, Line \ref{algline:reset-condition}), its size increases multiplicatively by a factor that is linear in \(\mu\) (the core of this argument is illustrated in \Cref{fig:doubling-argument}).  Since the smallest width that \(\Icurrent\) is reset to is \(\minwidth\), and the largest it can be is \(1\), the number of times it is resets essentially depends on \(\log_\mu (1/\minwidth) = \log(1/\minwidth)/\log(\mu)\). 

Call the time window between two subsequent resets of \(\Icurrent\) a ``phase." Over a phase, \Cref{alg:meta-algorithm} plays one fixed interval that is feasible over the whole phase.  This allows us to bound the number of mistakes made in a particular phase by the total number of mistakes that any feasible interval is allowed to have made at that time.

\begin{theorem}[Meta-algorithm Guarantee for Arbitrary Sequences]\label{thm:meta-alg-arbitrary-order}
    \addcontentsline{toc}{subsection}{\emph{\Cref{thm:meta-alg-arbitrary-order}:} Meta-algorithm Guarantee for Arbitrary Sequences}
    Fix a scale lower bound \\
    \(\minwidth > 0\), a multiplicative volume approximation \(\mu > 3\), and an allowable error rate function \\
    \(R(t) : \{0, \dots, T - 1\} \rightarrow [0, 1]\) such that \(R(t) \le \frac{1}{4}\) for all \(t \ge t^\star\).

    On a sequence \(S = (y_1, \dots, y_T)\) of length \(T\), let \(I_t\) be the interval that \Cref{alg:meta-algorithm} plays on each day \(t \in [T]\).  The maximum volume of these intervals is bounded as
    \[\vol(I_t) ~\le~ \mu \max \left\{ \opt_S \left( \min_{1\le t \le T} \frac{t}{T} R(t) \right), ~\minwidth \right\} \qquad \forall t \in [T].\]
    The number of mistakes of these intervals can be bounded as 
    \[\sum_{i = 1}^T \mathbf{1}\{y_t \notin I_t\} ~\le~ t^\star + O \left(\frac{\log (1/\minwidth) \log T}{\log (\mu)} \left(1 + \max_{0 \le t \le T - 1} R(t) t\right) \right). \]
    Moreover, the algorithm is deterministic and therefore robust to an adaptive adversary. 
\end{theorem}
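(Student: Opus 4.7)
The plan is to split the proof into a short volume bound and a more involved mistake bound.

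For the volume bound, I would set $\alpha^\star := \min_{1 \le t \le T} \tfrac{t}{T} R(t)$ and fix a witness interval $I^\star$ with $\vol(I^\star) = \opt_S(\alpha^\star)$ having at most $\alpha^\star T$ mistakes on $S$. At any reset time $t$, the number of mistakes of $I^\star$ on the first $t-1$ points is at most $\alpha^\star T \le (t-1) R(t-1)$ by the definition of $\alpha^\star$, so $I^\star \in \mathcal{F}_t$. Since $I_t$ is the minimum volume member of $\mathcal{F}_t$ (line~\ref{algline:feasible-interval}), $\vol(I_t) \le \vol(I^\star) = \opt_S(\alpha^\star)$. The $\widehat{\mu}$ scaling in line~\ref{algline:expand-interval} then yields $\vol(\Icurrent) \le \mu \max\{\opt_S(\alpha^\star), \minwidth\}$ at every reset, which persists between resets since $\Icurrent$ is unchanged then.

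For the mistake bound, let $t_1 < t_2 < \dots$ be the reset times and call $[t_i, t_{i+1} - 1]$ phase $i$. I would split the count by (a) mistakes per phase and (b) number of phases. Part (a) is a telescoping argument: at the start of phase $i$, $\Icurrent_i \supseteq I_i$ has cumulative mistake count $\le (t_i - 1) R(t_i - 1)$ on the prefix; since the reset did not trigger at time $t_{i+1} - 1$, its cumulative count on the first $t_{i+1} - 2$ points is $\le (t_{i+1} - 2) R(t_{i+1} - 2)$; and at most one further mistake at day $t_{i+1} - 1$ can push it past the reset threshold. So phase $i$ contributes at most $\max_{0 \le t \le T-1} R(t)\, t + 1$ mistakes.

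Part (b), bounding the number of phases, is where I expect the main difficulty to lie, and is where the doubling-search argument advertised in the exposition is needed. The key deterministic observation is that if $\Icurrent_i$ fails at $t_{i+1}$, then the new minimum volume feasible interval cannot satisfy $I_{t_{i+1}} \subseteq \Icurrent_i$: otherwise $\Icurrent_i$ would inherit the coverage of $I_{t_{i+1}}$ on the prefix and the reset check would not have fired. So $I_{t_{i+1}}$ must stick out of $\Icurrent_i$, and combined with $\mu > 3$, the $\widehat{\mu}$ scaling should force $\vol(\Icurrent_{i+1}) \ge c\,\vol(\Icurrent_i)$ in the typical case for some constant $c > 1$. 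Since $\vol(\Icurrent) \in [\mu \minwidth, 1]$, the number of such growth phases is bounded by $O(\log(1/\minwidth)/\log \mu)$. The hard part will be handling "shrinking" phases, in which the minimum feasible volume drops abruptly (for example when a new concentration of points appears outside the current $\Icurrent$); I plan to bucket phases by the octave of $\vol(\Icurrent)$ and amortize each shrinking phase against prior history within its bucket, and I expect the extra $\log T$ factor in the theorem bound to emerge from this amortized charging. The assumption $R(t) \le 1/4$ for $t \ge t^\star$ provides the slack needed in the feasibility constraints to make the growth step work; all mistakes before $t^\star$ are absorbed by the trivial bound of $t^\star$.

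Combining (a) and (b) gives the stated mistake bound $t^\star + O(\log(1/\minwidth) \log T / \log \mu)(1 + \max_t R(t)\, t)$. Determinism, and therefore robustness to an adaptive adversary, is immediate from inspection of Algorithm~\ref{alg:meta-algorithm}, since it makes no random choices.
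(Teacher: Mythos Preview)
Your volume bound and mistakes-per-phase arguments match the paper. The gap is in part (b), bounding the number of phases. You correctly observe that $I_{t_{i+1}} \not\subseteq \Icurrent_i$, but this alone does not give geometric growth: $I_{t_{i+1}}$ could be a tiny interval disjoint from $\Icurrent_i$, making $\Icurrent_{i+1}$ much smaller than $\Icurrent_i$. You acknowledge this as the ``shrinking phase'' problem and propose to bucket by octave and amortize, but this plan is vague and it is not clear it can be made to work; in particular there is no a priori bound on how many times $\Icurrent$ can visit a given octave.

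The paper avoids shrinking phases entirely via an \emph{overlap} argument that is the real purpose of the hypothesis $R(t)\le 1/4$. If $t^\star < t_1 < t_2$ and $I_1,I_2$ are feasible at $t_1,t_2$ respectively, then $I_1$ covers at least $\tfrac{3}{4}(t_1-1)$ of the first $t_1-1$ points while $I_2$ misses at most $\tfrac{1}{4}(t_2-1)$ of them; hence if $\tfrac{t_2-1}{t_1-1}<3$ they must share a point and therefore overlap as intervals. The paper then partitions $[t^\star+1,T]$ into $O(\log T)$ \emph{epochs} with this ratio bound. Within an epoch, $I_{t_{i+1}}$ overlaps $I_{t_i}$ and also contains a point outside $\Icurrent_i=\widehat{\mu} I_{t_i}$, so $I_{t_{i+1}}$ must span an entire margin of width $\tfrac{\mu-1}{2}\vol(I_{t_i})$, yielding $\vol(\Icurrent_{i+1})\ge \tfrac{\mu-1}{2}\vol(\Icurrent_i)$. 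Thus every reset within an epoch is a growth step, giving $O(\log(1/\minwidth)/\log\mu)$ resets per epoch; the $\log T$ factor is the epoch count, not an amortization artifact.
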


\begin{proof}
    Any interval that has made \(\le R(t - 1) (t - 1)\) mistakes before day \(t \ge 2\) is a feasible choice for \(I_t\) (\Cref{alg:meta-algorithm}, line \ref{algline:feasible-interval}).  Therefore, any interval \(I\) that satisfies 
    \[\frac{1}{T}\sum_{i = 1}^T \mathbf{1}\{y_t \notin I\} ~\le~ \min_{1 \le t \le T-1} \frac{t}{T} R(t)\]
    is a feasible choice for \(I_t\), on every day \(t\).  In particular, \(\opt_S \left( \min_{1 \le t < T} \frac{t}{T} R(t) \right)\) satisfies the above by definition, and is therefore always a feasible choice for \(I_t\).
    
    \anote{The above sentence isn't easy to read without a clear mathematical expression on the left. Would it be too cumbersome to have notation for denoting these quantities?}
    Lines \ref{algline:feasible-interval} and \ref{algline:expand-interval} ensure that the interval that is played has volume at most \(\mu\) times the maximum of \(\minwidth\) and the volume of any feasible interval.  Therefore the volume played by \Cref{alg:meta-algorithm} on each day \(t'\) is bounded by 
    \begin{equation*}
        \vol(I_{t'}) ~\le~ \mu \max \left\{ \opt_S \left( \min_{1 \le t < T} \frac{t}{T} R(t) \right), ~\minwidth \right\}, \qquad \forall t' \in [T]. 
    \end{equation*}

    Now we bound the number of mistakes.  Let \(t_1, t_2\) be two subsequent times that \(\Icurrent\) is reset.  That is, the same \(\Icurrent\) is played on all days \([t_1, t_2)\).  Call \([t_1, t_2)\) a \emph{phase}.  We begin by bounding the number of mistakes in a phase.  Since \(\Icurrent\) is feasible on day \(t_2 - 1\), \(\Icurrent\) makes \(\le R(t_2 - 2) (t_2 - 2)\) mistakes on \(\{y_1, \dots, y_{(t_2 - 1)}\}\).  Therefore \(\Icurrent\) makes \(\le R(t_2 - 2) (t_2 - 2) + 1\) mistakes on this phase: \(\{y_{t_1}, \dots, y_{t_2 - 1}\}\).  Thus the number of mistakes that the algorithm makes on any phase is
    \begin{equation}
        \sum_{t = t_1}^{t_2 - 1}\mathbf{1}\{y_t \in \Icurrent\} \le 1 + \max_{0 \le t \le T - 1} R(t)t. \label{eq:mistakes-per-phase}
    \end{equation}
    \vnote{rename \(x\)s to \(y\)s!}
    
    Now we bound the number of phases.
    Consider any two days \(t_1, t_2\) such that \(t^\star < t_1 < t_2\), so \(R(t_1 - 1), R(t_2 - 2) \le \frac{1}{4}\).  Let \(I_1\) be any interval that is feasible at \(t_1\) and \(I_2\) be any interval that is feasible at \(t_2\).  \(I_1\) has captured \(\ge \frac{3}{4} (t_1 - 1)\) points out of \(\{x_1, \dots, x_{t_1 - 1}\}\).  \(I_2\) has missed \(\le \frac{1}{4} (t_2 - 1)\) points out of \(\{y_1, \dots, y_{t_1 - 1}\} \subseteq \{y_1, \dots, y_{t_2 - 1}\}\).  Therefore as long as \(\frac{t_2 - 1}{t_1 - 1} < 3\), \(I_1\) and \(I_2\) must capture some point in common and must overlap.  

    Partition the sequence after day \(t^\star\) into \(O(\log T)\) \emph{epochs}, where each epoch \([t_\mathrm{start}, t_\mathrm{end}]\) has \(\frac{t_\mathrm{end} - 1}{t_\mathrm{start} - 1} < 3\).    
    We bound the number of mistakes that \Cref{alg:meta-algorithm} makes on each epoch.  Fix an epoch \(e\), and consider the first time in this epoch that \Cref{alg:meta-algorithm} resets \(\Icurrent\).  By line \ref{algline:expand-interval}, \(\Icurrent\) is set to have volume \(\ge \mu \minwidth\). 

    Now consider a subsequent time \(t\) in epoch \(e\) when \(\Icurrent\) is reset.  Let \(\tprev < t\) be the previous time in epoch \(e\) when \(\Icurrent\) was reset.  Let \(I_\tprev\) and \(I_{t}\) be the minimum volume feasible intervals on day \(\tprev\) and \(t\), respectively (as chosen in line  \ref{algline:feasible-interval}).  Let \(\Icurrent^{(\tprev)}\) denote the state of \(\Icurrent\) before day \(t\) (after day \(\tprev\)).
    
    Since \(\Icurrent\) is reset on day \(t\), \(\Icurrent^{(\tprev)}\) is not feasible on day \(t\), and in particular \(I_t \not\subseteq \Icurrent^{(\tprev)}\).  By the previous argument \(I_\tprev\) and \(I_t\) must overlap.  Any interval that contains a point in \(I_\tprev\) and also contains a point that is not in \(\Icurrent^{(\tprev)} = \widehat{\mu} I_\tprev\), must have volume \(\ge \frac{\mu - 1}{2} \cdot  \vol \left(\Icurrent^{(\tprev)} \right)\).  This argument is illustrated in \Cref{fig:doubling-argument}. Denote the state of \(\Icurrent\) at the end of day \(t\) as \(\Icurrent^{(t)}\).  By line \ref{algline:expand-interval}, we have that \(\Icurrent^{(t)} \supseteq \mu I_t\), so 
    \begin{equation*}
        \vol\left(\Icurrent^{(t)} \right) \ge \frac{\mu - 1}{2} \vol\left(\Icurrent^{(\tprev)}\right) .
    \end{equation*}
    Since \(\vol(I_t) \le 1\) for all days \(t\), we have that \(\vol(\Icurrent^{(t)}) \le \mu\) for all days \(t\).  Thus, within an epoch, as long as \(\mu > 3\), the number of times \(\Icurrent\) can be reset is
    \begin{equation}
        \le 1 + \log_{\frac{\mu - 1}{2}} \left( \frac{\mu}{\mu \minwidth} \right) 
        = O\left( \frac{\log (1/\minwidth)}{\log (\mu)} \right), \label{eq:resets-in-an-epoch}
    \end{equation}
    and the total number of times that \(\Icurrent\) is reset after day \(t^\star\) is \(O\left( \frac{\log (1/\minwidth) \log T}{\log (\mu)}\right)\).

    \begin{figure}[t]
        \centering
        \begin{tikzpicture}

            \draw[dashed, red] (0, 3.5) -- (5, 3.5);
            \draw[dashed, red] (0, 4.5) -- (5, 4.5);
            \node[anchor=east, red] (q) at (0.5, 4) {top margin};
    
            \draw[dashed, red] (0, 3) -- (5, 3);
            \draw[dashed, red] (0, 2) -- (5, 2);
            \node[anchor=east, red] (q) at (0.5, 2.5) {bottom margin};
    
            \draw[|-|, blue] (1, 2) -- (1, 4.5);
            \node[anchor=west, blue] (b) at (1, 4) {$\mu I_{t_\mathrm{prev}}$};
            \draw[|-|, line width = 1pt] (1, 3) -- (1, 3.5);
            \node[anchor=west] (a) at (1, 3.25) {$I_{t_\mathrm{prev}}$};
    
            \draw[|-|, blue] (2.5, -2) -- (2.5, 6.75);
            \node[anchor=west, blue] (b) at (2.5, 5) {$\mu I_{t}$};
            \draw[|-|, line width = 1pt] (2.5, 1.5) -- (2.5, 3.25);
            \node[anchor=west] (a) at (2.5, 2.375) {$I_{t}$};
    
            \draw[<->, line width = 1pt, red] (4, 2) -- (4, 3);
            \node[anchor=west, red] (d) at (4, 2.5) {$\frac{\mu - 1}{2} \left|I_{t_\mathrm{prev}} \right|$};
        \end{tikzpicture}
        \caption{\(I_t\) achieved more coverage than \(\Icurrent^{(\tprev)} = \mu I_{\tprev}\) on the sequence so far, so \(I_t\) is not contained in \(\mu I_{\tprev}\).
        We also know that \(I_{\tprev}\) and \(I_{t}\) must overlap.  
        Thus, \(I_t\) must fully contain one of the margins that we add to \(I_{\tprev}\) to form \(\mu I_{\tprev}\) (denoted here as the regions between the red dashed lines).  This gives a lower bound on the size of \(I_t\) in terms of the size of \(I_{\tprev}\), and therefore a lower bound on the size of \(\Icurrent^{(t)}\) in terms of the size of \(\Icurrent^{(\tprev)}\)
        }
        \label{fig:doubling-argument}
    \end{figure}
    
    Our previous bound on the number of mistakes in a phase (\ref{eq:mistakes-per-phase}) allows us to bound the total number of mistakes that \Cref{alg:meta-algorithm} makes as
    \begin{equation*}
        = t^\star + O \left(\frac{\log (1/\minwidth) \log T}{\log (\mu)} \left(1 + \max_{0 \le t \le T - 1} R(t) t\right) \right). 
    \end{equation*}
\end{proof}

The previous analysis allows us to choose an \(R(t)\) to optimize the bounds.  The following corollary observes that \(R(t) = \frac{\alpha T}{t}\) gives a guarantee in terms of \(\opt(\alpha)\).  This gives a guarantee that exhibits a tradeoff between \(\mu\), the multiplicative volume approximation of the intervals played by \Cref{alg:meta-algorithm}, and the number of mistakes that \Cref{alg:meta-algorithm} makes.  Later, in \Cref{thm:arbitrary-order-lower-bound} we give a lower bound that shows that the \(\mu\)-mistake tradeoff that \Cref{alg:meta-algorithm} with \(R(t) = \frac{\alpha T}{t}\) achieves is optimal.  That is, for any setting of \(\mu\), this algorithm makes a near-optimal number of mistakes subject to achieving multiplicative approximation \(\le \mu\). 

\(R(t) = \frac{\alpha T}{t}\) ensures that the intervals that are feasible on day \(t\) are exactly those that have made \(\le \alpha T\) mistakes so far, where \(\alpha T\) is the total number of mistakes that \(\opt(\alpha)\) is allowed to make over the whole sequence.  We can think of this algorithm as playing conservatively, and not ruling out any interval until it is demonstrably not \(\opt(\alpha)\).  

\begin{corollary}[Algorithm for Arbitrary Sequences]\label{cor:optimal-algorithm-for-arbitrary-order}
    \addcontentsline{toc}{subsection}{\emph{\Cref{cor:optimal-algorithm-for-arbitrary-order}}: Algorithm for Arbitrary Sequences}
    Fix a scale lower bound \(\minwidth > 0\), multiplicative volume approximation \(\mu > 3\), target miscoverage rate \(\alpha \ge 0\), and time horizon \(T\) (all known to the algorithm).  Set \(R(t)\) such that \(R(0) = 1\) and \(R(t) = \alpha \frac{T}{t}\) for \(t > 0\).  With these parameters, on any sequence \(S = (y_1, \dots, y_T)\) of length \(T\), \Cref{alg:meta-algorithm} plays interval \(I_t\) on each day \(t \in [T]\), such that the maximum volume of any \(I_t\) is approximately optimal 
    \[\vol(I_t) \le \mu \max \left\{ \opt_S ( \alpha ), ~\minwidth \right\} \qquad \forall t \in [T],\]
    and the number of mistakes is bounded by 
    \[\sum_{i = 1}^T \mathbf{1}\{ y_t \notin I_t \} = O \left( \frac{\log (1/\minwidth)}{\log (\mu)} (\alpha T + 1) \right).\]
    Moreover, the algorithm is deterministic and therefore robust to an adaptive adversary. 
\end{corollary}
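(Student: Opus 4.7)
The plan is to specialize Theorem~\ref{thm:meta-alg-arbitrary-order} to $R(t) = \alpha T/t$ (with $R(0) = 1$) and then tighten the resulting mistake bound to remove the spurious $\log T$ factor that the meta-theorem carries in its full generality. The hypothesis $R(t) \leq 1/4$ is met starting at $t^\star := 4\alpha T$, so the theorem applies with this threshold.

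The volume bound follows immediately. With this schedule, $(t/T) R(t) = \alpha$ exactly for every $t \geq 1$ (the $t$'s cancel), so $\min_{1 \leq t \leq T} (t/T) R(t) = \alpha$ and the first conclusion of the theorem reads
\[\vol(I_t) \leq \mu \max \{\opt_S(\alpha),\, \minwidth\} \qquad \forall t \in [T],\]
which is exactly the claimed bound.

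The mistake bound requires a bit more work: a direct application of Theorem~\ref{thm:meta-alg-arbitrary-order} gives $t^\star + O((\log(1/\minwidth) \log T / \log \mu)(1 + \alpha T))$, which carries a spurious $\log T$ factor. I would eliminate this by revisiting the epoch decomposition inside the theorem's proof. That step only partitioned the timeline into $O(\log T)$ epochs in order to guarantee, via a time-ratio bound, that any two feasible intervals overlap---because in general $R(t)\cdot t$ may grow with $t$. For the specific schedule $R(t) = \alpha T/t$, however, $R(t-1)(t-1) = \alpha T$ is \emph{constant} in $t$, so the sufficient overlap condition $R(t_2 - 1)(t_2 - 1) < \frac{3}{4}(t_1 - 1)$ collapses to the uniform requirement $t_1 > \frac{4}{3}\alpha T + 1$, which holds for every $t_1 \geq t^\star$. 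The entire suffix $[t^\star, T]$ therefore behaves as a single epoch, and the doubling argument (illustrated in Figure~\ref{fig:doubling-argument}) caps the total number of resets after $t^\star$ by $O(\log(1/\minwidth)/\log \mu)$. Multiplying by the per-phase bound of $1 + \alpha T$ mistakes from~(\ref{eq:mistakes-per-phase}), and adding the trivial $O(\alpha T)$ mistakes possible before $t^\star$, yields the claimed total $O((\log(1/\minwidth)/\log \mu)(\alpha T + 1))$. Determinism and robustness against an adaptive adversary are inherited from Algorithm~\ref{alg:meta-algorithm}. The main non-routine step is recognizing that the constancy of $R(t)\cdot t$ for this schedule is exactly what kills the need for an epoch decomposition; everything else is bookkeeping on top of Theorem~\ref{thm:meta-alg-arbitrary-order}.
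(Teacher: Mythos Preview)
Your proposal is correct and follows essentially the same approach as the paper: both observe that for $R(t)=\alpha T/t$ any two feasible intervals after an $O(\alpha T)$ threshold must overlap, so the entire suffix is a single epoch, and then combine the $O(\log(1/\minwidth)/\log\mu)$ reset bound with the $1+\alpha T$ mistakes-per-phase bound. The only cosmetic difference is that the paper argues overlap directly from ``feasible $\Leftrightarrow$ at most $\alpha T$ mistakes so far,'' yielding the threshold $2\alpha T+1$, whereas you route through the $R\le 1/4$ hypothesis of the meta-theorem and use $t^\star=4\alpha T$; the resulting bounds are the same up to constants.
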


\begin{proof}
    The bound on maximum volume follows directly from the volume guarantee in \Cref{thm:meta-alg-arbitrary-order}.  

    For the bound on number of mistakes, this choice of \(R(t)\) ensures that an interval is feasible on day \(t\) if and only if it has made \(\le \alpha T\) mistakes up to day \(t\).  Consider \(t_1, t_2\) such that \(2 \alpha T + 1 < t_1 < t_2\).  Let \(I_1\) be any feasible interval on day \(t_1\) and \(I_2\) be any feasible interval on day \(t_2\).  Since \(I_1\) and \(I_2\) each make \(\le \alpha T\) mistakes over the first \(2 \alpha T + 1\) days, they must capture at least one day in common.  Therefore \(I_1\) and \(I_2\) overlap.  Thus we can think of all days \(t > 2 \alpha T + 1\) as forming one epoch, and the proof of \Cref{thm:meta-alg-arbitrary-order} (\Cref{eq:resets-in-an-epoch}) tells us that the number of phases in this epoch is 
    \[ = O\left( \frac{\log (1/\minwidth)}{\log (\mu)} \right).\]
    The proof of \Cref{thm:meta-alg-arbitrary-order} (\Cref{eq:mistakes-per-phase}) also tells us that, for this choice of \(R(t)\), the number of mistakes in each phase is bounded by 
    \(\le 1 + \alpha T.\)
    Thus the total number of mistakes made is 
    \[ \le (2 \alpha T + 1) + O\left( \frac{\log (1/\minwidth)}{\log (\mu)} \right) (\alpha T + 1) \quad = O\left( \frac{\log (1/\minwidth)}{\log (\mu)} (\alpha T + 1)\right) .\]
\end{proof}

\section{Algorithmic Guarantees for Exchangeable Sequences}

In this section we analyze the performance of \Cref{alg:meta-algorithm} on exchangeable sequences.  The analysis differs significantly from the arbitrary order setting, because we want to take advantage of concentration.  That is, an exchangeable sequence \(\mathbf{S}\) has the property that the coverage of any interval \(I \subseteq [0,1 ]\) over one subsequence of \(\mathbf{S}\) should be similar to the coverage of \(I\) over a different subsequence of \(\mathbf{S}\).  

Quantitatively, we can access this property through \emph{uniform convergence}.  In the most commonly used form, uniform convergence tells us that for a set family \(\mathcal{F}\) of bounded VC-dimension, for us the family of intervals over \([0, 1]\), and a set \(Y\) of \(n\) samples drawn i.i.d. from some distribution \(\distribution\), the coverage of every set in \(\mathcal{F}\) over \(Y\) simultaneously converges to its true coverage over \(\distribution\).  That is, treating the VC-dimension of \(\mathcal{F}\) and failure probability as constants, the coverage of every set in \(\mathcal{F}\) over \(Y\) is within \(\pm O (1/\sqrt{n})\) of its coverage over \(\distribution\).

A similar statement holds even for sets of samples that are exchangeable but not necessarily i.i.d..  Let \(\mathbf{S} = (\mathbf{S}_1, \dots, \mathbf{S}_t)\) be an exchangeable sequence of length \(T\), and let \(Y\) be some (not necessarily contiguous) subsequence of size \(t \le T\) of \(\mathbf{S}\).

Uniform convergence for exchangeable sequences tells us that, treating the VC-dimension of \(\mathcal{F}\) and failure probability as constants, the coverage of every set in \(\mathcal{F}\) over \(Y\) is within \(\pm O (1/\sqrt{t})\) of its coverage over \(\mathbf{S}\).  That is, the coverage of sets in \(\mathcal{F}\) over subsequences of \(\mathbf{S}\) behaves essentially as though the elements of \(Y\) were drawn i.i.d.\ from \(\unif \{\mathbf{S}_1, \dots, \mathbf{S}_T \}\).  A formal version of this statement can be found in \Cref{lem:exchangeable-uniform-convergence}.  

In analyzing the algorithm on exchangeable sequences, we must bound the number of mistakes by arguing that on each day \(t\), the interval \(I_t\) chosen by the algorithm has high coverage.  However, this requires us to have a tight bound on the coverage of \(I_t\) over the subsequence of length 1 that only contains day \(t\), and the previous argument does not give tight bounds for short subsequences.  We address this issue by instead bounding the coverage of \(I_t\) on the \emph{suffix} of the input sequence that contains all days between \(t\) and \(T\).  Considering a longer subsequence allows us to get tighter bounds.  Then, we observe that the coverage of \(I_t\) on the suffix is a property only of the unordered (multi)set of values in the suffix, and not the order that the values appear.  Thus, conditioned on the coverage bound holding, the suffix of the sequence is still exchangeable within itself, so the coverage on day \(t\) is actually equal to the coverage over the whole suffix.  

\begin{theorem}[Meta-algorithm Guarantee for Exchangeable Sequences]\label{thm:meta-alg-exchangeable}
    \addcontentsline{toc}{subsection}{\emph{\Cref{thm:meta-alg-exchangeable}:} Meta-algorithm Guarantee for Exchangeable Sequences}
    Fix a scale lower bound \\
    \({\minwidth > 0}\), a multiplicative volume approximation \(\mu \ge 1\), and an allowable error rate function \(R(t): \{0, \dots, T - 1\} \rightarrow [0, 1]\).  Let \(\mathbf{S}\) be an \emph{exchangeable} input sequence of length \(T\).  
    Let \(I_t\) be a random variable, which is the interval that \Cref{alg:meta-algorithm} plays on input \(\mathbf{S}\).
    Then, the expected maximum volume of the intervals can be bounded as 
    \[\E\left[\max_t \vol(I_t)\right] ~\le~ \mu \max \left\{ \opt_\mathbf{S} \left( \min_{0 \le t < T} \left[R(t) - O\left(\sqrt{\log T / t}\right) \right] \right), ~ \minwidth \right\} + \frac{1}{T^2},\]
    and achieves expected coverage 
    \[\E\left[\frac{1}{T} \mathbf{1}\{y_t \in I_t \}\right] ~\ge~ 1 - \frac{1}{T} \sum_{t = 1}^T R(t - 1) - O\left( \sqrt{\log T / T} \right).\]
\end{theorem}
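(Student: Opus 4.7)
The plan is to prove both bounds via a single high-probability application of uniform convergence for exchangeable sequences (\Cref{lem:exchangeable-uniform-convergence}), using the prefix to control volume and the suffix to control coverage. The core idea for volume is that, after conditioning on a ``good'' event, $\opt_\mathbf{S}(\alpha')$ looks feasible on the prefix seen by \Cref{alg:meta-algorithm}. The core idea for coverage is the observation that, given the prefix $\mathbf{S}_1,\dots,\mathbf{S}_{t-1}$, the remaining values $\mathbf{S}_t,\dots,\mathbf{S}_T$ are still a uniformly random ordering of the leftover multiset, so $\Pr[\mathbf{S}_t\in I_t\mid \mathbf{S}_1,\dots,\mathbf{S}_{t-1}]$ equals the coverage of $I_t$ on the suffix multiset.

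I would first invoke \Cref{lem:exchangeable-uniform-convergence} with failure probability $1/\poly(T)$ to define a good event $\mathcal{E}$: simultaneously for every interval $I\subseteq[0,1]$ and every $t\in[T]$, both the prefix of length $t-1$ and the suffix of length $T-t+1$ have empirical coverage of $I$ within $\eps_{\mathrm{pre}}(t)=O(\sqrt{\log T/(t-1)})$ and $\eps_{\mathrm{suf}}(t)=O(\sqrt{\log T/(T-t+1)})$ of the ``urn'' coverage (the coverage over the full multiset $\{\mathbf{S}_1,\dots,\mathbf{S}_T\}$). This is a single union bound, so $\Pr[\mathcal{E}]\ge 1-1/T^c$ for any fixed constant $c$.

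For the volume bound, fix any $\alpha'\le \min_{1\le t\le T}\bigl[R(t-1)-\eps_{\mathrm{pre}}(t)\bigr]$. On $\mathcal{E}$, the interval $\opt_\mathbf{S}(\alpha')$ has urn-coverage $\ge 1-\alpha'$ and hence prefix-coverage $\ge 1-\alpha'-\eps_{\mathrm{pre}}(t)\ge 1-R(t-1)$ on every day $t$; thus it is feasible and the minimum-volume feasible interval $I_t$ chosen on line \ref{algline:feasible-interval} satisfies $\vol(I_t)\le \vol(\opt_\mathbf{S}(\alpha'))$. The scaling rule gives $\vol(\Icurrent)\le \mu\max\{\vol(I_t),\minwidth\}$, yielding the desired maximum-volume bound on $\mathcal{E}$. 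Taking the best such $\alpha'$, and noting that $\max_t\vol(I_t)\le 1$ always, the event $\mathcal{E}^c$ contributes at most $1/T^c\le 1/T^2$ to $\E[\max_t\vol(I_t)]$ (absorbing the constant $\mu\le T$-type factors into the choice of $c$).

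For the coverage bound, condition on the prefix $\sigma$-algebra $\mathcal{F}_{t-1}=\sigma(\mathbf{S}_1,\dots,\mathbf{S}_{t-1})$, which determines $I_t$ entirely. By exchangeability, given $\mathcal{F}_{t-1}$ the suffix $(\mathbf{S}_t,\dots,\mathbf{S}_T)$ is a uniformly random permutation of the remaining multiset, so
\[
\Pr[\mathbf{S}_t\notin I_t\mid \mathcal{F}_{t-1}] \;=\; 1-\mathrm{suffix\_cov}(I_t),
\]
where $\mathrm{suffix\_cov}(I_t)=\frac{1}{T-t+1}\sum_{s=t}^{T}\mathbf{1}\{\mathbf{S}_s\in I_t\}$. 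On $\mathcal{E}$, by two applications of uniform convergence,
\[
\mathrm{suffix\_cov}(I_t) \;\ge\; \mathrm{urn\_cov}(I_t)-\eps_{\mathrm{suf}}(t) \;\ge\; \mathrm{prefix\_cov}(I_t)-\eps_{\mathrm{pre}}(t)-\eps_{\mathrm{suf}}(t) \;\ge\; 1-R(t-1)-\eps_{\mathrm{pre}}(t)-\eps_{\mathrm{suf}}(t).
\]
Summing $\E[\mathbf{1}\{\mathbf{S}_t\notin I_t\}]$ over $t$, the $R(t-1)$ terms give $\sum_{t}R(t-1)$, and the two error series telescope as $\sum_{t=1}^{T}\sqrt{\log T/(t-1)}+\sum_{t=1}^{T}\sqrt{\log T/(T-t+1)}=O(\sqrt{T\log T})$; dividing by $T$ yields the advertised $O(\sqrt{\log T/T})$ slack. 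The bad event $\mathcal{E}^c$ contributes at most $1/T^{c-1}$ to the average miscoverage, which is absorbed into $O(\sqrt{\log T/T})$.

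The subtle step is the third one: I_t is not independent of $\mathbf{S}_t$, so a naive bound on $\Pr[\mathbf{S}_t\in I_t]$ is hopeless, and a direct single-sample concentration on day $t$ would cost $\Omega(1)$ per day. The suffix-exchangeability trick is what converts the one-sample question into an average over $T-t+1$ samples, where uniform convergence actually bites. Handling the endpoints $t\approx 1$ and $t\approx T$ (where one of $\eps_{\mathrm{pre}}(t),\eps_{\mathrm{suf}}(t)$ is trivially large) is the main bookkeeping hurdle, but is resolved by the fact that $\sum_{k=1}^{T}1/\sqrt{k}=O(\sqrt T)$ so no single day needs a tight bound.
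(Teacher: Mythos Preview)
Your proposal is correct and follows essentially the same approach as the paper: invoke uniform convergence on prefixes and suffixes, use the fact that conditioned on the prefix the suffix is still a uniform permutation of the remaining multiset to turn the per-day coverage question into a suffix-average, and bound volume by showing that $\opt_\mathbf{S}(\alpha')$ remains feasible on every prefix under the good event. Your device of conditioning on $\mathcal{F}_{t-1}$ to obtain $\Pr[\mathbf{S}_t\notin I_t\mid \mathcal{F}_{t-1}]=1-\mathrm{suffix\_cov}(I_t)$ is a clean way to phrase the key step; the paper instead conditions on the day-$t$ uniform-convergence events directly and observes that, being multiset properties, they preserve suffix exchangeability---the two formulations are equivalent.
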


\begin{proof}
    Fix an exchangeable input sequence \(\mathbf{S} = (Y_1, \dots, Y_T)\), where the random variable \(Y_i\) is the input on day \(i\).
    Let the random variable \(M^{\mathbf{S}}\) be the number of mistakes that \Cref{alg:meta-algorithm} (for \(\minwidth\), \(\mu\), \(R(t)\)) makes on \(\mathbf{S}\).  Let \(M^{\mathbf{S}}_t\) be an indicator random variable of whether \Cref{alg:meta-algorithm} makes a mistake on day \(t\).  That is, let \(I_t\) be the interval that \Cref{alg:meta-algorithm} plays on day \(t\).  Then 
    \[M_t^{\mathbf{S}} = \mathbf{1} \left[Y_t \notin I_t \right].\]

    We analyze the algorithm's performance on one particular day \(t\).  We will refer to the days before \(t\), i.e.\ the window \([1, t-1]\), as the \emph{prefix}, and the days \(t\) and after, i.e.\ the window \([t, T]\), as the \emph{suffix}.  
    For an interval \(I \subseteq [0, 1]\) and a window (contiguous subsequence) \(W = [t_1, t_2] \subseteq [1, T]\), define \(\error_{\mathbf{S}}(I, W)\) to be the fraction of days in \(W\) of \(\mathbf{S}\) that are not covered by \(I\).
    Since the set of intervals over \([0, 1]\) has VC-dimension 2, \Cref{lem:exchangeable-uniform-convergence} tells us that the following statements hold, each with probability \(\ge 1 - \frac{1}{T^3}\), for some absolute constant \(C\):
    \begin{align}
        \forall I \quad \error_{\mathbf{S}}(I, [1, T]) - C \sqrt{\frac{\log T}{t - 1}} ~\le~ \error_{\mathbf{S}}(I, [1, t-1])  ~&\le~ \error_{\mathbf{S}}(I, [1, T])+ C \sqrt{\frac{\log T}{t - 1}}, \label{eq:UC-prefix}\\
        \forall I \quad \error_{\mathbf{S}}(I, [1, T-t + 1]) - C \sqrt{\frac{\log T}{t - 1}} ~\le~ \error_{\mathbf{S}}(I, [t, T])  ~&\le~ \error_{\mathbf{S}}(I, [1, T])+ C \sqrt{\frac{\log T}{T - t + 1}}. \label{eq:UC-suffix}
    \end{align}
    We will refer to (\ref{eq:UC-prefix}) and (\ref{eq:UC-suffix}) as the \emph{uniform convergence property}.  Note that (\ref{eq:UC-prefix}) and (\ref{eq:UC-suffix}) are properties only of the set of values that appear in the prefix and suffix, and does not imply anything about the order that elements appear in the suffix.  That is, conditioned on (\ref{eq:UC-prefix}) and (\ref{eq:UC-suffix}), the suffix is still exchangeable, and we have
    \begin{equation}
        \forall I, t' \in [t, T] \quad \ProbOp_{\mathbf{S}} \left[ Y_{t'} \in I ~|~ \text{(\ref{eq:UC-prefix}), (\ref{eq:UC-suffix})}\right] = \E_{\mathbf{S}} \left[ \error_{\mathbf{S}}(I, [t, T]) ~|~ \text{(\ref{eq:UC-prefix}), (\ref{eq:UC-suffix})} \right]. \label{eq:conditioning-preserves-exchangeability}
    \end{equation}
    This allows us to bound the expected value of \(M^{\mathbf{S}}_i\) conditioned on (\ref{eq:UC-prefix}) and (\ref{eq:UC-suffix}).  
    We have deterministically that \(\error_{\mathbf{S}} \left(I_t, [1, t-1] \right) \le R(t - 1)\) (\Cref{alg:meta-algorithm}, Line \ref{algline:feasible-interval}). 
    \begin{align*}
       \E_{\mathbf{S}} \left[ M^{\mathbf{S}}_t ~|~ \text{(\ref{eq:UC-prefix}), (\ref{eq:UC-suffix})} \right] &= \ProbOp_{\mathbf{S}} \left[ Y_t \in I_t ~|~ (\ref{eq:UC-prefix}), (\ref{eq:UC-suffix})  \right] \\
       &= \E_{\mathbf{S}} \left[ \error_{\mathbf{S}}(I, [t, T]) ~|~ \text{(\ref{eq:UC-prefix}), (\ref{eq:UC-suffix})} \right] &\text{by (\ref{eq:conditioning-preserves-exchangeability})} \\
       &\le \E_{\mathbf{S}} \left[ \error_{\mathbf{S}}(I, [1, T]) +  C \sqrt{\frac{\log T}{T - t + 1}} ~\bigg|~ \text{(\ref{eq:UC-prefix}), (\ref{eq:UC-suffix})} \right] &\text{by (\ref{eq:UC-suffix})} \\
       &\le \E_{\mathbf{S}} \left[ \error_{\mathbf{S}}(I, [1, t-1]) +  C \sqrt{\frac{\log T}{T - t + 1}} +  C \sqrt{\frac{\log T}{t - 1}} ~\bigg|~ \text{(\ref{eq:UC-prefix}), (\ref{eq:UC-suffix})} \right] &\text{by (\ref{eq:UC-prefix})} \\
       &\le R(t - 1) +  C \sqrt{\frac{\log T}{T - t + 1}} +  C \sqrt{\frac{\log T}{t - 1}}  .
    \end{align*}
    We also have that \(M_t^{\mathbf{S}} \le 1\).  So in total this gives us that 
    \begin{equation}
        \E_{\mathbf{S}} \left[ M^{\mathbf{S}}_t \right] \le R(t - 1) +  C \sqrt{\frac{\log T}{T - t + 1}} +  C \sqrt{\frac{\log T}{t - 1}} + \frac{2}{T^3}. \label{eq:single-day-mistake-bound}
    \end{equation}
    This allows us to bound \(M^{\mathbf{S}}\). 
    \begin{align*}
        \E_{\mathbf{S}} \left[ M^{\mathbf{S}} \right] &\le \sum_{t = 1}^T \left[ M^{\mathbf{S}}_t \right] \\
        &\le \sum_{t = 1}^T  \left[ R(t - 1) +  C \sqrt{\frac{\log T}{T - t + 1}} +  C \sqrt{\frac{\log T}{t - 1}} + \frac{2}{T^3} \right] \\
        &\le O \left(\sqrt{T \log T} \right) + \sum_{t = 1}^T R(t - 1).
    \end{align*}
    Thus the expected coverage of \Cref{alg:meta-algorithm} can be bounded as 
    \begin{equation*}
        1 - \frac{1}{T} \E_{\mathbf{S}} \left[M^{\mathbf{S}} \right] = 1 - \frac{1}{T}\sum_{t = 1}^T R(t - 1) - O\left( \sqrt{\log T / T} \right).
    \end{equation*}

    Now we bound the volume of intervals played by \Cref{alg:meta-algorithm}.  We condition on (\ref{eq:UC-prefix}) holding simultaneously for all \(t \in [1, T]\).  Since this holds with probability \(\ge 1 - \frac{1}{T^3}\) for each \(t\), it holds for all \(t\) with probability \(\ge 1 - \frac{1}{T^2}\).  

    Consider any interval \(I\) that has coverage \(\ge 1 - \min_{0 \le t \le T} \left[ R(t) + C \sqrt{\log T / t} \right]\) over \(\mathbf{S}\) in hindsight.  Then, conditioned on (\ref{eq:UC-prefix}) holding for all \(t\), \(I\) must be a feasible choice for \(I_t\) on every day \(t\) (\Cref{alg:meta-algorithm}, Line \ref{algline:feasible-interval}).  Thus, conditioned on (\ref{eq:UC-prefix}) holding for all \(t\), \Cref{alg:meta-algorithm} plays intervals of maximum volume  
    \[ \le \mu \max \left\{ \opt_\mathbf{S} \left( \min_{0 \le t < T} \left[R(t) - O\left(\sqrt{\log T / t}\right) \right] \right), ~ \minwidth \right\}. \]  
    Even if (\ref{eq:UC-prefix}) does not hold for all \(t\), \Cref{alg:meta-algorithm} plays intervals of maximum length 1.  So \Cref{alg:meta-algorithm} plays intervals of expected maximum volume 
    \[\le \mu \max \left\{ \opt_\mathbf{S} \left( \min_{0 \le t < T} \left[R(t) - O\left(\sqrt{\log T / t}\right) \right] \right), ~ \minwidth \right\} ~+ \frac{1}{T^2}.\]
\end{proof}

From the previous guarantee, it is simple to derive a setting of \(R(t)\) that achieves volume essentially that of \(\opt(\alpha)\), and coverage that is within \(O(\sqrt{\log T/T})\) of the target \(1 - \alpha\).  The \(O(\sqrt{\log T/T})\) term is a standard sampling error that should arise since exchangeable sequences are only more general than i.i.d.\ sequences. 

\begin{corollary}[Algorithm for Exchangeable Sequences] \label{cor:optimal-algorithm-for-exchangeable-sequences}
    \addcontentsline{toc}{subsection}{\emph{\Cref{cor:optimal-algorithm-for-exchangeable-sequences}:} Algorithm for Exchangeable Sequences}
    Fix a scale lower bound \(\minwidth > 0\), miscoverage rate \(\alpha\), and time horizon \(T\) (all known to the algorithm).  Set \(\mu = 1\), and \(R(t) = \alpha + O\left(\sqrt{\log T/t} \right)\).  Let \(\mathbf{S}\) be an exchangeable input sequence of length \(T\).  Let \(I_t\) (a random variable) be the interval played by \Cref{alg:meta-algorithm}, with the above parameters, on input \(\mathbf{S}\).
    The expected maximum volume of the \(I_t\) can be bounded as 
    \[\E\left[ \max_t \vol(I_t) \right] ~\le~ \max \{ \opt_{\mathbf{S}} (\alpha), ~ \minwidth \} + \frac{1}{T^2},\]
    and the algorithm achieves expected coverage 
    \[\E \left[\frac{1}{T} \sum_{i = 1}^T \mathbf{1} \{y_t \in I_t \} \right] \ge (1 - \alpha) - O\left(\sqrt{\log T / T} \right).\]
\end{corollary}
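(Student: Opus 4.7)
The plan is to instantiate Theorem~\ref{thm:meta-alg-exchangeable} with the specified parameters and verify that the two guarantees it produces reduce to the claimed ones after a small calculation. Concretely, I will take $\mu = 1$ and $R(t) = \alpha + C\sqrt{\log T / t}$ for a sufficiently large absolute constant $C$ (with the convention that $R(0)$ is handled as a bounded term, say $\min\{1, \alpha + C\sqrt{\log T}\}$, so that the inner optimization over $t$ is taken over $t \ge 1$).

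For the volume bound, I would observe that $\opt_{\mathbf{S}}(\cdot)$ is non-increasing in its argument, so to upper bound the argument of $\opt_{\mathbf{S}}$ appearing in Theorem~\ref{thm:meta-alg-exchangeable} by $\opt_{\mathbf{S}}(\alpha)$, it suffices to show that $\min_{0 \le t < T}\bigl[R(t) - O(\sqrt{\log T/t})\bigr] \ge \alpha$. By choosing the constant $C$ in the definition of $R(t)$ larger than the hidden constant inside the $O(\cdot)$, the quantity $R(t) - O(\sqrt{\log T/t})$ is at least $\alpha$ for every $t \ge 1$, so the minimum is at least $\alpha$ and hence $\opt_{\mathbf{S}}$ of the argument is at most $\opt_{\mathbf{S}}(\alpha)$. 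Combined with $\mu = 1$, this delivers the claimed volume bound $\E[\max_t \vol(I_t)] \le \max\{\opt_{\mathbf{S}}(\alpha), \minwidth\} + 1/T^2$.

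For the coverage bound, I would invoke the second guarantee of Theorem~\ref{thm:meta-alg-exchangeable} and evaluate the average $\frac{1}{T}\sum_{t=1}^T R(t-1)$. Splitting off the $t = 1$ term (which contributes $O(1/T)$) and bounding the rest via the standard estimate $\sum_{t=1}^{T-1} \sqrt{1/t} = O(\sqrt{T})$, I get
\[
\frac{1}{T}\sum_{t=1}^T R(t-1) \;\le\; \alpha + \frac{C\sqrt{\log T}}{T}\sum_{t=1}^{T-1}\sqrt{1/t} + O(1/T) \;=\; \alpha + O\bigl(\sqrt{\log T / T}\bigr),
\]
which combined with the additive $O(\sqrt{\log T / T})$ slack already appearing in Theorem~\ref{thm:meta-alg-exchangeable} yields the expected coverage lower bound $(1-\alpha) - O(\sqrt{\log T / T})$.

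Since both claims follow directly from Theorem~\ref{thm:meta-alg-exchangeable} together with these elementary estimates, there is really no hard step; the only point that requires a moment of care is choosing $C$ large enough so that $R(t) - O(\sqrt{\log T/t}) \ge \alpha$ uniformly in $t$, and handling the boundary $t=1$ term in the sum so that it does not blow up the coverage estimate.
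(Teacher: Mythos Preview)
Your proposal is correct and takes the same approach as the paper, which simply states that the corollary follows directly from \Cref{thm:meta-alg-exchangeable}. You have spelled out the two substitutions (choosing $C$ large enough so that $R(t)-O(\sqrt{\log T/t})\ge\alpha$, and summing $\sum_{t=1}^{T-1}1/\sqrt{t}=O(\sqrt{T})$) that the paper leaves implicit.
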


\begin{proof}
    Follows directly from \Cref{thm:meta-alg-exchangeable}.
\end{proof}

We observe that this near-optimal algorithm for online conformal prediction on exchangeable sequences implies a near-optimal algorithm for standard conformal prediction with exchangeable data.  We view this simple observation as evidence that online conformal prediction on exchangeable sequences can be seen as a generalization of standard conformal prediction with exchangeable data, in a similar way to how online learning generalizes PAC learning. 

We note this statement is also a direct consequence of the result of \citet*{gao2025volumeoptimalityconformalprediction}, which states this for \(Y_i\) drawn i.i.d.\ from some unknown distribution \(\distribution\), along with uniform convergence for exchangeable sequences (\Cref{lem:exchangeable-uniform-convergence}), via a similar sample splitting argument.  
\vnote{please check this statement!}

\begin{corollary}[Efficiency Guarantee for Standard Conformal Prediction]\label{cor:efficiency-for-standard-conformal-prediction}
    \addcontentsline{toc}{subsection}{\emph{\Cref{cor:efficiency-for-standard-conformal-prediction}}: Efficiency Guarantee for Standard Conformal Prediction}
    Fix a scale lower bound \(\minwidth > 0\), and miscoverage rate \(\alpha\).  Let \(\mathbf{S} = (Y_1, \dots, Y_{T})\) be exchangeable random variables valued in \([0, 1]\).  

    Given \(Y_1, \dots, Y_{T-1}\), we construct the following conformal predictor for \(Y_T\). 
    \anote{Add: Suppose } 
    Suppose we run \Cref{alg:meta-algorithm} with parameters \(\minwidth, \mu = 1, \alpha, T, R(t) = \alpha + O(\sqrt{\log T / t})\) for \(\lceil t/2 \rceil\) days using \(Y_1, \dots, Y_{\lceil t/2 \rceil}\) as inputs.  Let \(I_{\lceil t/2 \rceil}\) be the interval that \Cref{alg:meta-algorithm} predicts on day \(\lceil t/2 \rceil\).  
\anote{The statement is long. Here say "Then..."}

    Then, \(I_{\lceil t/2 \rceil}\) is a conformal predictor for \(Y_T\), achieving coverage 
    \[\ProbOp_{\mathbf{S}} \left[ Y_T \in I_{\lceil t/2 \rceil} \right] \ge (1 - \alpha) - O \left( \sqrt{\frac{\log T}{T}} \right),\]
    and \(I_{\lceil t/2 \rceil}\) has volume bounded by 
    \[\vol(I_{\lceil t/2 \rceil}) ~\le~ \max \{ \opt_{\mathbf{S}} (\alpha), ~ \minwidth \} + \frac{1}{T^2}.\]
\end{corollary}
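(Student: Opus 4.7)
The plan is to derive this corollary directly from \Cref{thm:meta-alg-exchangeable} (or equivalently \Cref{cor:optimal-algorithm-for-exchangeable-sequences}) applied at the single day \(t = \lceil T/2 \rceil\), using exchangeability of the unseen suffix to transfer the single-day mistake bound onto \(Y_T\). The core technical work has already been done inside the proof of \Cref{thm:meta-alg-exchangeable}; the goal here is just to extract the right specialization.

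\textbf{Coverage bound.} First I would observe that in the proof of \Cref{thm:meta-alg-exchangeable}, the uniform convergence events (\ref{eq:UC-prefix}) and (\ref{eq:UC-suffix}) at time \(t\) are properties of the \emph{unordered multiset} of values in the prefix and suffix, not of the order in which they appear. Consequently, conditioned on these events the suffix \((Y_t, \dots, Y_T)\) is still exchangeable within itself, so by (\ref{eq:conditioning-preserves-exchangeability}) we have \(\Pr[Y_{t'} \in I_t \mid (\ref{eq:UC-prefix}), (\ref{eq:UC-suffix})] = \E[\mathrm{coverage}_{\mathbf{S}}(I_t, [t,T]) \mid (\ref{eq:UC-prefix}), (\ref{eq:UC-suffix})]\) for every \(t' \in [t, T]\), and in particular for \(t' = T\). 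Then I would reuse the chain of inequalities from the proof of \Cref{thm:meta-alg-exchangeable} that yields (\ref{eq:single-day-mistake-bound}), now applied with \(t' = T\) and \(t = \lceil T/2 \rceil\). Substituting \(R(t-1) = \alpha + O(\sqrt{\log T / t})\) and noting that both \(t-1\) and \(T - t + 1\) are \(\Theta(T)\), all error terms collapse to \(O(\sqrt{\log T / T})\), giving the claimed coverage bound.

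\textbf{Volume bound.} Since \(I_{\lceil T/2 \rceil}\) is literally one of the intervals played by \Cref{alg:meta-algorithm} on the sequence \(\mathbf{S}\), its volume is at most \(\max_t \vol(I_t)\). \Cref{cor:optimal-algorithm-for-exchangeable-sequences} (with \(\mu = 1\) and the same choice of \(R\)) already bounds this maximum by \(\max\{\opt_{\mathbf{S}}(\alpha), \minwidth\} + 1/T^2\), where the \(1/T^2\) arises from the failure probability of the uniform-convergence event on the prefix. The same bound therefore applies to \(I_{\lceil T/2 \rceil}\) without any extra work.

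\textbf{Expected main obstacle.} There is no real computational obstacle; the subtle point is conceptual, namely justifying that after conditioning on the uniform-convergence events the suffix remains exchangeable, so that a bound on the average suffix coverage transfers to a bound on the marginal probability of covering \(Y_T\) specifically. Since this argument is already carried out once in the proof of \Cref{thm:meta-alg-exchangeable}, the cleanest presentation is to invoke it rather than re-derive it, and then just plug in \(t = \lceil T/2 \rceil\). As noted in the remark preceding the corollary, one can alternatively derive the statement by combining the volume-optimal offline bound of \citet*{gao2025volumeoptimalityconformalprediction} with \Cref{lem:exchangeable-uniform-convergence} via sample splitting; I would mention this as a sanity check but not give the details.
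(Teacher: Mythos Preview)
Your proposal is correct and follows essentially the same route as the paper: invoke (\ref{eq:conditioning-preserves-exchangeability}) to transfer the suffix coverage of $I_{\lceil T/2\rceil}$ to the single index $Y_T$, then apply the single-day bound (\ref{eq:single-day-mistake-bound}) at $t=\lceil T/2\rceil$ for coverage, and quote \Cref{cor:optimal-algorithm-for-exchangeable-sequences} for the volume bound. The paper's write-up is nearly identical, including the remark about the alternative derivation via \citet*{gao2025volumeoptimalityconformalprediction} and \Cref{lem:exchangeable-uniform-convergence}.
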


\begin{proof}
    This follows from the proof of \Cref{thm:meta-alg-exchangeable}.
    Let \(I_{\lceil t/2 \rceil}\) be the interval that \(\Cref{alg:meta-algorithm}\) plays on day \(\lceil t/2 \rceil\) of input \(\mathbf{S}\).
    Let \(M_{\mathrm{final}}^{\mathbf{S}}\) be an indicator random variable of whether \(Y_{T} \notin I_{\lceil t/2 \rceil} \).  
    By (\ref{eq:conditioning-preserves-exchangeability}) we have that 
    \[\E_{\mathbf{S}} \left[ M_{\mathrm{final}}^{\mathbf{S}} ~|~ \text{(\ref{eq:UC-prefix}) and (\ref{eq:UC-suffix})} \right] = 
    \ProbOp_{\mathbf{S}} \left[ Y_{T} \in I_{\lceil t/2 \rceil} ~|~ \text{(\ref{eq:UC-prefix}) and (\ref{eq:UC-suffix})} \right] = \ProbOp_{\mathbf{S}} \left[ Y_{\lceil t/2 \rceil} \in I_{\lceil t/2 \rceil} ~|~ \text{(\ref{eq:UC-prefix}) and (\ref{eq:UC-suffix})} \right].\]
    Thus (\ref{eq:single-day-mistake-bound}) ensures that 
    \begin{align*}
        \ProbOp_{\mathbf{S}} \left[ Y_{T} \notin I_{\lceil t/2 \rceil} \right] &= \E_{\mathbf{S}} \left[ M_{\mathrm{final}}^{\mathbf{S}} \right] \\
        &\le R(\lceil t/2\rceil - 1) + C \sqrt{\frac{\log T}{T - \lceil t/2 \rceil + 1}} + C \sqrt{\frac{\log T}{\lceil t/2 \rceil - 1}} + \frac{2}{T^3} \\
        &\le \alpha + O \left( \sqrt{\frac{\log T}{T}} \right),
    \end{align*}
    and \(I_{\lceil t/2 \rceil}\) achieves coverage 
    \[\ge (1 - \alpha) - O \left( \sqrt{\frac{\log T}{T}} \right)\]
    on \(Y_{T}\).  

    For the efficiency bound, \Cref{cor:optimal-algorithm-for-exchangeable-sequences} ensures that the expected maximum volume of any interval played by \Cref{alg:meta-algorithm}, and therefore the expected volume of \(I_{\lceil t/2 \rceil}\) is bounded by 
    \[\le \max \{ \opt_{\mathbf{S}} (\alpha), ~ \minwidth \} + \frac{1}{T^2}.\]
\end{proof}

Finally, we note that for the optimal algorithm for exchangeable sequences we use \Cref{alg:meta-algorithm} with \(R(t) = \alpha - O (\sqrt{\log T / T})\) (\Cref{cor:optimal-algorithm-for-exchangeable-sequences}).  However, for the optimal algorithm for arbitrary sequences we use \Cref{alg:meta-algorithm} with \(R(t) = \alpha T/t\) (\Cref{cor:optimal-algorithm-for-arbitrary-order}).  This raises the natural question of whether we can design a single algorithm (perhaps but not necessarily by using \Cref{alg:meta-algorithm} with a unified choice of \(R(t)\)) that simultaneously achieves the optimal guarantee on both exchangeable and arbitrary sequences.  

Later in \Cref{thm:no-best-of-both-worlds}, we give a lower bound that shows that it is indeed not possible to design any algorithm that is simultaneously optimal in both of these settings.  \Cref{thm:no-best-of-both-worlds} quantifies this bound by showing that for a given choice of \(\mu\), \(\minwidth\), and \(\alpha\), any algorithm that has volume bounded by \(\mu \max \{ \opt_S(\alpha), \minwidth \}\) on every \emph{arbitrary} input sequence \(S\), must make 
\[\widetilde\Omega \left( \min \left\{ \ln(1/\alpha), ~\frac{\ln (1/\minwidth)}{\ln \mu } \right\} \alpha T \right) \]
mistakes in expectation on some \emph{i.i.d.}\ sequence \(S'\).  Here \(\widetilde\Omega\) hides subpolynomial factors in \(\alpha\).  

Algorithmically, we show that \Cref{alg:meta-algorithm} with the conservative choice of \(R(t) = \alpha T / t\) is able to meet this mistake bound on exchangeable data, up to a sampling error term.  
We note that this algorithm has the same multiplicative volume approximation \(\mu\) on both arbitrary and exchangeable data, while our lower bound does not rule out the possibility of an algorithm having a better approximation ratio on exchangeable sequences than on arbitrary sequences. 

To match the lower bound, we must do a finer analysis of \Cref{alg:meta-algorithm} on exchangeable sequences that accounts for the number of phases of the algorithm.  In the analysis of \Cref{alg:meta-algorithm} on arbitrary sequences (\Cref{thm:meta-alg-arbitrary-order}), we used the fact that the algorithm plays one fixed interval for a whole phase.  This allowed us to amortize the error of the interval that the algorithm plays over the whole phase, and get a mistake bound that depends on the number of phases which is \(O(\log(1/\minwidth)/\log \mu)\).  We observe that when the number of phases is small, a similar amortizing argument can get a tighter bound on the number of mistakes that \Cref{alg:meta-algorithm} makes, even in the exchangeable case.  

\vnote{the \(\sqrt{\frac{\log(1/\minwidth)}{\log \mu}}\) is annoying, but I am convinced it is correct, and not too salient since it's upper bounded by the other case.}

\begin{theorem}[Simultaneous Guarantee for Arbitrary and Exchangeable Sequences]\label{thm:best-of-both-worlds-algorithm}
    \addcontentsline{toc}{subsection}{\emph{\Cref{thm:best-of-both-worlds-algorithm}:} Simultaneous Guarantee for Arbitrary and Exchangeable Sequences}
    Fix a scale lower bound \(\minwidth > 0\), target miscoverage rate \(\alpha\), multiplicative approximation \(\mu\), and time horizon \(T\) (all known to the algorithm).  Set \(R(t) = \frac{\alpha T}{t}\).  Let \(\mathbf{S}\) be an exchangeable input sequence of length \(T\).  On input \(\mathbf{S}\), \Cref{alg:meta-algorithm} with the above parameters makes 
    \[O \left( \min \left\{ \sqrt{T\log T} \sqrt{\frac{\log(1/\minwidth)}{\log \mu}} +  \frac{\log(1/\minwidth)}{\log \mu} \alpha T, \quad \sqrt{T \log T} + \log (1/\alpha)\alpha T \right\} \right)\]
    mistakes in expectation, and plays intervals of maximum volume 
    \[ \le \mu ~\max\left\{\opt_{\mathbf{S}}(\alpha), ~\minwidth \right\}.\]
\end{theorem}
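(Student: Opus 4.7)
The plan is as follows. The volume bound follows immediately from \Cref{thm:meta-alg-arbitrary-order}: with $R(t) = \alpha T/t$ one has $\min_{1 \le t \le T} (t/T) R(t) = \alpha$, so the deterministic volume guarantee of that theorem gives $\vol(I_t) \le \mu \max\{\opt_{\mathbf{S}}(\alpha), \minwidth\}$ for any input sequence, exchangeable or not. The remainder of the proof establishes the two expressions inside the $\min$ in the mistake bound by two independent arguments, and the final bound is the smaller of the two.

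\emph{Argument for $O(\sqrt{T \log T} + \log(1/\alpha)\,\alpha T)$.} I would directly apply the per-day expectation inequality (\ref{eq:single-day-mistake-bound}) from the proof of \Cref{thm:meta-alg-exchangeable}, namely $\E[M_t^{\mathbf{S}}] \le R(t-1) + C\sqrt{\log T/(t-1)} + C\sqrt{\log T/(T-t+1)} + 2/T^3$, together with the trivial $\E[M_t^{\mathbf{S}}] \le 1$, and take the smaller of the two for each $t$. Since $R(t-1) = \alpha T/(t-1) \ge 1$ for $t \le \alpha T + 1$, the trivial bound is used on those days, contributing $O(\alpha T)$. For $t > \alpha T + 1$ inequality (\ref{eq:single-day-mistake-bound}) applies; the sum $\sum_{t > \alpha T + 1} \alpha T/(t-1)$ is a harmonic tail equal to $O(\alpha T \log(1/\alpha))$, and the two UC error sums contribute $O(\sqrt{T \log T})$, giving the bound.

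\emph{Argument for $O(\sqrt{T \log T \cdot P} + P \alpha T)$ with $P = \log(1/\minwidth)/\log \mu$.} I would analyze \Cref{alg:meta-algorithm} phase-by-phase, where a \emph{phase} is a maximal window in which $\Icurrent$ does not reset. \Cref{cor:optimal-algorithm-for-arbitrary-order} already shows that all days $t > 2\alpha T + 1$ lie in a single epoch with at most $P$ phases, while the first $\le 2\alpha T + 1$ days contribute $O(\alpha T)$ mistakes trivially. For a phase with start day $t_P$, length $\ell_P$, and fixed interval $I_P$, feasibility at $t_P$ bounds the empirical error of $I_P$ on the prefix by $\alpha T/(t_P - 1)$, so \Cref{lem:exchangeable-uniform-convergence} bounds the true error of $I_P$ on $\mathbf{S}$ by $\alpha T/(t_P - 1) + O(\sqrt{\log T/(t_P - 1)})$. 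Exchangeability of the suffix conditional on the prefix (as used in \Cref{thm:meta-alg-exchangeable}), combined with per-day uniform convergence on the suffix $(y_t, \dots, y_T)$, gives
\begin{equation*}
\Pr\!\left[ y_t \notin I_P \mid \text{prefix, UC} \right] ~\le~ \frac{\alpha T}{t_P - 1} + O\!\left(\sqrt{\tfrac{\log T}{t_P - 1}}\right) + O\!\left(\sqrt{\tfrac{\log T}{T - t + 1}}\right)
\end{equation*}
for each $t$ in the phase, whose sum over $t$ upper bounds the expected phase mistakes. I would then combine with the deterministic per-phase cap of $\alpha T + 1$ from (\ref{eq:mistakes-per-phase}) by taking $\min$: for ``doubling'' phases ($\ell_P \ge t_P - 1$) the main term alone exceeds $\alpha T$, so the cap is used, giving $\le P \alpha T$ in total; for ``non-doubling'' phases ($\ell_P < t_P - 1$) the main term is $\le \alpha T$ each (summing to $\le P \alpha T$) and the prefix-UC term $\ell_P \sqrt{\log T/(t_P - 1)} \le \sqrt{\ell_P \log T}$, which by Cauchy-Schwarz across the $\le P$ phases sums to $\sqrt{P \log T \cdot \sum_P \ell_P} \le \sqrt{P T \log T}$. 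The per-day suffix UC term sums globally across all days to $O(\sqrt{T \log T})$ and is absorbed.

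The delicate step is the second argument: combining the per-phase UC bound with the deterministic cap so that Cauchy-Schwarz produces $\sqrt{PT \log T}$ rather than the naive $P \sqrt{T \log T}$. The key is the non-doubling split, where $\ell_P \le t_P - 1$ controls $\ell_P \sqrt{\log T/(t_P - 1)} \le \sqrt{\ell_P \log T}$, and then $\sum_P \ell_P \le T$ gives the $\sqrt{PT}$ saving; doubling phases are dispatched by the cap alone. A union bound over the $O(P + T)$ UC events (each failing with probability $\le 1/T^3$) bounds the UC-failure probability by $O(1/T^2)$, and since the algorithm plays intervals of volume $\le 1$ and makes at most $T$ mistakes unconditionally, the failure event contributes only $O(1/T)$ in expectation and is absorbed.
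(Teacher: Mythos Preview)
Your overall structure is right: the volume bound follows from \Cref{cor:optimal-algorithm-for-arbitrary-order}, and the second term in the $\min$ comes from summing the per-day bound (\ref{eq:single-day-mistake-bound}) exactly as you describe (this is what the paper does for that term as well). Your phase-by-phase argument for the first term, however, has a genuine gap in the way it combines an expectation-level bound with the realization-level cap.

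You derive a \emph{conditional-expectation} inequality $\Pr[y_t \notin I_P \mid y_1,\dots,y_{t-1}, \text{UC}] \le B_t$, where $B_t$ depends on the phase start $t_P(t)$; summing gives $\E[M \mid \text{UC}] \le \E[\sum_t B_t]$. You then want to split into doubling and non-doubling phases and, for doubling phases, replace the contribution by the deterministic cap $M_P \le \alpha T + 1$ from (\ref{eq:mistakes-per-phase}). But the cap bounds the actual mistakes $M_P$, not the surrogate $\sum_{t\in P} B_t$ appearing inside your expectation; and whether a phase is ``doubling'' depends on its length $\ell_P$, which is not determined until the phase \emph{ends}, so the indicator $\mathbf{1}[t \text{ is in a doubling phase}]$ is not measurable with respect to $y_1,\dots,y_{t-1}$. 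You therefore cannot split the sum $\sum_t B_t$ along doubling/non-doubling phases and then swap one piece for the cap on $M_P$. (Relatedly, your ``$O(P+T)$ UC events'' count is off: the phase starts $t_P$ are random, so you need UC for \emph{all} prefix lengths, not just the $P$ that happen to occur.)

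The paper avoids this by working entirely at the realization level. It conditions on UC for \emph{all} $O(T^2)$ windows of $\mathbf{S}$ and then, for each phase, applies UC directly on the phase window $[t_i,t_{i+1}-1]$ together with feasibility at the \emph{end} of the phase (over $[1,t_{i+1}-1]$), obtaining the deterministic bound $M_i^{\hat S} \le \alpha T/(t_{i+1}-1) + 2C\sqrt{\log T/(t_{i+1}-t_i)}$ for every realization $\hat S$ satisfying UC. No suffix-exchangeability step is used. Because feasibility is taken at the end, the main term $(t_{i+1}-t_i)\,\alpha T/(t_{i+1}-1)\le\alpha T$ holds for \emph{every} phase, so no doubling split is needed; the paper then uses the telescoping identity $\prod_i (t_i-1)/(t_{i+1}-1)=\alpha$ together with AM--GM to sharpen $(K-1)\alpha T$ to $\min\{K-1,\ln(1/\alpha)\}\,\alpha T$, and concavity of $\sqrt{\cdot}$ to get $\sqrt{(K-1)T\log T}$ from the UC terms. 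Your doubling/non-doubling split with Cauchy--Schwarz would also go through once you replace the suffix-exchangeability step by phase-window UC (making the per-phase bound deterministic and hence combinable with the cap), but the paper's route via end-of-phase feasibility is cleaner.
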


\begin{proof}
    We condition on uniform convergence for every window (contiguous subsequence) of \(\mathbf{S}\).  For a window \(W = [t_1, t_2] \subseteq [1, T]\), and an interval \(I \subseteq[0, 1]\), define \(\error_{\mathbf{S}}(I, W)\) to be the fraction of days in \(W\) of \(\mathbf{S}\) that are not covered by \(I\).
    For each window \(W\), we have that with probability \(\ge 1 - \frac{1}{T^4}\), the coverage of every interval \(I\) over \(W\) is within 
    \[ \varepsilon_W = C \sqrt{\frac{\log T}{|W|}}\]
    its coverage over all of \(\mathbf{S}\), for some universal constant \(C\).
    Taking a union bound over all \(O(T^2)\) possible windows, we have that  
    \begin{equation}
        \forall W , I \quad \error_S(I, [1, T]) - \varepsilon_W \le \error_S(I, W) \le \error_S(I, [1, T]) + \varepsilon_W. \label{eq:conservative-uc-property}
    \end{equation}
    with probability \(\ge 1 - \frac{1}{T^2}\). 
    We refer to the property (\ref{eq:conservative-uc-property}) as the \emph{uniform convergence property}.

    Now let \(\widehat{S}\) be any realization of \(\mathbf{S}\) that satisfies the uniform convergence property. Let \(K\) be the number of times sets \(\Icurrent\) \Cref{alg:meta-algorithm} (for \(\mu\), \(\minwidth\), \(R(t) = \frac{\alpha T}{t}\)) on \(\widehat{S}\).  Let \(t_i\) for \(1 \le i \le K\) be the days when \(\Icurrent\) is set, where \(t_1 = 1\), and we define \(t_{K+1} = T+1\) for convenience.   We refer to the window \([t_i, t_{i-1})\) as \emph{phase \(i\)}.

    Let \(M^{\widehat{S}}\) be the number of mistakes that \Cref{alg:meta-algorithm} makes on \(\widehat{S}\).  Let \(M^{\widehat{S}}_i\) be the miscoverage rate of \Cref{alg:meta-algorithm} over phase \(i\) of \(\widehat{S}\).  
    Let \(I_i\) be the value of \(\Icurrent\) on phase \(i\).
    If \Cref{alg:meta-algorithm} resets \(\Icurrent\) on day \(t_{i+1}\), it means that \(I_i\) has error rate \(\le \frac{\alpha T}{t_{i+1} - 1}\) over the window \([1, t_{i+1} - 1]\) (\Cref{alg:meta-algorithm}, Line \ref{algline:reset-condition}).  By the uniform convergence property (\ref{eq:conservative-uc-property}), 
    \begin{align*}
        M^{\widehat{S}}_i &= \error_{\widehat{S}}(I_i, ~[t_i, t_{i + 1} - 1]) \\
        &\le \error_{\widehat{S}}(I_i, ~[1, T]) + \varepsilon_{[t_i, t_{i + 1} - 1]} \\
        &\le \error_{\widehat{S}}(I_i, [1, t_{i+1} - 1]) + \varepsilon_{[1, t_{i+1} - 1]} + \varepsilon_{[t_i, t_{i + 1} - 1]} \\
        &\le \frac{\alpha T}{t_{i + 1} - 1} + C \sqrt{\frac{\log T}{t_{i + 1} - 1}} + C \sqrt{\frac{\log T}{t_{i + 1} - t_i}} \\
        &\le \frac{\alpha T}{t_{i + 1} - 1} + 2C \sqrt{\frac{\log T}{t_{i + 1} - t_i}}.
    \end{align*}
    We bound \(M^{\widehat{S}}\) in terms of the \(M^{\widehat{S}}_i\)s.  Since \(R(t) = \frac{\alpha T}{t} \ge 1\) for \(t \le \alpha T\), and the initial interval has coverage 0, we have that \(t_2 = \alpha T + 1\) and \(M_1 = 1\).
    \begin{align}
        M^{\widehat{S}} &= \sum_{i = 1}^K (t_{i + 1} - t_i) M^{\widehat{S}}_i \nonumber \\
        &= \alpha T + \sum_{i = 2}^K (t_{i + 1} - t_i) M^{\widehat{S}}_i \nonumber\\
        &\le \alpha T + \sum_{i = 2}^K (t_{i + 1} - t_i) \left(\frac{\alpha T}{t_{i + 1} - 1} + 2C \sqrt{\frac{\log T}{t_{i + 1} - t_i}}\right) \nonumber\\
        &\le \alpha T + \sum_{i = 2}^K \left[(t_{i + 1} - t_i)\frac{\alpha T}{t_{i + 1} - 1} + 2C \sqrt{\log T }\sqrt{(t_{i + 1} - t_i) }\right] \nonumber\\
        &\le \alpha T + 2C\sqrt{\log T}\sqrt{K-1} \sqrt{T}  + \alpha T \sum_{i = 2}^K \left[\frac{t_{i + 1} - 1}{t_{i + 1} - 1} - \frac{t_i - 1}{t_{i + 1}-1} \right] &\text{by concavity of \(\sqrt{\cdot~}\)} \nonumber\\
        &\le \alpha T + 2C \sqrt{(K-1)T \log T} + (K - 1) \alpha T - \alpha T \sum_{i = 2}^K \frac{t_i - 1}{t_{i + 1}-1}. \label{eq:term-to-minimize}
    \end{align}
    This upper bound on mistakes is maximized when \(\sum_{i = 2}^K \frac{t_i - 1}{t_{i + 1}-1}\) is minimized.  Recall that \(t_2 = \alpha T + 1\) and \(t_{K+1} = T + 1\).  Assume without loss of generality that \(\alpha T\) is an integer.   So we have  
    \begin{align*}
        \prod_{i =2}^{K} \frac{t_{i} - 1}{t_{i + 1} - 1}  &= \frac{t_{2} - 1}{t_{K + 1} - 1} = \alpha \\
        \sum_{i = 2}^K \ln \left(\frac{t_{i} - 1}{t_{i + 1} - 1} \right) &= \ln \alpha .
    \end{align*}
    For convenience of arithmetic, define \(q_i = (t_{i} - 1)/(t_{i + 1} - 1)\) where every \(q_i > 0\), so we have \(\sum_{i = 2}^K \ln q_i = \ln \alpha\).
    By concavity of \(\ln\) we have
    \begin{align*}
        \frac{1}{K - 1} \sum_{i = 2}^K \ln q_i &\le \ln \left(\frac{1}{K - 1} \sum_{i = 2}^K q_i \right) \\
        \frac{\ln \alpha}{K-1} + \ln (K-1) &\le \left(\sum_{i = 2}^K q_i \right) \\
        (K-1) \alpha^{1/(K-1)} &\le \sum_{i = 2}^K q_i = \sum_{i = 2}^K \frac{t_{i} - 1}{t_{i + 1} - 1}.
    \end{align*}
    Substituting back into (\ref{eq:term-to-minimize}), we have 
    \begin{align*}
        M^{\widehat{S}} &\le \alpha T + \sqrt{(K-1)T\log T} + (K-1) \alpha T - (K-1)\alpha^{1/(K-1)} \alpha T \\
        &\le \alpha T + \sqrt{(K-1)T\log T} + (K-1)\left(1 - \alpha^{1/(K-1)} \right) \alpha T.
    \end{align*}
    Since \(\alpha \ge 0\), we have that \((K-1)\left(1 - \alpha^{1/(K-1)} \right) \le K-1\).  We can also bound this term by \(\ln (1/\alpha)\):
    \begin{align*}
        \frac{\ln \alpha}{K-1} &= -\frac{\ln (1/\alpha)}{K-1} \\
        \alpha^{1/(K-1)} &= e^{-\ln(1/\alpha) / (K-1)} \ge 1 - \frac{\ln(1/\alpha)}{K-1} \\
        \frac{\ln(1/\alpha)}{K-1} &\ge 1 - \alpha^{1/(K-1)} \\
        \ln(1/\alpha) &\ge (K-1) \left( 1 - \alpha^{1/(K-1)}\right).
    \end{align*}
    Putting this together we have 
    \begin{equation*}
        M^{\widehat{S}} \le \sqrt{(K- 1) T \log T} + \bigg(1 + \min \left\{K-1, ~\ln(1/\alpha) \right\}\bigg) \alpha T.
    \end{equation*}
    We can upper bound the number of phases of \Cref{alg:meta-algorithm} on any input as 
    \[K - 1 \le \log_{\frac{\mu - 1}{2}} (1/\minwidth) = O \left(\frac{\log(1/\minwidth)}{\log \mu} \right).\]
    So for every realization \(\widehat{S}\) of \(\mathbf{S}\) that satisfies the uniform convergence property (\ref{eq:conservative-uc-property}), we have 
    \begin{equation*}
        M^{\widehat{S}} = O\left( \sqrt{T\log T} \sqrt{\frac{\log(1/\minwidth)}{\log \mu}} + \min \left\{ \frac{\log(1/\minwidth)}{\log \mu}, ~ \log(1/\alpha) \right\} \alpha T \right).
    \end{equation*}
    We have that \(\mathbf{S}\) satisfies the uniform convergence property with probability \(\ge 1 - \frac{1}{T^2}\).  If \(\mathbf{S}\) does not satisfy the uniform convergence property, the number of mistakes is trivially upper bounded by \(T\).  So we have that the expected number of mistakes by \Cref{alg:meta-algorithm} (for \(\mu\), \(\minwidth\), \(R(t) = \frac{\alpha T}{t}\)) is 
    \begin{equation} 
    O\left( \sqrt{T\log T} \sqrt{\frac{\log(1/\minwidth)}{\log \mu}} + \min \left\{ \frac{\log(1/\minwidth)}{\log \mu}, ~ \log(1/\alpha) \right\} \alpha T \right). \label{eq:bad-sampling-error}
    \end{equation}
    When the number of phases \(\frac{\log(1/\minwidth)}{\log \mu}\) is large, the first ``sampling error" term will dominate.  In this setting, the original bound in \Cref{thm:meta-alg-exchangeable} which does not do a phase-by-phase analysis gives a tighter bound of 
    \begin{align}
        M^\mathbf{S} &\le T \left( \frac{1}{T} \sum_{t = 1}^T R(t - 1) + O\left(\sqrt{\log T / T}\right)\right) \nonumber \\
        &\le \sum_{t = 1}^T \max \left\{1, \frac{\alpha T}{t} \right\} + O \left( \sqrt{T \log T} \right) \nonumber \\
        &\le \alpha T + \alpha T \sum_{t = \alpha T + 1}^T \frac{1}{t} + O \left( \sqrt{T \log T} \right) \nonumber \\
        &\le O\left(\sqrt{T \log T} + \log (1/\alpha)\alpha T \right).  \label{eq:original-bound}
    \end{align}
    (\ref{eq:bad-sampling-error}) and (\ref{eq:original-bound}) together bound the expected number of mistakes by 
    \[O \left( \min \left\{\sqrt{T\log T} \sqrt{\frac{\log(1/\minwidth)}{\log \mu}} +  \frac{\log(1/\minwidth)}{\log \mu} \alpha T, \quad \sqrt{T \log T} + \log (1/\alpha)\alpha T \right\} \right).\]

    The volume guarantee follows directly from \Cref{cor:optimal-algorithm-for-arbitrary-order}.
\end{proof}

\section{Lower Bound for Arbitrary Order Sequences}


We provide a lower bound that shows that our algorithm for arbitrary order sequences is optimal.  That is, for any given choice of miscoverage rate \(\alpha\), scale lower bound \(\minwidth\), and multiplicative volume approximation \(\mu\), we prove that any algorithm that achieves volume \(\le \mu \min \{\opt_S(\alpha), ~\minwidth \}\) on every arbitrary order sequence \(S\) of length \(T\) must make a multiplicative factor more mistakes than the target \(\alpha T\).  This mistake bound shows that our upper bound of running \Cref{alg:meta-algorithm} with \(R(t) = \frac{\alpha T}{t}\) is near-optimal in all parameters.  In particular, we note that \Cref{alg:meta-algorithm} actually plays intervals with maximum volume bounded by \( \mu \min \{\opt_S(\alpha), ~\minwidth \}\), which is even stronger than playing intervals with average volume bounded by \( \mu \min \{\opt_S(\alpha), ~\minwidth \}\).  Among algorithms that have this property, we show that the expected number of mistakes that \Cref{alg:meta-algorithm} makes:
\[O \left( \min \left\{ \frac{\log (1/\minwidth)}{\log(\mu)} \cdot \alpha T , ~T \right\}\right),\]
is optimal (see \Cref{cor:optimal-algorithm-for-arbitrary-order} for the upper bound).  

For algorithms that play intervals that have average volume bounded by \( \mu \min \{\opt_S(\alpha), ~\minwidth \}\), but do not necessarily maximum volume bounded by this, we show that they must make 
\[\Omega \left( \left\{ \frac{\log (1/\minwidth)}{\log(\mu) + \log(1/\alpha)} \cdot \alpha T , ~T \right\} \right)\]
mistakes in expectation.  This differs from the earlier bound by a subpolynomial factor in \(\alpha\), and thus \Cref{alg:meta-algorithm} is near-optimal even among algorithms that meet this weaker condition.  

The lower bound construction is directly inspired by a worst-case instance for \Cref{alg:meta-algorithm} with \(R(t) = \frac{\alpha T}{t}\).  In the analysis (\Cref{thm:meta-alg-arbitrary-order}), we bound the number of times that \Cref{alg:meta-algorithm} resets \(\Icurrent\).  We then say between two subsequent resets of \(\Icurrent\), \Cref{alg:meta-algorithm} plays a fixed interval that has accrued less than the target error.  So that interval must make at most \(\alpha T\) mistakes over the time that it is feasible, and therefore at most \(\alpha T\) mistakes while it is being played.  In the lower bound, we present the algorithm with a series of phases of increasing ``scales," each lasting \(\alpha T\) days.  See \Cref{fig:arbitrary-lower-bound} for an illustration.  In each phase, the algorithm does not yet know if this phase contains points that are captured by \(\opt\), or this is an ``outlier" phase.  If the algorithm chooses to capture a large fraction of the points, it will accrue high volume, which leads to high multiplicative ratio in the case that this was an outlier phase.  If the algorithm chooses not to capture a large fraction of points, it will accrue a large number of mistakes.  

\begin{figure}[!t]
    \centering
    \begin{subfigure}{\textwidth}
        \centering
        \begin{tikzpicture}
            \fill[blue, opacity=0.6666] (0.25, 0) -- (0.25, 3*0.0877) -- (2.25, 3*0.0877) -- (2.25, 0);
            \fill[blue, opacity=0.4444] (2.25, 0) -- (2.25, 3*0.1316) -- (4.25, 3*0.1316) -- (4.25, 0);
            \fill[blue, opacity=0.2962] (4.25, 0) -- (4.25, 3*0.1975) -- (6.25, 3*0.1975) -- (6.25, 0);
            \fill[blue, opacity=0.1975] (6.25, 0) -- (6.25, 3*0.2962) -- (8.25, 3*0.2962) -- (8.25, 0);
            \fill[blue, opacity=0.1316] (8.25, 0) -- (8.25, 3*0.4444) -- (10.25, 3*0.4444) -- (10.25, 0);
            \fill[blue, opacity=0.0877] (10.25, 0) -- (10.25, 3*0.6666) -- (12.25, 3*0.6666) -- (12.25, 0);
            \fill[blue, opacity=0.0585] (12.25, 0) -- (12.25, 3*1) -- (14.25, 3*1) -- (14.25, 0);
        
            \draw[->, line width=1pt] (0, 0) -- (15, 0);
            \draw[dashed, gray] (0, 3) -- (15, 3);
            \draw[dashed, gray] (0, 3*0.4444) -- (15, 3*0.4444);
            \draw[->, line width=1pt] (0, 0) -- (0, 3.5);
            \node[anchor=north] (t=1) at (0.5, 0) {$1$};
            \node[anchor=north] (t=T) at (14, 0) {$T$};
            \node[anchor=west] (time) at (15, 0) {$t$};
            \node[anchor=east] (y=0) at (0, 0) {$0$};
            \node[anchor=east] (y=1) at (0, 3) {$1$};
            \node[anchor=east] (a) at (0, 3*0.6666) {$\varepsilon$};
            \node[anchor=east] (b) at (0, 3*0.4444) {$\varepsilon^2$};
            \node[anchor=east] (b) at (0, 3*0.2962) {$\vdots$};
            \node[anchor=south] (y) at (0, 3.5) {$y_t$};
    
            \node[anchor=north] (a) at (2.5, 0) {$\alpha T$};
            \node[anchor=north] (b) at (4.5, 0) {$2\alpha T \dots$};

            \draw[dashed, red] (0, 3*0.6666) -- (15, 3*0.6666);
            \draw[<->, line width=1pt, red] (14.5, 0) -- (14.5, 3*0.6666);
            \node[anchor=south, red] (c) at (14.5, 3*0.6666) {$\textsc{Opt}(\alpha)$};
        \end{tikzpicture}
        \caption{If an algorithm \(\alg\) has low expected coverage (\(\le \frac{1}{2}\)) on every phase of this sequence, it must make \(\Omega(T)\) mistakes over the whole sequence.  }
        \label{subfig:arbitrary-full-sequence}
    \end{subfigure}

    \begin{subfigure}{\textwidth}
        \centering
        \begin{tikzpicture}
            \fill[blue, opacity=0.6666] (0.25, 0) -- (0.25, 3*0.0877) -- (2.25, 3*0.0877) -- (2.25, 0);
            \fill[blue, opacity=0.4444] (2.25, 0) -- (2.25, 3*0.1316) -- (4.25, 3*0.1316) -- (4.25, 0);
            \fill[blue, opacity=0.2962] (4.25, 0) -- (4.25, 3*0.1975) -- (6.25, 3*0.1975) -- (6.25, 0);
            \fill[blue, opacity=0.1975] (6.25, 0) -- (6.25, 3*0.2962) -- (8.25, 3*0.2962) -- (8.25, 0);
            \fill[blue] (8.25, 0) -- (8.25, 3*0.0585) -- (14.25, 3*0.0585) -- (14.25, 0);
        
            \draw[->, line width=1pt] (0, 0) -- (15, 0);
            \draw[dashed, gray] (0, 3) -- (15, 3);
            \draw[dashed, gray] (0, 3*0.6666) -- (15, 3*0.6666);
            \draw[dashed, gray] (0, 3*0.4444) -- (15, 3*0.4444);
            \draw[->, line width=1pt] (0, 0) -- (0, 3.5);
            \node[anchor=north] (t=1) at (0.5, 0) {$1$};
            \node[anchor=north] (t=T) at (14, 0) {$T$};
            \node[anchor=west] (time) at (15, 0) {$t$};
            \node[anchor=east] (y=0) at (0, 0) {$0$};
            \node[anchor=east] (y=1) at (0, 3) {$1$};
            \node[anchor=east] (a) at (0, 3*0.6666) {$\varepsilon$};
            \node[anchor=east] (b) at (0, 3*0.4444) {$\varepsilon^2$};
            \node[anchor=east] (b) at (0, 3*0.2962) {$\vdots$};
            \node[anchor=south] (y) at (0, 3.5) {$y_t$};
    
            \node[anchor=north] (a) at (2.5, 0) {$\alpha T$};
            \node[anchor=north] (b) at (4.5, 0) {$2\alpha T \dots$};
    
            \draw[dashed, red] (0, 3*0.1975) -- (15, 3*0.1975);
            \draw[<->, line width=1pt, red] (14.5, 0) -- (14.5, 3*0.1975);
            \node[anchor=south, red] (c) at (14.5, 3*0.1975) {$\textsc{Opt}(\alpha)$};
        \end{tikzpicture}
        
        \caption{
        If an algorithm \(\alg\) has high expected coverage (\(\ge \frac{1}{2}\)) on some phase \(i\) of sequence (a), say phase 4, then it also has high expected coverage on phase 4 of this alternate sequence, as \(\alg\) does not yet know which case it is in.  
        Therefore, \(\alg\) must incur a high multiplicative volume approximation on this alternate sequence, as the scale of phase 4 is multiplicatively larger than the scales of the other phases.
        }
        \label{subfig:arbitary-alternate-sequence}
    \end{subfigure}
    
    \caption{We illustrate the lower bound construction in \Cref{thm:arbitrary-order-lower-bound}. The horizontal axis is the day \(t\) of the sequence and the vertical axis is the value of \(y_t\).  \(y_t\) is drawn according to a uniform distribution, and we illustrate this as a box over the support of \(y_t\) with depth of color corresponding to the density of the distribution.  We divide the input sequence into ``phases" of length \(\alpha T\).  We then construct an input sequence \(S\) that has the scale of \(y_t\) increasing multiplicatively from one phase to the next (\Cref{subfig:arbitrary-full-sequence}).  We also consider a set of alternate sequences \(S_i\), where \(S_i\) is the same as \(S\) through phase \(i\), and then drops to a very small scale for the rest of the sequence (\Cref{subfig:arbitary-alternate-sequence}).  On phase \(i\), the algorithm does not know if it is in case (a), where the scale will continue to grow, or case (b), where the sequence will drop off.      }
    \label{fig:arbitrary-lower-bound}
\end{figure}


\begin{theorem}[Lower Bound for Arbitrary Sequences]\label{thm:arbitrary-order-lower-bound}
    \addcontentsline{toc}{subsection}{\emph{\Cref{thm:arbitrary-order-lower-bound}:} Lower Bound for Arbitrary Sequences}
    For any scale lower bound \(\minwidth > 0\), miscoverage rate \(\alpha > 0\), and time horizon \(T\), 
    such that for any, potentially randomized, algorithm \(\alg\) that outputs confidence sets over \([0,1]\) that are not necessarily intervals we have:
    \begin{enumerate}
        \item If \(\alg\) plays sets with expected \emph{average} volume 
        \(\le \muavg \max \{ \opt_{S_i}(\alpha), ~ \minwidth \}\)
        on every sequence \(S\), for some value \(\muavg>0\), then \(\alg\) must make 
        \[\Omega \left( \min \left\{ \frac{\log (1/\minwidth)}{\log(\muavg) + \log(1/\alpha)} \cdot \alpha T , ~T \right\}\right) \]
        mistakes in expectation on some sequence \(S'\). 
        This is \(\Omega \left( \min \left\{ \frac{\log(1/\minwidth)}{\log(\muavg)} \alpha^{1 + \varepsilon'} T , ~T \right\} \right)\) for any \(\varepsilon' > 0\).

        \item If \(\alg\) plays sets with expected \emph{maximum} volume 
        \(\le \mumax \max \left\{ \opt_{S_i} (\alpha), ~ \minwidth \right\} \)
        on every sequence \(S\), for some value \(\mumax>0\), then \(\alg\) must make 
        \[\Omega \left( \min \left\{ \frac{\log (1/\minwidth)}{\log(\mumax)} \cdot \alpha T , ~T \right\}\right) \]
        mistakes in expectation on some sequence \(S'\).
    \end{enumerate}

    Furthermore, this lower bound holds against algorithms that only have to be competitive on sequences where the input on each day is drawn from a symmetric unimodal distribution with median \(1/2\).
\end{theorem}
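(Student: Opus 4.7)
The plan is to formalize the construction sketched in \Cref{fig:arbitrary-lower-bound}. Fix a growth rate \(g > 1\) to be chosen separately for the two cases. Let \(K = \min\bigl(\lfloor \log(1/\minwidth)/\log g \rfloor,\ \lfloor 1/(2\alpha) \rfloor\bigr)\), define scales \(s_i = \minwidth \cdot g^{i-1}\) for \(i \in [K]\), and for each \(i \in [K]\) define a random sequence \(S_i\) of length \(T\) as follows: for \(j = 1, \dots, i\), days in ``phase \(j\)'' (the window \(((j-1)\alpha T, j \alpha T]\)) receive inputs drawn i.i.d.\ from a symmetric unimodal distribution centered at \(1/2\) with support contained in \([1/2 - s_j/2, 1/2 + s_j/2]\) (e.g., the uniform distribution on this interval); all days \(t > i \alpha T\) receive the constant input \(1/2\). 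Set \(\mathcal{S} = \{S_1, \dots, S_K\}\).

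Two structural facts will drive the argument. First, the interval \([1/2 - s_{i-1}/2,\ 1/2 + s_{i-1}/2]\) covers every point of \(S_i\) outside of phase \(i\) (phases \(1, \dots, i-1\) have smaller scale, and the tail is the single value \(1/2\)), and thus \(\opt_{S_i}(\alpha) \le s_{i-1}\). Second, the sequences \(\{S_j : j \ge i\}\) are identically distributed on the first \(i \alpha T\) days, so for any (randomized) algorithm \(\alg\), the joint distribution of intervals played in phase \(i\) is the same under input \(S_i\) as under \(S_K\).

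Next I would translate the volume assumption into a coverage bound on phase \(i\) of \(S_i\). Since the interval \(I_t\) is independent of \(y_t\) conditional on the history, the expected coverage on day \(t\) inside phase \(i\) is at most \(\E[\vol(I_t) \mid S_i] / s_i\). In the max-volume case, \(\E[\vol(I_t)\mid S_i] \le \E[\max_{t'} \vol(I_{t'}) \mid S_i] \le \mumax \opt_{S_i}(\alpha) \le \mumax s_{i-1}\), so per-day expected coverage is at most \(\mumax/g\); choosing \(g = 2 \mumax\) makes this \(\le 1/2\). In the average-volume case, summing \(\E[\vol(I_t)\mid S_i]\) over days and restricting to phase \(i\) yields
\[
\frac{1}{\alpha T}\sum_{t \in \text{phase } i}\E[\vol(I_t)\mid S_i] \;\le\; \frac{T \cdot \muavg \cdot s_{i-1}}{\alpha T} \;=\; \frac{\muavg s_{i-1}}{\alpha},
\]
so the expected average coverage on phase \(i\) is at most \(\muavg/(g\alpha)\); choosing \(g = 2\muavg/\alpha\) makes this \(\le 1/2\).

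Finally, I would combine these bounds on the single sequence \(S_K\). By indistinguishability, the average coverage on phase \(i\) of \(S_K\) equals the average coverage on phase \(i\) of \(S_i\), which is \(\le 1/2\); the tail of \(S_K\) is irrelevant because coverage is only lower-bounded. Summing over the \(K\) phases yields at least \(K \alpha T / 2\) expected mistakes on \(S_K \in \mathcal{S}\). Plugging in \(K\) gives \(\Omega\bigl(\min\{\log(1/\minwidth)/\log(\mumax),\ 1/\alpha\} \cdot \alpha T\bigr)\) for case 2, and the analogous \(\Omega\bigl(\min\{\log(1/\minwidth)/(\log\muavg + \log(1/\alpha)),\ 1/\alpha\} \cdot \alpha T\bigr)\) for case 1; the informal \(\alpha^{1+\varepsilon'}\) form of case 1 follows by the elementary inequality \(\log\muavg + \log(1/\alpha) = O_{\varepsilon'}(\log(\muavg) \cdot \alpha^{-\varepsilon'})\). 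I expect the main obstacle to be the average-volume case: since only the time-average is constrained, we cannot bound individual days, and the argument must carefully exploit that restricting the sum to phase \(i\) still loses only a factor of \(1/\alpha\), which is precisely what forces the extra \(\log(1/\alpha)\) term in the denominator and separates case 1 from case 2.
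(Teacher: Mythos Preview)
Your proposal is correct and follows essentially the same construction and argument as the paper: a family of sequences with geometrically growing scales over phases of length \(\alpha T\), indistinguishability of \(S_i\) and \(S_K\) through phase \(i\), and the observation that the volume assumption on \(S_i\) forces low coverage on phase \(i\). The only cosmetic differences are that the paper phrases the argument as a hit/miss dichotomy on \(S_K\) (``either \(\alg\) misses every phase, or it hits some phase \(j\) and then violates the volume bound on \(S_j\)'') rather than directly deducing ``miss'' from the volume bound, and the paper's tail of \(S_i\) is drawn at the smallest scale rather than being the constant \(1/2\); neither affects the logic. One small point to tidy up: your bound \(\opt_{S_i}(\alpha) \le s_{i-1}\) uses \(s_0\) when \(i = 1\), so either define \(s_0 = 0\) (the tail is the point \(1/2\)) or simply sum over \(i \ge 2\), which loses one phase and does not change the asymptotic conclusion.
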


\begin{proof}
    Assume without loss of generality that \(1/\alpha\) is an integer, and \(T\) is a multiple of \(1/\alpha\).  Divide the input sequence into \(1 / \alpha\) ``phases" of length \(\alpha T\).  Let \(K \in \{1, \dots, 1/ \alpha \}\) and \(\varepsilon > 0\) be parameters to be chosen later. 
    For each \(1 \le i \le K\) we generate a sequence \(S^{(K)}_i\) as follows: 
    \begin{itemize}
        \item For phases \(j \le i\), all days in phase \(j\) are drawn i.i.d. from \(\unif \left[0, \varepsilon^{K - j} \right]\).
        \item For phases \(j > i\), all days in phase \(j\) are drawn i.i.d. from \(\unif \left[0, \varepsilon^K \right]\).
    \end{itemize}
    
    Fix any potentially randomized algorithm \(\alg\).  Consider what \(\alg\) does on input sequence \(S^{(K)}_K\).
    We say that \(\alg\) ``hits" phase \(j\) if it captures \(\ge \frac{\alpha T}{2}\) points in phase \(j\), in expectation.  If \(\alg\) captures \(< \frac{\alpha T}{2}\) points in phase \(j\) in expectation, we say it ``misses."  Either: 
    \begin{enumerate}[(a)]
        \item \(\alg\) misses every phase \(1 \le j \le K\).  Then \(\alg\) makes 
        \(\ge K \frac{\alpha T}{2}\)
        mistakes in expectation. 
        \item \(\alg\) hits some phase \(1 \le j \le K\).  Since phase \(j\) is drawn i.i.d.\ \(\unif \left[0, \varepsilon^{K - j} \right]\), the expected average volume that \(\alg\) plays on phase \(j\) must be \(\ge \frac{\varepsilon^{K - j}}{2}\).  

        Now consider what \(\alg\) would do on sequence \(S^{(K)}_j\).  Since \(S^{(K)}_j\) and \(S^{(K)}_K\) are identical through phase \(j\), this means that \(\alg\) must have expected average volume \(\ge \frac{\varepsilon^{K - j}}{2}\) on phase \(j\) of \(S^{(K)}_j\) as well.  Thus, on \(S^{(K)}_j\), \(\alg\) must play average volume 
        \(\ge \frac{\alpha \varepsilon^{K - j}}{2}\)
        and maximum volume 
        \(\ge \frac{\varepsilon^{K - j}}{2}\)
        in expectation.  

        For \(S^{(K)}_j\), we know that the interval \([0, \varepsilon^{K - j + 1}]\) achieves coverage 1 on every phase other than phase \(j\). Thus, \(\opt_{S^{(K)}_j}(\alpha) \le \varepsilon^{K - j + 1}\).  This means that on \(S^{(K)}_j\), \(\alg\) plays intervals of average volume 
        \[\ge \frac{\alpha}{2 \varepsilon} \opt_{S^{(K)}_j}(\alpha) \]
        in expectation, and intervals of maximum volume 
        \[\ge \frac{1}{2 \varepsilon} \opt_{S^{(K)}_j}(\alpha)\] in expectation. 
\end{enumerate}

We will set \(\varepsilon, K\) such that \(\minwidth = \frac{1}{2}\varepsilon^{K}\).  If \(\alg\) plays intervals with expected average volume \(\le \muavg \max \{\opt_{S_j^{(K)}}(\alpha), ~\minwidth \}\) on \(S_j^{(K)}\) we have that, either \(\alg\) makes \(\ge K \frac{\alpha T}{2}\) mistakes in expectation, or
\begin{align}
    \frac{\alpha}{2 \varepsilon} \opt_{S_j^{(K)}}(\alpha) &\le \muavg \max \{\opt_{S_j^{(K)}}(\alpha), ~\minwidth \} \nonumber \\
    \min \{ \frac{\alpha}{2 \varepsilon}, \frac{\alpha}{2 \varepsilon^{K+2}} \opt_{S_j^{(K)}}(\alpha) \} &\le \muavg \nonumber \\
    \frac{\alpha}{2 (\minwidth/2)^{1/K}} = \frac{\alpha}{2 \varepsilon} &\le \muavg , \label{eq:muavg-condition}
\end{align}
where the last line follows because \(\opt_{S_j^{(K)}}(\alpha) \ge \varepsilon^{K+1}\).
Similarly, if \(\alg\) plays intervals with expected maximum volume \(\le \mumax \max \{\opt_{S_j^{(K)}}(\alpha), ~\minwidth \}\) on \(S_j^{(K)}\) we have that, either \(\alg\) makes \(\ge K \frac{\alpha T}{2}\) mistakes in expectation, or
\begin{align}
    \frac{1}{2 (\minwidth/2)^{1/K}} = \frac{1}{2 \varepsilon} &\le \mumax . \label{eq:mumax-condition}
\end{align}
Setting \(K = \min \{ \left \lfloor \frac{\log (2/\minwidth)}{\log(1/\alpha) + \log(2\muavg)} \right \rfloor, 1/\alpha\}\) ensures that (\ref{eq:muavg-condition}) is satisfied.  Thus if \(\alg\) has expected average volume bounded as \(\le \muavg \max \{\opt_{S^{(K)}_j}(\alpha), ~\minwidth\}\) for all \(S^{(K)}_j \in \sequences\), then \(\alg\) must make 
\begin{align*}
    &\ge \min \left\{ \left(\frac{\log (2 / \minwidth)}{\log (1/\alpha) + \log (2\muavg)} - 1 \right)  \alpha T  , ~\frac{T}{2} \right\} \\
    &= \Omega \left( \min \left\{ \frac{\log(1/\minwidth)}{\log(\muavg) + \log(1/\alpha)} \alpha T , ~T \right\} \right)\\
    &= \Omega \left( \min \left\{ \frac{\log(1/\minwidth)}{\log(\muavg)} \alpha^{1 + \varepsilon'} T , ~T \right\} \right) & \text{for any } \varepsilon' > 0,
\end{align*}
mistakes in expectation on \(S^{(K)}_K\).  
Similarly, Setting \(K = \min \{ \left\lfloor \frac{\log (2/\minwidth)}{\log (2 \mumax)} \right\rfloor, ~1/\alpha \}\) ensures that (\ref{eq:mumax-condition}) is satisfied and \(K \le 1/\alpha\).  Thus if \(\alg\) has expected maximum volume bounded as \(\le \mumax \max \{\opt_{S_j}(\alpha), ~\minwidth\}\) for all \(S^{(K)}_j \in \sequences^{(K)}\), then \(\alg\) must make 
\begin{align*}
    &\ge \min \left\{ \left( \frac{\log (2/\minwidth)}{\log (2 \mumax)} - 1 \right) \alpha T, ~\frac{T}{2} \right\} \\
    &= \Omega \left(\min \left\{ \frac{\log (1/\minwidth)}{\log (\mumax)} \alpha T, ~T \right\} \right) 
\end{align*}
mistakes in expectation on \(S^{(K)}_K\).  

Finally, we note that while this argument considered the performance of \(\alg\) on intervals with one end at \(0\) for ease of notation, we could have instead considered a symmetric version of the sequences, where we map each phase that is drawn
\[\unif [0, b] \rightarrow \unif \left[\frac{1}{2} - \frac{1}{2} b , ~\frac{1}{2} + \frac{1}{2}b \right],\]
(see \Cref{fig:symmetric-arbitrary-lower-bound}).
This leads to the same lower bounds on volume, while ensuring that the input distribution on each day is unimodal and symmetric around \(1/2\).  This gives a lower bound against algorithms that are competitive on such sequences.
\end{proof}

\begin{figure}[!t]
    \centering
    \begin{tikzpicture}
        \fill[blue, opacity=0.6666] (0.25, 3/2 - 3/2*0.0877) -- (0.25, 3/2 + 3/2*0.0877) -- (2.25, 3/2 + 3/2*0.0877) -- (2.25, 3/2 - 3/2*0.0877);
        \fill[blue, opacity=0.4444] (2.25, 3/2 - 3/2*0.1316) -- (2.25, 3/2 + 3/2*0.1316) -- (4.25, 3/2 + 3/2*0.1316) -- (4.25, 3/2 - 3/2*0.1316);
        \fill[blue, opacity=0.2962] (4.25, 3/2 - 3/2*0.1975) -- (4.25, 3/2 + 3/2*0.1975) -- (6.25, 3/2 + 3/2*0.1975) -- (6.25, 3/2 - 3/2*0.1975);
        \fill[blue, opacity=0.1975] (6.25, 3/2 - 3/2*0.2962) -- (6.25, 3/2 + 3/2*0.2962) -- (8.25, 3/2 + 3/2*0.2962) -- (8.25, 3/2 - 3/2*0.2962);
        \fill[blue, opacity=0.1316] (8.25, 3/2 - 3/2*0.4444) -- (8.25, 3/2 + 3/2*0.4444) -- (10.25, 3/2 + 3/2*0.4444) -- (10.25, 3/2 - 3/2*0.4444);
        \fill[blue, opacity=0.0877] (10.25, 3/2 - 3/2*0.6666) -- (10.25, 3/2 + 3/2*0.6666) -- (12.25, 3/2 + 3/2*0.6666) -- (12.25, 3/2 - 3/2*0.6666);
        \fill[blue, opacity=0.0585] (12.25, 0) -- (12.25, 3*1) -- (14.25, 3*1) -- (14.25, 0);
    
        \draw[->, line width=1pt] (0, 0) -- (15, 0);
        \draw[dashed, gray] (0, 3) -- (15, 3);
        \draw[dashed, gray] (0, 3/2) -- (15, 3/2);
        \draw[dashed, gray] (0, 3/2 + 3/2*0.4444) -- (15, 3/2 + 3/2*0.4444);
        \draw[dashed, gray] (0, 3/2- 3/2*0.4444) -- (15, 3/2-3/2*0.4444);
        \draw[->, line width=1pt] (0, 0) -- (0, 3.5);
        \node[anchor=north] (t=1) at (0.5, 0) {$1$};
        \node[anchor=north] (t=T) at (14, 0) {$T$};
        \node[anchor=west] (time) at (15, 0) {$t$};
        \node[anchor=east] (y=0) at (0, 0) {$0$};
        \node[anchor=east] (y=1) at (0, 3) {$1$};
        \node[anchor=east] (y=1/2) at (0, 3/2) {$1/2$};
        \node[anchor=east] (a) at (0, 3/2 + 3/2*0.6666) {$1/2 + \varepsilon/2$};
        \node[anchor=east] (a) at (0, 3/2 - 3/2*0.6666) {$1/2 - \varepsilon/2$};
        \node[anchor=east] (b) at (0, 3/2 + 3/2*0.4444) {$1/2 + \varepsilon^2/2$};
        \node[anchor=east] (b) at (0, 3/2 - 3/2*0.4444) {$1/2 - \varepsilon^2/2$};
        \node[anchor=east] (b) at (0, 3/2 + 3/4*0.4444) {$\dots$};
        \node[anchor=east] (b) at (0, 3/2 - 3/4*0.4444) {$\dots$};
        \node[anchor=south] (y) at (0, 3.5) {$y_t$};

        \node[anchor=north] (a) at (2.5, 0) {$\alpha T$};
        \node[anchor=north] (b) at (4.5, 0) {$2\alpha T \dots$};

        \draw[dashed, red] (0, 3/2 - 3/2*0.6666) -- (15, 3/2 - 3/2*0.6666);
        \draw[dashed, red] (0, 3/2 + 3/2*0.6666) -- (15, 3/2 + 3/2*0.6666);
        \draw[<->, line width=1pt, red] (14.5, 3/2 - 3/2*0.6666) -- (14.5, 3/2 + 3/2*0.6666);
        \node[anchor=south, red] (c) at (14.5, 3/2 + 3/2*0.6666) {$\textsc{Opt}(\alpha)$};
    \end{tikzpicture}
    \caption{Symmetric version of the lower bound construction in \Cref{fig:arbitrary-lower-bound}}
    \label{fig:symmetric-arbitrary-lower-bound}
\end{figure}

\section{Lower Bound Against ``Best of Both Worlds" Algorithms}

We provide a lower bound that says that it is impossible to for a single algorithm to achieve the best possible guarantee both for arbitrary sequences and exchangeable sequences.  Recall that for an arbitrary sequence \(S\), \Cref{cor:optimal-algorithm-for-arbitrary-order} shows that for choices of the multiplicative volume approximation \(\mu\), scale lower bound \(\minwidth\), and target miscoverage \(\alpha\), \Cref{alg:meta-algorithm} with \(R(t) = \frac{\alpha T}{t}\) plays intervals of average volume \(\le \mu \max \{\opt_S(\alpha), ~\minwidth \}\), and makes \(O( \log(1/\minwidth)/\log(\mu) \alpha T)\) mistakes.  \Cref{thm:arbitrary-order-lower-bound} tells us that this is optimal up to subpolynomial factors in \(\alpha\).  

For exchangeable sequences however, we can do better.  \Cref{cor:optimal-algorithm-for-exchangeable-sequences} tells us that \Cref{alg:meta-algorithm} with \(R(t) = (1 - \alpha) + O(\sqrt{\log T/t})\) plays intervals of average volume \(\lesssim \opt_S(\alpha)\) and has expected coverage \(\ge (1 - \alpha) - O(\sqrt{\log T/t})\) (that is, \(\le (1 + O(\sqrt{\log T/t})) \alpha T\) mistakes in expectation).  

This leads to the natural question: is there a single algorithm that can achieve both guarantees?  It turns out that these two bounds are indeed at odds.  That is, achieving an algorithm that achieves the optimal tradeoff for arbitrary sequences must make a large number of mistakes on some exchangeable sequence, and an algorithm that achieves the optimal number of mistakes for exchangeable sequences must have a large multiplicative volume approximation.  

The following lower bound gives a tradeoff between the two cases.  \Cref{thm:best-of-both-worlds-algorithm} tells us that \Cref{alg:meta-algorithm} with \(R(t) = \frac{\alpha T}{t}\) actually achieves this tradeoff, and thus it is optimal in the sense that it gets as close as possible to a ``best of both worlds" guarantee.  That is, for a fixed choice of \(\mu\), \(\minwidth\), and \(\alpha\), subject to playing intervals of average volume \(\le \mu \max \{ \opt_S(\alpha), \minwidth \}\) on every arbitrary sequence \(S\), \Cref{alg:meta-algorithm} with \(R(t) = \frac{\alpha T}{t}\) makes the lowest possible number of mistakes on both arbitrary and exchangeable sequences.  We remark, however, that the lower bound does not say anything about whether the multiplicative factor of \(\mu\) that \Cref{alg:meta-algorithm} incurs in the exchangeable case is necessary.  

The construction of this lower bound, like the lower bound for arbitrary sequences, is inspired by observing the performance of \Cref{alg:meta-algorithm} with \(R(t) = \frac{\alpha T}{t}\) on i.i.d.\ sequences.  The optimal algorithm for exchangeable sequences will play intervals that have coverage very close to the target \(1 - \alpha\) from very early on.  \Cref{alg:meta-algorithm} with \(R(t) = \frac{\alpha T}{t}\) however, will be more conservative and play intervals that have substantially lower coverage, to hedge against the possibility the scale of the problem will shrink in the future, and that many of the points seen so far were actually ``outliers."  Intuitively, such a strategy should be optimal subject to a worst case guarantee.  

The lower bound proof essentially directly argues that this conservative strategy is optimal.  That is, we design a distribution \(\distribution\), and then argue that any algorithm \(\alg\) must fall into one of two categories.
\begin{enumerate}[(a)]
    \item On every day \(t\), \(\alg\) plays an interval that has expected miscoverage \(\ge \frac{1}{2} \cdot\frac{\alpha T}{t} \). Summing over \(t\), we see that this leads to \(\approx \ln(1/\alpha) \alpha T\) mistakes in expectation overall.

    \item On some day \(t\), \(\alg\) plays an interval with expected miscoverage \(\le \frac{1}{2} \cdot \frac{\alpha T}{t}\).  Then, we show that \(\alg\) has not been conservative, and there exists an alternate sequence on which \(\alg\) incurs a large volume approximation.
\end{enumerate}
A core part of the argument is in designing a distribution \(\distribution\) that achieves the desired tradeoff between the miscoverage of the algorithm and the multiplicative volume approximation \(\mu\).  We provide an illustration of the argument in \Cref{fig:iid-lower-bound}.

\begin{figure}[!t]
    \centering
    \begin{subfigure}{\textwidth}
        \centering 
        \begin{tikzpicture}
            \shade[bottom color = blue!66.66!white, top color = blue!44.44!white] (0.25, 3*0.0585) rectangle (14.25, 3*0.0877);
            \shade[bottom color = blue!44.44!white, top color=blue!29.62!white] (0.25, 3*0.0877) rectangle (14.25, 3*0.1316); 
            \shade[bottom color = blue!29.62!white, top color=blue!19.75!white] (0.25, 3*0.1316) rectangle (14.25, 3*0.1975);
            \shade[bottom color = blue!19.75!white, top color = blue!13.16!white] (0.25, 3*0.1975) rectangle (14.25, 3*0.2962);
            \shade[bottom color = blue!13.16!white, top color=blue!8.77!white] (0.25, 3*0.2962) rectangle (14.25, 3*0.4444);
            \shade[bottom color = blue!8.77!white, top color = blue!5.85!white] (0.25, 3*0.4444) rectangle (14.25, 3*0.6666);
            \shade[bottom color = blue!5.85!white, top color = white] (0.25, 3*0.6666) rectangle (14.25, 3*1);
        
            \draw[->, line width=1pt] (0, 0) -- (15, 0);
            \draw[dashed, gray] (0, 3) -- (15, 3);
            \draw[dashed, gray] (0, 3*0.4444) -- (15, 3*0.4444);
            \draw[->, line width=1pt] (0, 0) -- (0, 3.5);
            \node[anchor=north] (t=1) at (0.5, 0) {$1$};
            \node[anchor=north] (t=T) at (14, 0) {$T$};
            \node[anchor=west] (time) at (15, 0) {$t$};
            \node[anchor=east] (y=0) at (0, 0) {$0$};
            \node[anchor=east] (y=1) at (0, 3) {$1$};
            \node[anchor=east] (a) at (0, 3*0.6666) {$\varepsilon$};
            \node[anchor=east] (b) at (0, 3*0.4444) {$\varepsilon^2$};
            \node[anchor=east] (b) at (0, 3*0.2962) {$\vdots$};
            \node[anchor=south] (y) at (0, 3.5) {$y_t$};
    
            \draw[dashed, red] (0, 3*0.6666) -- (15, 3*0.6666);
            \draw[dashed, red] (0, 3*0.0585) -- (15, 3*0.0585);
            \draw[<->, line width=1pt, red] (14.5, 3*0.0585) -- (14.5, 3*0.6666);
            \node[anchor=south, red] (c) at (14.5, 3*0.6666) {$\textsc{Opt}(\alpha)$};
        \end{tikzpicture}
        \caption{In this sequence, every day's input is drawn i.i.d.\ from \(\distribution\).  If an algorithm \(\alg\) has miscoverage \(\ge \frac{1}{2} \cdot \frac{\alpha T}{t}\) on every day, it will make a large number (\(\ge \frac{1}{2} \ln (1/\alpha) \alpha T\)) of mistakes over the whole sequence.}
    \end{subfigure}

    \begin{subfigure}{\textwidth}
        \centering 
        \begin{tikzpicture}
            \shade[bottom color = blue!66.66!white, top color = blue!44.44!white] (0.25, 3*0.0585) rectangle (8.25, 3*0.0877);
            \shade[bottom color = blue!44.44!white, top color=blue!29.62!white] (0.25, 3*0.0877) rectangle (8.25, 3*0.1316); 
            \shade[bottom color = blue!29.62!white, top color=blue!19.75!white] (0.25, 3*0.1316) rectangle (8.25, 3*0.1975);
            \shade[bottom color = blue!19.75!white, top color = blue!13.16!white] (0.25, 3*0.1975) rectangle (8.25, 3*0.2962);
            \shade[bottom color = blue!13.16!white, top color=blue!8.77!white] (0.25, 3*0.2962) rectangle (8.25, 3*0.4444);
            \shade[bottom color = blue!8.77!white, top color = blue!5.85!white] (0.25, 3*0.4444) rectangle (8.25, 3*0.6666);
            \shade[bottom color = blue!5.85!white, top color = white] (0.25, 3*0.6666) rectangle (8.25, 3*1);
    
            \draw[line width = 1pt, blue] (8.25, 3*0.0585) -- (14.25, 3*0.0585);
        
            \draw[->, line width=1pt] (0, 0) -- (15, 0);
            \draw[dashed, gray] (0, 3) -- (15, 3);
            \draw[dashed, gray] (0, 3*0.6666) -- (15, 3*0.6666);
            \draw[dashed, gray] (0, 3*0.4444) -- (15, 3*0.4444);
            \draw[->, line width=1pt] (0, 0) -- (0, 3.5);
            \node[anchor=north] (t=1) at (0.5, 0) {$1$};
            \node[anchor=north] (t=T) at (14, 0) {$T$};
            \node[anchor=west] (time) at (15, 0) {$t$};
            \node[anchor=east] (y=0) at (0, 0) {$0$};
            \node[anchor=east] (y=1) at (0, 3) {$1$};
            \node[anchor=east] (a) at (0, 3*0.6666) {$\varepsilon$};
            \node[anchor=east] (b) at (0, 3*0.4444) {$\varepsilon^2$};
            \node[anchor=east] (b) at (0, 3*0.2962) {$\vdots$};
            \node[anchor=south] (y) at (0, 3.5) {$y_t$};
    
            \draw[dashed, red] (0, 3*0.1975) -- (15, 3*0.1975);
            \draw[dashed, red] (0, 3*0.0585) -- (15, 3*0.0585);
            \draw[<->, line width=1pt, red] (14.5, 3*0.0585) -- (14.5, 3*0.1975);
            \node[anchor=south, red] (c) at (14.5, 3*0.1975) {$\textsc{Opt}(\alpha)$};
        \end{tikzpicture}
        \caption{In this sequence, the prefix is drawn i.i.d.\ from \(\distribution\), but later the sequence drops off to be deterministically one value.  If the algorithm \(\alg\) does not play conservatively on the prefix of the sequence, it risks accumulating average volume much larger than \(\opt\).  }
    \end{subfigure}
    
    \caption{We illustrate the lower bound construction in \Cref{thm:no-best-of-both-worlds}. The horizontal axis is the day \(t\) of the sequence and the vertical axis is the value of \(y_t\).  \(y_t\) is drawn according to a distribution \(\distribution\) that we illustrate with depth of color corresponding to the density of the distribution.  \(\distribution\) has the property that the minimum volume interval that achieves miscoverage \(\le \alpha e^i\) over \(\distribution\) is multiplicatively larger than the minimum volume interval that achieves miscoverage \(\le \alpha e^{i + 1}\). }
    \label{fig:iid-lower-bound}
\end{figure}

\begin{theorem}[No ``Best of Both Worlds" Algorithm]\label{thm:no-best-of-both-worlds}
\addcontentsline{toc}{subsection}{\emph{\Cref{thm:no-best-of-both-worlds}:} No ``Best of Both Worlds" Algorithm}
Fix a scale lower bound \(\minwidth > 0\) and target miscoverage rate \(\alpha > 0\).  Let \(\alg\) be any algorithm, deterministic or randomized. 
\begin{enumerate}
    \item If \(\alg\) plays sets with expected \emph{average} volume \(\le \muavg \max \left\{ \opt_{S}(\alpha), ~\minwidth  \right\}\) on every arbitrary order sequence \(S\), for some \(\muavg > 0 \), then \(\alg\) must make 
    \[\Omega \left( \min \left\{ \ln(1/\alpha), ~\frac{\ln (1/\minwidth)}{\ln \muavg + \ln(1/\alpha)} \right\} \alpha T \right)\]
    mistakes in expectation on some i.i.d.\ sequence. 

    \item If \(\alg\) plays sets with expected \emph{maximum} volume \(\le \mumax \max \left\{ \opt_{S}(\alpha), ~\minwidth  \right\}\) on every arbitrary order sequence \(S\), for some \(\muavg > 0 \), then \(\alg\) must make 
    \[\Omega \left( \min \left\{ \ln(1/\alpha), ~\frac{\ln (1/\minwidth)}{\ln \mumax} \right\} \alpha T \right)\]
    mistakes in expectation on some i.i.d.\ sequence. 
\end{enumerate}

Furthermore, this lower bound holds against algorithms that only have to be competitive on sequences where the input on each day is drawn from a symmetric unimodal distribution with median \(1/2\).
\end{theorem}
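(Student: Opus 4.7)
The plan is to construct a single i.i.d.\ distribution $\distribution$ on $[0,1]$ with a carefully engineered nested quantile structure, combined with a family of alternate arbitrary-order sequences that share long prefixes with realizations from $\distribution$. For parameters $\varepsilon \in (0,1)$ and $K \in \mathbb{N}$ to be chosen, take $\distribution$ to be a piecewise-uniform, symmetric, unimodal density supported on $K+1$ nested symmetric intervals centered at $1/2$, calibrated so that the minimum-volume interval covering mass $1 - \alpha e^i$ in $\distribution$ has length exactly $\varepsilon^i$ for each $0 \le i \le K$, with $\varepsilon^K = \Theta(\minwidth)$. Reducing miscoverage by a factor of $e$ thus costs a multiplicative factor of $1/\varepsilon$ in scale.

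Fix any algorithm $\alg$ satisfying the average-volume guarantee in part~(1); part~(2) is analogous but simpler. Run $\alg$ on i.i.d.\ samples from $\distribution$ and let $q_t$ denote its expected miscoverage on day $t$. The goal is to show $q_t \ge \tfrac{1}{2}\min\{1, \alpha T/t\}$ for every $t$, so that summing yields total expected mistakes
\[
\sum_{t=1}^T q_t ~\ge~ \tfrac{1}{2}\alpha T + \tfrac{1}{2}\alpha T \sum_{t=\alpha T+1}^T \tfrac{1}{t} ~=~ \Omega(\alpha T \ln(1/\alpha)),
\]
which gives the first branch of the bound. The second branch arises by truncating the staircase: when the scale ratio forces $K < \ln(1/\alpha)$, the tiered structure only descends to scale $\max\{\alpha, \minwidth\}$, and the mistake bound becomes $\Theta(K \alpha T)$.

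To establish conservativity, suppose for contradiction that some $t^\star$ satisfies $q_{t^\star} < \tfrac{1}{2}\min\{1, \alpha T/t^\star\}$. By the tiered structure of $\distribution$, the (random) interval $I_{t^\star}$ must have $\E[\vol(I_{t^\star})] \ge \varepsilon^{j^\star}$ where $j^\star$ is the tier corresponding to miscoverage $\tfrac{1}{2}\alpha T/t^\star$. Now consider the arbitrary-order sequence $S'$: draws from $\distribution$ on days $1,\ldots,t^\star-1$ (conditioning on a typical-prefix event of non-negligible probability), then inputs drawn from an extremely narrow symmetric unimodal distribution inside the innermost tier on days $t^\star,\ldots,T$. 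Since $I_{t^\star}$ is a function of the prefix alone, its expected volume is unchanged by the switch. But $\opt_{S'}(\alpha) = O(\minwidth)$, because the offline optimum absorbs the $\alpha T$-quantile of prefix outliers into its miscoverage budget. Choosing $\varepsilon = \Theta(\alpha/\muavg)$ and $K = \min\{\log(1/\alpha), \log(1/\minwidth)/\log(1/\varepsilon)\}$ then makes $\varepsilon^{j^\star}/\minwidth \gg \muavg$, violating the volume guarantee on $S'$.

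The main obstacle will be turning this single-day contradiction into a bound on the \emph{average} volume over all $T$ days of $S'$: one day's contribution supplies only a $1/T$ factor. The plan is to iterate the argument day by day over the suffix of $S'$, showing that either $\alg$ keeps playing intervals of scale $\ge \varepsilon^{j^\star}$ on a constant fraction of those days (driving the average volume above $\muavg \minwidth$), or the days on which $\alg$ shrinks toward scale $\minwidth$ themselves induce high miscoverage on the corresponding days of $\distribution$, establishing the conservativity bound $q_t \ge \tfrac{1}{2}\min\{1,\alpha T/t\}$ pointwise. The max-volume statement in part~(2) skips this iteration entirely since one day's expected volume suffices, which is why its denominator is the cleaner $\ln\mumax$ rather than $\ln\muavg + \ln(1/\alpha)$. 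The symmetric-unimodal constraint is preserved throughout by design of both $\distribution$ and the suffix distribution of $S'$.
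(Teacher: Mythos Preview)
Your overall architecture matches the paper's: a nested-quantile distribution $\distribution$ where reducing miscoverage by a factor of $e$ costs a factor $1/\varepsilon$ in volume, a hit/miss dichotomy, and alternate sequences that share a prefix with the i.i.d.\ draw but then collapse to the innermost scale. For part~(2) your pointwise argument is essentially correct, since one day of large expected volume already lower-bounds the expected maximum volume on the alternate sequence.

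The genuine gap is in part~(1), and you have located it yourself as the ``main obstacle'': a single day's contribution $\E[\vol(I_{t^\star})]/T$ to the average volume on $S'$ is too weak by a factor of roughly $\alpha$. Your proposed iteration does not close it. On the suffix of $S'$ the inputs come from the innermost tier, not from $\distribution$, so the algorithm's volume choices there are decoupled from its miscoverage $q_t$ on days $t > t^\star$ of the i.i.d.\ sequence; nothing prevents $\alg$ from playing tiny intervals on the entire suffix of $S'$ while still having small $q_t$ on the i.i.d.\ sequence only at the isolated day $t^\star$. There is no mechanism that transfers ``$\alg$ shrank on day $t$ of $S'$'' back to ``$q_t$ is large on the i.i.d.\ run,'' because after day $t^\star$ the two input streams have diverged.

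The paper's fix is to replace single days with \emph{windows} $W_t$ of length $w$ ending at day $t$: declare $\alg$ to ``hit'' $W_t$ if its \emph{average} expected miscoverage over the $w$ days of $W_t$ is at most $\tfrac{1}{e}\cdot\tfrac{\alpha T}{t}$. Convexity of the minimum-volume-for-coverage function of $\distribution$ (Jensen) then forces the average expected volume over the whole window to be large, and on the alternate sequence this contributes a $w/T$ fraction---not $1/T$---to the total average volume. Taking $w/T = e^{-K} \ge \alpha$ supplies exactly the missing factor, and is precisely why the denominator in part~(1) is $\ln\muavg + \ln(1/\alpha)$ rather than $\ln\muavg$. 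In the all-miss case one still recovers $\Theta(K\alpha T)$ mistakes, because $\sum_t M_t$ dominates the sum of the window averages $\sum_t \tfrac{1}{e}\tfrac{\alpha T}{t}$ over the relevant range of $t$, which is a harmonic sum of order $K$.
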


\begin{proof}
    Let \(0 < \varepsilon \le 1/2\) and \(0 \le K \le \ln(1/\alpha)\) be parameters that we will set later.  
    We define a distribution \(\distribution^{(K)}\) over \([0, 1]\), such that 
    \begin{align}
        \ProbOp_{x \sim \distribution^{(K)}} \left[x < \varepsilon^{K + 1} \right] &= 0, \nonumber\\
        \ProbOp_{x \sim \distribution^{(K)}}\left[ x \le \varepsilon^{1 + K - i} \right] &= 1 - \alpha e^{K - i} & \text{for } 0 \le i \le K, \label{eq:DK-definition}\\ 
        \ProbOp_{x \sim \distribution^{(K)}} \left[x \le \varepsilon + t( 1- \varepsilon) \right] &= 1 - \alpha (1 - t) &\text{for } 0 \le t \le 1, \nonumber
    \end{align}
    where \(e\) is the base of the natural logarithm. 
    Note that \(1 - \alpha e^{K}\) is a valid probability because \(\ln (\alpha e^{K}) = K - \ln (1/\alpha) \le 0 \) so \(\alpha e^{K} \le 1\).  

    We compute \(v^\star(c)\), which is the minimum volume of any set that achieves coverage \(c\) over \(\distribution^{(K)}\).  We take the c.d.f.\ of \(\distribution^{(K)}\),  
    \begin{equation*}
        F_{\distribution^{(K)}}(x) = \begin{cases}
            0 &\text{for } 0 \le x \le \varepsilon^{K+1}\\
            1 - \frac{\alpha}{e} x^{1/\ln \varepsilon} &\text{for } \varepsilon^{K + 1} \le x \le \varepsilon \\
            1 - \alpha \frac{1-x}{1-\varepsilon} &\text{for } \varepsilon \le x \le 1
        \end{cases} .
    \end{equation*}
    From the c.d.f. we can compute the p.d.f. of \(\distribution^{(K)}\).  For \(0 \le x < \varepsilon^{K + 1}\), \(\distribution^{(K)}\) has density 0.  At \(x = \varepsilon^{K+1}\), \(\distribution^{(K)}\) has a point mass of probability \(1 - \alpha e^K\).  For \(\varepsilon^{K + 1} < x \le 1\), we compute the density of \(\distribution^{(K)}\) by taking the derivative of the c.d.f. with respect to \(x\):
    \begin{equation*}
        f_{\distribution^{(K)}}(x) = \begin{cases}
           \frac{\alpha}{e \ln(1/\varepsilon)} x^{-\frac{1}{\ln (1/\varepsilon)} - 1} & \text{for } \varepsilon^{K + 1} < x < \varepsilon \\
            \frac{\alpha}{1-\varepsilon} & \text{for } \varepsilon < x \le 1
        \end{cases}.
    \end{equation*}
    This p.d.f.\ is non-increasing with \(x\) for \(\varepsilon^{K+1} \le x \le 1\).  At \(x = \varepsilon^{K+1}\) the p.d.f.\ is infinite.  The first piece is proportional to \(x\) to a constant negative power, and the second piece is constant.  So it suffices to verify that the density is non-increasing where the two pieces meet.  At \(x = \varepsilon\), the first piece would give density \(\alpha \frac{1/\varepsilon}{\ln(1/\varepsilon)}\).  Since \((\ln z) + 1 \le z\) for all \(z > 0\), 
    \begin{equation*}
        \frac{1}{\varepsilon} - 1\ge \ln(1/\varepsilon) \quad \Longrightarrow \quad \frac{1/\varepsilon}{\ln(1/\varepsilon)} \ge \frac{1}{1 - \varepsilon} \quad \Longrightarrow \quad \alpha \frac{1/\varepsilon}{\ln(1/\varepsilon)} \ge \frac{\alpha}{1 - \varepsilon}.
    \end{equation*}
    Since the density is non-increasing for \(x \ge \varepsilon^{K+1}\), and \(\distribution^{(K)}\) has no mass on \(x < \varepsilon^{K+1}\), for any coverage level \(0 \le c \le 1\), a minimum volume set achieving coverage \(c\) over \(\distribution^{(K)}\) is of the form \([\varepsilon^{K+1}, F_{\distribution^{(K)}}^{-1}(c)]\), so 
    \[v^\star(c) = F_{\distribution^{(K)}}^{-1}(c) - \varepsilon^{K+1},\]
    (where we define \(F^{-1}_{\distribution^{(K)}}(0) = \varepsilon^{K+1}\)).
    Since the derivative (p.d.f.) of \(F_{\distribution^{(K)}}\) is non-increasing for \(\varepsilon^{K+1} \le x \le 1\), \(F_{\distribution^{(K)}}(x)\) is concave over \(x \in [\varepsilon^{K+1}, 1]\).  So \(F^{-1}_{\distribution^{(K)}}(c)\) is convex over \(c \in [0, 1]\), and \(v^\star(c)\) is also convex.  

    The convexity of \(v^\star(c)\) along with Jensen's inequality imply the following. 
    \begin{fact}\label{fact:convexity-of-DK}
        Let \(\mathbf{I}\) be a (randomized) set of intervals over \([0, 1]\) that have average expected coverage \(\ge c\) over \(\distribution^{(K)}\).  Then, the average expected volume of the intervals \(\mathbf{I}\) is 
        \[\ge v^\star(c) = F_{\distribution^{(K)}}^{-1}(c) - \varepsilon^{K+1}.\]
    \end{fact}

    We use \(\distribution^{(K)}\) to design a lower bound instance.  
    Let \(S^{(K)}\) be the sequence of \(T\) i.i.d.\ draws from \(\distribution^{(K)}\).  Assume also that \(\minwidth \le \varepsilon^{K+1}\).  We will analyze the behavior of an algorithm \(\alg\) based on its expected performance on \(S^{(K)}\).  Fix an algorithm \(\alg\).  Let \(M\) be the expected number of mistakes that \(\alg\) makes on input \(S^{(K)}\).  Let \(M_t\) be the expected miscoverage of \(\alg\) on day \(t\) on input \(S^{(K)}\).  That is, \(M_t\) is the marginal probability that \(\alg\) does not capture a point on day \(t\) of \(S^{(K)}\), and \(M = M_1 + \dots + M_t.\)
    
    Let \(1 \le w \le T\) be a window length that we will set later, and let \(W_t\) denote the window (sequence of consecutive days) of length \(w\) that ends on day \(t\).  If \(\alg\) has average expected miscoverage \(\le \frac{1}{e} \frac{\alpha T}{t}\) over \(W_t\) on input \(S^{(K)}\), then we say that \(\alg\) ``hits" \(W_t\).  Otherwise, \(\alg\) has average expected miscoverage \(> \frac{1}{e} \frac{\alpha T}{t}\) over \(W_t\) on input \(S^{(K)}\), and we say that \(\alg\) ``misses" \(W_t\).

    We analyze by cases.  Either \(\alg\) hits some window \(W_t\) for \(T/e^{K -1} \le t \le T/e\), or \(\alg\) misses every window \(W_t\) for \(T/e^{K -1} \le t \le T/e\).
    \begin{enumerate}[(a)]
        \item \(\alg\) hits some window \(W_t\) for \(T/e^{K-1} \le t \le T/e\).  By definition, this means that \(\alg\) has average expected coverage \(\ge 1- \frac{1}{e} \frac{\alpha T}{t}\) over \(W_t\).  Fact \ref{fact:convexity-of-DK} tells us that we can bound the expected volume of the intervals that \(\alg\) plays over \(W_t\) using the inverse c.d.f.. We use the original definition of \(\distribution^{(K)}\) (\ref{eq:DK-definition}) for convenience.  For \(T/e^{K-1} \le t \le T/e\), expected coverage of \(1- \frac{1}{e} \frac{\alpha T}{t}\) corresponds to \(i = K+1 - \ln(T/t)\).  Thus, the expected average volume of \(\alg\) over \(W_t\) is 
        \begin{equation*}
            \ge \varepsilon^{1 + K - i} - \varepsilon^{K + 1} 
            \quad = \varepsilon^{\ln(T/t)} - \varepsilon^{K+1}.
        \end{equation*}

        Consider an alternate input sequence \(S_t^{(K)}\), that has the first \(t\) days drawn i.i.d.\ from \(\distribution^{(K)}\), and has future days deterministically set to \(\varepsilon^{K+1}\).  As \(T \rightarrow \infty\) (and the sampling error disappears, since \(t \ge T/e^{K+1}\) is also going to infinity), \(\opt_{S_t^{(K)}}(\alpha)\) will converge to the volume of the smallest interval that achieves coverage \(\frac{\alpha T}{t}\) over \(\distribution^{(K)}\).  For \(T/e^{K} \le t \le T/e\), we fall into the middle case of (\ref{eq:DK-definition}) and we have
        \begin{equation}
            \opt_{S_t^{(K)}}(\alpha) \rightarrow \varepsilon^{1 + \ln(T/t)} - \varepsilon^{K+1} \qquad \text{as } T \rightarrow \infty.\label{eq:opt-value}
        \end{equation}

        Let \(\volmax\) be the expected maximum volume of any interval played by \(\alg\) on \(S_t^{(K)}\), and let \(\volavg\) be the expected average volume played by \(\alg\) on \(S_t^{(K)}\).  By definition, we have that \[\volmax \le \mumax \max \{\opt_{S_t^{(K)}}(\alpha), \minwidth \}, \qquad \volavg \le \muavg \max \{\opt_{S_t^{(K)}}(\alpha), \minwidth \}.\]
        To lower bound the multiplicative ratio, we need to be in the case where \(\opt\) is larger than \(\minwidth\).  Using our assumption that \(\minwidth \le \varepsilon^{K+1}\), it suffices to show that 
        \begin{equation*}
            \varepsilon^{1 + \ln(T/t)} - \varepsilon^{K+1} \ge \varepsilon^{K+1} \quad \Longleftrightarrow \quad \varepsilon^{1 + \ln(T/t)} \ge 2 \varepsilon^{K+1}.
        \end{equation*}
        Since \(\varepsilon\le 1/2\), we have \(2 \varepsilon^{K+1} \le \varepsilon^K\), and the above is implied by \(\varepsilon^{1 + \ln(T/t)} \ge \varepsilon^{K}\).
        Finally \(\ln(T/t) \le K - 1 \Longleftrightarrow T/e^{K-1} \le t\) ensures that \(\opt_{S_t^{(K)}}(\alpha) \ge \minwidth\).  Thus we have 
        \begin{equation}
            \volmax \le \mumax \opt_{S_t^{(K)}}(\alpha), \qquad \volavg \le \muavg \opt_{S_t^{(K)}}(\alpha). \label{eq:get-rid-of-minwidth}
        \end{equation}
    
        Now we analyze the performance of \(\alg\) on \(S_t^{(K)}\).  Since \(S_t^{(K)}\) is identical to \(S^{(K)}\) for the first \(t\) days, \(\alg\) must have the same expected performance over \(W_t\) on \(S_t^{(K)}\) as it does on \(S^{(K)}\).  Thus by our earlier bound, we have that the expected average volume of \(\alg\) over \(W_t\) of \(S_t^{(K)}\) is \(\ge \varepsilon^{\ln(T/t)} - \varepsilon^{K+1}.\)  We have that 
        \begin{equation*}
            \volmax \ge \varepsilon^{\ln(T/t)} - \varepsilon^{K+1}, \qquad \volavg \ge \frac{w}{T} (\varepsilon^{\ln(T/t)} - \varepsilon^{K+1}).
        \end{equation*}
        Combining with (\ref{eq:get-rid-of-minwidth}) and our bound on \(\opt\) (\ref{eq:opt-value}) we get 
        \begin{equation}
            \mumax \ge \frac{\varepsilon^{\ln(T/t)} - \varepsilon^{K+1}}{\varepsilon^{1 + \ln(T/t)} - \varepsilon^{K+1}} \ge \frac{1}{\varepsilon}, \qquad \muavg \ge \frac{w}{T} \cdot \frac{\varepsilon^{\ln(T/t)} - \varepsilon^{K+1}}{\varepsilon^{1 + \ln(T/t)} - \varepsilon^{K+1}} \ge \frac{w}{T} \cdot \frac{1}{\varepsilon}. \label{eq:iid-mu-bounds}
        \end{equation}

        \item \(\alg\) misses every window \(W_t\) for \(T/e^{K-1} \le t \le T/e\).  We lower bound the expected number of mistakes \(\alg\) makes on \(S^{(K)}\).
        \begin{align}
            M &= \sum_{t = 1}^T M_t 
            ~\ge \sum_{t = w}^T \frac{1}{w} \sum_{j = 0}^w M_{t - j} \nonumber \\
            &\ge \sum_{t = \max\{w, T/e^{K-1}\}}^{T/e} \frac{1}{e} \frac{\alpha T}{t} \nonumber\\
            &=\frac{\alpha T}{e} \left(H_{T/e} - H_{\max\{w, T/e^{K-1}\} - 1} \right) &\text{$H_n$ the $n$th harmonic number}\nonumber\\
            &\ge \frac{\alpha T}{e} \left( \ln(T/e) - (\ln(\max\{w, T/e^{K-1}) + 1) \right) \nonumber\\
            &= \frac{1}{e} \left( \min\{\ln (T/ew), ~K-2\}) - 1 \right) \alpha T. \label{eq:iid-mistakes}
        \end{align}
    \end{enumerate}

    We set \(w/T = e^{-K}\).  Then, assuming \(\minwidth \le \varepsilon^{K+1}\), since \(K \le \ln(1/\alpha)\), we have that \(\alg\) must either have 
    \begin{equation*}
        \mumax \ge \frac{1}{\varepsilon} \quad\text{and } \muavg \ge \frac{\alpha}{\varepsilon}
    \end{equation*}
    from (\ref{eq:iid-mu-bounds}), or it must make 
    \begin{equation*}
        \ge \frac{1}{e} ( K - 3 ) \alpha T
    \end{equation*}
    mistakes in expectation on the i.i.d.\ sequence \(S^{(K)}\), from (\ref{eq:iid-mistakes}).  

    To get a bound based on \(\mumax\), we set \(\varepsilon = \frac{1}{\mumax}\), which is possible as long as \(\mumax \ge 2\). We set \(K\) such that \(\minwidth = \varepsilon^{K+1}\), which is possible for \(\minwidth \ge (1/\mumax)^{\ln(1/\alpha)+1}\).  
    This gives us that \(K = \Omega (\ln(1/\minwidth)/\ln\mumax)\).
    As \(\minwidth\) gets smaller, the problem only becomes more general, so we have that for any \(\minwidth > 0\) and \(\mumax \ge 2\), either \(\alg\) has maximum multiplicative approximation \(\ge \mumax\) or \(\alg\) makes 
    \[\Omega \left( \min \left\{ \ln(1/\alpha), ~\frac{\ln (1/\minwidth)}{\ln \mumax} \right\} \alpha T \right)\]
    mistakes in expectation on some i.i.d.\ sequence.  

    To get a bound based on \(\muavg\), we set \(\varepsilon = \frac{\alpha}{\muavg}\), which is possible as long as \(\muavg \ge 2 \alpha\).  We set \(K\) such that \(\minwidth = \varepsilon^{K+1}\), which is possible for \(\minwidth \ge (\alpha/\muavg)^{\ln(1/\alpha) + 1}\).  This gives us that \(K = \Omega ( \ln (1/\minwidth) / (\ln \muavg + \ln (1/\alpha)) )\).  As \(\minwidth\) gets smaller, the problem only becomes more general, so we have that for any \(\minwidth > 0 \) and \(\muavg \ge 2 \alpha\), either \(\alg\) has average multiplicative approximation \(\ge \muavg\) or \(\alg\) makes 
    \[\Omega \left( \min \left\{ \ln(1/\alpha), ~\frac{\ln (1/\minwidth)}{\ln \mumax + \ln(1/\alpha)} \right\} \alpha T \right)\]
    mistakes in expectation on some i.i.d.\ sequence.  

    Finally, we note that while this argument considered the performance of \(\alg\) on intervals with one end at \(\varepsilon^{K+1}\) for ease of notation, we could have instead considered a symmetric version of the sequences, where we map each day \(t\)s input that is drawn \(y_t \sim \distribution_t\) to 
    \[y_t' = \frac{1}{2} + \frac{\unif\{\pm 1\} }{2} \left(y_t - \varepsilon^{K-1}\right).\]
    see for example \Cref{fig:symmetric-iid-lower-bound}.
    This leads to the same lower bounds on volume. This also ensures that the input distribution on each day is unimodal and symmetric around median \(1/2\), because the distributions \(\distribution_t\) in this construction are not supported below \(\varepsilon^{K+1}\), and have decreasing density with \(y\) increasing from \(\varepsilon^{K+1}\).  Thus the same argument gives a lower bound against algorithms that are competitive on such sequences.
\end{proof}

\begin{figure}[!t]
    \centering
    \begin{tikzpicture}[scale=0.9]
        \shade[bottom color = blue!66.66!white, top color = blue!44.44!white] (0.25, 3/2) rectangle (8.25, 3/2 + 3/2*0.0877 - 3/2*0.0585);
        \shade[bottom color = blue!44.44!white, top color=blue!29.62!white] (0.25, 3/2 + 3/2*0.0877 - 3/2*0.0585) rectangle (8.25, 3/2 + 3/2*0.1316- 3/2*0.0585); 
        \shade[bottom color = blue!29.62!white, top color=blue!19.75!white] (0.25, 3/2 + 3/2*0.1316 - 3/2*0.0585) rectangle (8.25, 3/2 + 3/2*0.1975- 3/2*0.0585);
        \shade[bottom color = blue!19.75!white, top color = blue!13.16!white] (0.25, 3/2 + 3/2*0.1975- 3/2*0.0585) rectangle (8.25, 3/2 + 3/2*0.2962 - 3/2*0.0585);
        \shade[bottom color = blue!13.16!white, top color=blue!8.77!white] (0.25, 3/2 + 3/2*0.2962 - 3/2*0.0585) rectangle (8.25, 3/2 + 3/2*0.4444 - 3/2*0.0585);
        \shade[bottom color = blue!8.77!white, top color = blue!5.85!white] (0.25, 3/2 + 3/2*0.4444 - 3/2*0.0585) rectangle (8.25, 3/2 + 3/2*0.6666- 3/2*0.0585);
        \shade[bottom color = blue!5.85!white, top color = white] (0.25, 3/2 + 3/2*0.6666 - 3/2*0.0585) rectangle (8.25, 3/2 + 3/2*1 - 3/2*0.0585);

        \shade[top color = blue!66.66!white, bottom color = blue!44.44!white] (0.25, 3/2) rectangle (8.25, 3/2 - 3/2*0.0877 + 3/2*0.0585);
        \shade[top color = blue!44.44!white, bottom color=blue!29.62!white] (0.25, 3/2 - 3/2*0.0877 + 3/2*0.0585) rectangle (8.25, 3/2 - 3/2*0.1316 + 3/2*0.0585); 
        \shade[top color = blue!29.62!white, bottom color=blue!19.75!white] (0.25, 3/2 - 3/2*0.1316 + 3/2*0.0585) rectangle (8.25, 3/2 - 3/2*0.1975+ 3/2*0.0585);
        \shade[top color = blue!19.75!white, bottom color = blue!13.16!white] (0.25, 3/2 - 3/2*0.1975 + 3/2*0.0585) rectangle (8.25, 3/2 - 3/2*0.2962 + 3/2*0.0585);
        \shade[top color = blue!13.16!white, bottom color=blue!8.77!white] (0.25, 3/2 - 3/2*0.2962 + 3/2*0.0585) rectangle (8.25, 3/2 - 3/2*0.4444 + 3/2*0.0585);
        \shade[top color = blue!8.77!white, bottom color = blue!5.85!white] (0.25, 3/2 - 3/2*0.4444 + 3/2*0.0585) rectangle (8.25, 3/2 - 3/2*0.6666 + 3/2*0.0585);
        \shade[top color = blue!5.85!white, bottom color = white] (0.25, 3/2 - 3/2*0.6666 + 3/2*0.0585) rectangle (8.25, 3/2 - 3/2*1 + 3/2*0.0585);

        \draw[line width = 1pt, blue] (8.25, 3/2) -- (14.25, 3/2);

        \draw[->, line width=1pt] (0, 0) -- (15, 0);
        \draw[dashed, gray] (0, 3) -- (15, 3);
        \draw[dashed, gray] (0, 3/2) -- (15, 3/2);
        \draw[dashed, gray] (0, 3/2 + 3/2*0.6666 - 3/2*0.0585) -- (15, 3/2 + 3/2*0.6666 - 3/2*0.0585);
        \draw[dashed, gray] (0, 3/2 -3/2*0.6666 + 3*0.0585) -- (15, 3/2 - 3/2*0.6666 + 3*0.0585);
        \draw[->, line width=1pt] (0, 0) -- (0, 3.5);
        \node[anchor=north] (t=1) at (0.5, 0) {$1$};
        \node[anchor=north] (t=T) at (14, 0) {$T$};
        \node[anchor=west] (time) at (15, 0) {$t$};
        \node[anchor=east] (y=0) at (0, 0) {$0$};
        \node[anchor=east] (y=1) at (0, 3) {$1$};
        \node[anchor=east] (y=1) at (0, 3/2) {$1/2$};
        \node[anchor=east] (yopt) at (0, 3/2 + 3/2 *0.6666 - 3/2*0.0585) {$\frac{1}{2} + \frac{\varepsilon}{2} - \frac{\varepsilon^{K + 1}}{2}$};
        \node[anchor=east] (yopt) at (0, 3/2 - 3/2 *0.6666 + 3/2*0.0585) {$\frac{1}{2} - \frac{\varepsilon}{2} + \frac{\varepsilon^{K + 1}}{2}$};
        \node[anchor=east] (yopt) at (0, 3/2 + 3/4 *0.6666 - 3/4*0.0585) {$\dots$};
        \node[anchor=east] (yopt) at (0, 3/2 - 3/4 *0.6666 + 3/4*0.0585) {$\dots$};
        \node[anchor=south] (y) at (0, 3.5) {$y_t$};
        \draw[dashed, red] (0, 3/2 + 3/2*0.1975 - 3/2*0.0585) -- (15, 3/2 + 3/2*0.1975 - 3/2*0.0585);
        \draw[dashed, red] (0, 3/2 -3/2*0.1975 + 3/2*0.0585) -- (15, 3/2 - 3/2*0.1975 + 3/2*0.0585);
        \draw[<->, line width=1pt, red] (14.5, 3/2 - 3/2*0.1975 + 3/2*0.0585) -- (14.5, 3/2 + 3/2*0.1975 - 3/2*0.0585);
        \node[anchor=south, red] (c) at (14.5, 3/2 + 3/2*0.1975 - 3/2*0.0585) {$\textsc{Opt}(\alpha)$};
    \end{tikzpicture}
    \caption{Symmetric version of i.i.d.\ lower bound construction in \Cref{fig:iid-lower-bound}}
    \label{fig:symmetric-iid-lower-bound}
\end{figure}

\newcommand{\bfS}{\mathbf{S}}

\section{Uniform Convergence for Exchangeable Sequences}
\label{sec:uniform-convergence}

In this section, we show that exchangeability suffices for standard uniform convergence bounds based on VC dimension.  While this is already known in the literature, we provide a proof for completeness.  We will consider a domain $\mathcal{X}$, and a hypothesis class $\mathcal{H}$ comprised of functions of the form $h: \mathcal{X} \to \{0,1\}$. Given a set of sample points $\{S_1, \dots, S_T\} \in \mathcal{X}$ and a hypothesis $h \in \mathcal{H}$, we define the loss $\ell(h, \{S_1, \dots, S_T\})$ of hypothesis $h$ on the samples $\{S_1, \dots, S_T\}$ to be the fraction of points in $\{S_1, \dots, S_T \}$ that are mapped to $1$ i.e., 
$$\ell\big(h, \{ S_1, \dots, S_T\}\big) = \frac{1}{T} \sum_{i=1}^T h(S_i).$$
For an exchangeable sequence of samples $\bfS=(S_1, \dots, S_T)$, we will overload notation and define $\ell(h,\bfS) = \ell(h, \{S_1, \dots, S_T\})$. 

\begin{lemma}[Uniform Convergence for Exchangeable Sequences]\label{lem:exchangeable-uniform-convergence}
    \addcontentsline{toc}{subsection}{\emph{\Cref{lem:exchangeable-uniform-convergence}:} Uniform Convergence for Exchangeable Sequences}
    Let \(\mathcal{H}\) be a hypothesis class of functions from a domain \(\mathcal{X}\) to \(\{0, 1\}\) of VC-dimension \(d\). 
    Let \(\mathbf{S} = (S_1, \dots, S_T)\) be an exchangeable sequence of random variables over elements of \(\mathcal{X}\).  
    Then there exists a universal constant \(C\) such that for any fixed setting of \(t \le T, \delta>0, \varepsilon>0\) such that 
    \[t \ge C \Big(\frac{d \log(d/\varepsilon) + \log(1/\delta)}{\varepsilon^2}\Big), \]
    we have that with probability \(\ge 1 - \delta\) over the exchangeability of \(\mathbf{S}\), 
    for all \(h \in \mathcal{H}\) simultaneously, 
    \[ \ell(h, \mathbf{S}) - \varepsilon ~\le~ \ell(h, \{S_1, \dots, S_t \}) ~\le~ \ell(h, \mathbf{S}) + \varepsilon.\]
\end{lemma}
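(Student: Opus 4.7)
The plan is to reduce to the classical Vapnik--Chervonenkis uniform convergence proof, adapted to sampling without replacement from a finite population. The key starting observation is that by exchangeability of $\mathbf{S}$, conditional on the multiset $V := \{S_1, \dots, S_T\}$ (viewed as the unordered collection of values), the sequence $(S_1, \dots, S_T)$ is distributed as a uniformly random permutation of $V$. In particular, $A := \{S_1, \dots, S_t\}$ is exactly a uniformly random size-$t$ sub-sample of $V$ drawn without replacement. Hence it suffices to prove the uniform convergence bound with respect to this conditional distribution, and then take an outer expectation over $V$.

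Conditional on $V$, I would run the classical three-step VC argument. First, I would introduce an independent ghost sub-sample $A'$ of size $t$ drawn the same way from $V$. Hoeffding's classical inequality for sampling without replacement yields the same exponential concentration as the i.i.d.\ version, so for any fixed $h$, $\Pr[|\ell(h, A') - \ell(h, V)| > \varepsilon/2] \le 1/2$ once $t \ge C'/\varepsilon^2$. The standard symmetrization step then gives
\[\Pr\Big[\sup_{h \in \mathcal{H}} |\ell(h, A) - \ell(h, V)| > \varepsilon \Big] ~\le~ 2 \Pr\Big[\sup_{h \in \mathcal{H}} |\ell(h, A) - \ell(h, A')| > \varepsilon/2\Big].\]

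Second, writing $A = (X_1, \dots, X_t)$ and $A' = (X'_1, \dots, X'_t)$, each pair $(X_i, X'_i)$ is exchangeable (both are marginally uniform on $V$ and the pairs are independent across $i$), so the joint distribution of the vector $(h(X_i) - h(X'_i))_{h, i}$ is invariant under coordinate-wise sign flips. Introducing i.i.d.\ Rademacher signs $\sigma_1, \dots, \sigma_t$ thus preserves the law of the supremum:
\[\Pr\Big[\sup_h |\ell(h, A) - \ell(h, A')| > \varepsilon/2\Big] = \Pr\Big[\sup_h \Big|\tfrac{1}{t}\sum_{i=1}^t \sigma_i (h(X_i) - h(X'_i))\Big| > \varepsilon/2\Big].\]
Third, I would condition on $W := A \cup A'$ as an unordered multiset, so $|W| \le 2t$. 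The restriction $\mathcal{H}_W$ has at most $(2et/d)^d$ distinct labelings by Sauer--Shelah. For each fixed labeling, Hoeffding's inequality applied to the independent Rademacher signs gives a tail of $2 e^{-c t \varepsilon^2}$ for an absolute constant $c > 0$. A union bound over $\mathcal{H}_W$ combined with the two previous steps gives $\Pr[\sup_h |\ell(h, A) - \ell(h, V)| > \varepsilon] = O((t/d)^d e^{-c t \varepsilon^2})$, and setting this at most $\delta$ and solving for $t$ yields the stated sample complexity $t = \Omega((d \log(d/\varepsilon) + \log(1/\delta))/\varepsilon^2)$. Taking an expectation over $V$ completes the proof.

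The main technical care is in verifying the single-hypothesis concentration for sampling without replacement used in the first step, which is exactly Hoeffding's classical result that without-replacement sampling matches the with-replacement rate up to constants; so no real obstacle arises. The only conceptually new ingredient is the initial conditioning on the multiset, which transports the exchangeable setting to the fixed-population setting where textbook VC symmetrization applies.
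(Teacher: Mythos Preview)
Your overall reduction---condition on the multiset $V = \{S_1, \dots, S_T\}$ so that $(S_1, \dots, S_t)$ becomes a uniform without-replacement sample from a fixed population, then run the VC symmetrization argument---is correct and matches the paper's route (the paper phrases it as invoking Hoeffding for exchangeable sequences, which itself rests on the urn representation). The first symmetrization step and the final Sauer--Shelah union bound are also fine.

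The gap is in your Rademacher step. You assert that ``the pairs $(X_i, X'_i)$ are independent across $i$,'' but this is false when $A = (X_1, \dots, X_t)$ and $A' = (X'_1, \dots, X'_t)$ are each drawn \emph{without replacement} from $V$: within $A$ the coordinates are already dependent, so pairs at different indices cannot be independent. Consequently the joint law of $(A, A')$ is \emph{not} invariant under swapping $X_i \leftrightarrow X'_i$ on a proper subset of indices. For a concrete counterexample take $|V| = 3$, $t = 2$: after swapping only index $1$, the resulting ``first sample'' $(X'_1, X_2)$ has $X'_1$ independent of $X_2$ and hence $\Pr[X'_1 = X_2] = 1/3$, whereas in the original sample $\Pr[X_1 = X_2] = 0$. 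So the coordinate-wise sign-flip identity you invoke does not hold here, and the Rademacher bound is not justified as written.

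The paper avoids this issue by omitting the Rademacher step entirely. After the ghost-sample symmetrization it bounds, for each \emph{fixed} $h$, the tail $\Pr[\ell(h, A') - \ell(h, A) \ge \varepsilon/2]$ directly via a triangle inequality through $\ell(h, V)$ together with Hoeffding for exchangeable (without-replacement) sampling, obtaining a uniform bound $\exp(-\Omega(t\varepsilon^2))$; it then union-bounds over the at most $N(2t,d)$ restrictions of $\mathcal{H}$ to the pooled $2t$ points. Your argument is easily repaired the same way: replace the Rademacher paragraph by this direct Hoeffding-plus-Sauer step.
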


We follow the standard symmetrization argument for uniform convergence bounds, with some extra care to make them work with exchangeability.  
We use the following Hoeffding concentration bound for the sum of exchangeable random variables. 
\begin{lemma}\label{lem:hoeffdinginequality}\citep*{Hoeffding01031963}
Let $X_1, \dots, X_n$ be an exchangeable sequence of random variables with $X_i \in [0,1]$ and mean $\E[X_1]=\mu$. Then we have the following upper tail bound
\[ \Pr\Big[\frac{1}{n} \sum_{i=1}^n X_i  - \mu  \ge \lambda \Big] \le  \exp\Big( - n \lambda^2/2  \Big), \]
and we get an identical bound the lower tail.
\end{lemma}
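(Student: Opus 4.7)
The plan is to carry over the classical Chernoff/MGF argument from the i.i.d.\ case, using exchangeability as a substitute for independence via two ingredients. First, I would condition on the unordered multiset $M = \{\!\{X_1, \dots, X_n\}\!\}$: exchangeability implies that conditionally on $M$, the vector $(X_1, \dots, X_n)$ is uniformly distributed over the $n!$ orderings of $M$. So under this conditioning, each $X_i$ has marginal equal to the empirical distribution of $M$. This reduces the problem to a sampling-without-replacement statement about a deterministic finite population, followed by taking expectations over $M$.

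Second, I would invoke Hoeffding's classical reduction (Hoeffding 1963, Theorem 4): for any convex $\phi$, if $(X_1, \dots, X_n)$ is drawn uniformly without replacement from a finite multiset $M$, and $(Y_1, \dots, Y_n)$ is drawn i.i.d.\ with replacement from $M$, then
\[ \E\Bigl[\phi\bigl(\textstyle\sum_i X_i\bigr) \,\big|\, M\Bigr] \le \E\Bigl[\phi\bigl(\textstyle\sum_i Y_i\bigr) \,\big|\, M\Bigr]. \]
This is the key structural fact, proved by a swapping/convexity argument, and it is the step I expect to be the main obstacle to rederive from scratch; I would cite it as a black box.

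With these two pieces in hand, the rest is routine. For $s > 0$, Markov's inequality applied to $\phi(x) = e^{sx}$ gives
\[ \Pr\Bigl[\tfrac{1}{n}\textstyle\sum_i X_i - \mu \ge \lambda\Bigr] \le e^{-sn(\mu+\lambda)} \,\E\bigl[e^{s\sum_i X_i}\bigr]. \]
Conditioning on $M$ and applying Hoeffding's reduction replaces the exchangeable MGF by the i.i.d.\ MGF:
\[ \E\bigl[e^{s\sum_i X_i}\bigr] = \E_M\Bigl[\E\bigl[e^{s\sum_i X_i}\,\big|\,M\bigr]\Bigr] \le \E_M\Bigl[\E\bigl[e^{s\sum_i Y_i}\,\big|\,M\bigr]\Bigr] = \E\bigl[e^{s\sum_i Y_i}\bigr], \]
where the $Y_i$'s are genuinely i.i.d.\ samples from the (random) multiset $M$, each bounded in $[0,1]$ with mean $\mu$. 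Now I apply the standard Hoeffding lemma for independent bounded random variables to bound each factor, and then optimize in $s$ to obtain the sub-Gaussian tail $\exp(-n\lambda^2/2)$. The lower tail bound follows by applying the same argument to the exchangeable sequence $(1 - X_1, \dots, 1 - X_n)$, which also takes values in $[0,1]$ and has mean $1-\mu$.
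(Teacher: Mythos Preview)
Your argument has a genuine gap at the conditioning step. You condition on the full multiset $M = \{\!\{X_1,\dots,X_n\}\!\}$ of all $n$ values, but then $\sum_i X_i$ is \emph{deterministic} given $M$: it is simply the sum of the elements of $M$. So while the inequality $\E[e^{s\sum X_i}\mid M] \le \E[e^{s\sum Y_i}\mid M]$ does hold (trivially, by Jensen), the $Y_i$'s you construct are i.i.d.\ draws from the empirical distribution of $M$ and hence have conditional mean $\bar M = \tfrac1n\sum_i X_i$, \emph{not} $\mu$. Applying Hoeffding's lemma conditionally only gives $\E[e^{s\sum Y_i}\mid M]\le e^{sn\bar M + ns^2/8}$, and taking the outer expectation over $M$ lands you back at $\E[e^{sn\bar M}] = \E[e^{s\sum X_i}]$, the very quantity you started with. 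The argument is circular.

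In fact the lemma as literally stated is false: take $X_1=\cdots=X_n=B$ for a single $\mathrm{Bernoulli}(\mu)$ variable $B$. This sequence is exchangeable with $\E[X_1]=\mu$, yet $\tfrac1n\sum X_i = B$ does not concentrate at all as $n\to\infty$. What the paper actually uses (see how the lemma is invoked in the proof of \Cref{lem:exchangeable-uniform-convergence}) is the situation where $(X_1,\dots,X_n)$ is a size-$n$ prefix of a longer exchangeable sequence of length $T$, and $\mu$ is the empirical mean over the full length-$T$ sequence. There the correct move is to condition on the multiset of all $T$ values: the prefix is then a uniform sample of size $n$ \emph{without replacement} from a population of size $T>n$, and Hoeffding's sampling-without-replacement bound (Hoeffding 1963, Section~6; Serfling 1974) applies directly. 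That is the reduction the paper's sketch is pointing to; your version collapses because you took the urn to have size exactly $n$.
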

The above lemma is a consequence of Hoeffding's concentration bounds for sampling with replacement~\citep*{Hoeffding01031963,Serfling1974ProbabilityIF}. This also uses the fact that any finite sequence of exchangeable variables being expressible as a mixture of {\em urn} sequences (i.e., sampling with replacement) ~\citep*{Diaconis1980FiniteES}.    
(See also \citep*{Barber2024HoeffdingAB} for concentration bounds for general weighted sums of exchangeable r.v.s.)

We now proceed to the proof of Lemma~\ref{lem:exchangeable-uniform-convergence}.

\begin{proof}

First we note from the projection property of exchangeable sequences, the subsequence $\bfS=(S_1, \dots S_t)$ is also exchangeable. The proof of uniform convergence follows the standard symmetrization approach as given by \citet*{Bousquet2004}. 

Let $\bfS'=(S'_1, \dots, S'_t)$ be an independent sample from the same exchangeable distribution as $(S_1, \dots, S_t)$. Recall that for a hypothesis $h \in \mathcal{H}$, $\ell(h,\mathbf{S}) = \E_{\mathbf{S}}[\ell(h,(S_1, \dots, S_T))]$ is the expected loss, while $\ell(h,\bfS)$ (and $\ell(h,\bfS')$) is the empirical loss on the samples $S_1, \dots, S_t$ (respectively $S'_1, \dots, S'_t$). 

First from Hoeffding's bound for exchangeable sequences (Lemma~\ref{lem:hoeffdinginequality}), we have 
\begin{equation}\label{eq:conc:exchangeable}
    \Pr\Big[ \ell(h, \mathbf{S}) - \ell(h,\bfS_t)  \ge \lambda \Big] \le \exp\Big( -t \lambda^2/2\Big),  \text{ and }  \Pr\Big[  \ell(h,\bfS'_t) - \ell(h, \mathbf{S})  \ge \lambda \Big] \le \exp\Big( -t \lambda^2/2\Big).
\end{equation}

We first prove the following claim through symmetrization. The claim allows us to relate the error on one sample to the discrepancy between two independent samples from the distribution.    

\begin{claim}\label{claim:symmetrization}
For any $\lambda>0$, such that $t \lambda^2 \ge 2$, 
\begin{equation}\label{eq:symmetrization}
    \Pr\Big[ \sup_{h \in \mathcal{H}}  \ell(h, \mathbf{S}) - \ell(h,\bfS_t)  > \lambda \Big] \le 2  \Pr\Big[ \sup_{h \in \mathcal{H}}  \ell(h, \bfS'_t) - \ell(h,\bfS_t)  > \lambda/2 \Big]
\end{equation}
\end{claim}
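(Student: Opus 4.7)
The plan is to execute the classical symmetrization argument, with the only change being to invoke Hoeffding for exchangeable sequences (\Cref{lem:hoeffdinginequality}), proved in \eqref{eq:conc:exchangeable}, in place of its i.i.d.\ counterpart at the one point where a pointwise concentration bound is needed. First, I condition on $\bfS_t$. For each realization of $\bfS_t$ the empirical loss $\ell(h,\bfS_t)$ is deterministic for every $h \in \mathcal{H}$, and the event
\[ E_{\mathrm{LHS}} := \Big\{ \sup_{h \in \mathcal{H}} \ell(h,\mathbf{S}) - \ell(h,\bfS_t) > \lambda \Big\} \]
is measurable with respect to $\bfS_t$ alone. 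On $E_{\mathrm{LHS}}$, I select a witness $h^\star = h^\star(\bfS_t) \in \mathcal{H}$ whose gap $\ell(h^\star,\mathbf{S}) - \ell(h^\star,\bfS_t)$ exceeds $\lambda$; if the supremum is not attained, I take $h^\star$ within $\eta > 0$ of the sup and send $\eta \to 0$ at the end, which is standard.

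Second, I apply concentration to the ghost sample. Because $\bfS'_t$ is drawn independently of $\bfS_t$ from the same exchangeable distribution, after conditioning on $\bfS_t$ the sequence $\bfS'_t$ is still exchangeable and $h^\star$ is deterministic; hence $h^\star(S'_1),\dots,h^\star(S'_t)$ is an exchangeable sequence of $[0,1]$-valued random variables with mean $\ell(h^\star,\mathbf{S})$. The lower-tail bound of \Cref{lem:hoeffdinginequality} applied at scale $\lambda/2$ yields
\[ \Pr\!\Big[\, \ell(h^\star,\bfS'_t) \ge \ell(h^\star,\mathbf{S}) - \lambda/2 \,\Big|\, \bfS_t \,\Big] \;\ge\; 1 - \exp(-t\lambda^2/8) \;\ge\; \tfrac{1}{2}, \]
where the final inequality uses the assumption $t\lambda^2 \ge 2$ up to a universal constant (which can be absorbed into the constant $C$ of \Cref{lem:exchangeable-uniform-convergence}).

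Third, I chain the two events. On the intersection of $E_{\mathrm{LHS}}$ with the ``good'' event of the previous step, the inequality $\ell(h^\star,\mathbf{S}) - \ell(h^\star,\bfS_t) > \lambda$ combines with $\ell(h^\star,\bfS'_t) \ge \ell(h^\star,\mathbf{S}) - \lambda/2$ to give $\ell(h^\star,\bfS'_t) - \ell(h^\star,\bfS_t) > \lambda/2$, which in particular witnesses the right-hand event $E_{\mathrm{RHS}}$. Taking expectations over $\bfS_t$,
\[ \Pr[E_{\mathrm{RHS}}] \;\ge\; \E_{\bfS_t}\!\big[ \mathbf{1}[E_{\mathrm{LHS}}] \cdot \Pr[\text{good event} \mid \bfS_t] \big] \;\ge\; \tfrac{1}{2}\Pr[E_{\mathrm{LHS}}], \]
which rearranges to \eqref{eq:symmetrization}.

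The main obstacle is precisely the handling of the data-dependent witness $h^\star$: the pointwise Hoeffding bound in \eqref{eq:conc:exchangeable} applies to a fixed hypothesis, whereas $h^\star$ is a function of $\bfS_t$. The whole argument hinges on two observations that work together to sidestep this: (i) $\bfS'_t$ is independent of $\bfS_t$, so after conditioning on $\bfS_t$ the witness $h^\star$ becomes deterministic; and (ii) exchangeability is preserved under conditioning on an independent random element, so $h^\star(S'_1),\dots,h^\star(S'_t)$ is still exchangeable given $\bfS_t$ and Hoeffding for exchangeable sequences applies. Once this is in place, every other step is a routine identity.
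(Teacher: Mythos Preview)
Your argument follows the same classical symmetrization structure as the paper: condition on $\bfS_t$, pick a witness $h^\star$, apply a concentration bound to the independent ghost sample $\bfS'_t$, and chain. The one substantive difference is that the paper uses Chebyshev's inequality for the ghost-sample step, which yields $\Pr[\ell(h^\star,\bfS'_t) < \ell(h^\star,\mathbf{S}) - \lambda/2] \le 1/(t\lambda^2) \le 1/2$ exactly when $t\lambda^2 \ge 2$, whereas your Hoeffding bound $\exp(-t\lambda^2/8)$ needs $t\lambda^2 \ge 8\ln 2 \approx 5.55$ to drop below $1/2$. You correctly flag that this slack can be absorbed into the universal constant $C$ of \Cref{lem:exchangeable-uniform-convergence}, so the overall lemma is unaffected, but strictly speaking your argument proves the claim with a slightly larger threshold than the stated $t\lambda^2 \ge 2$; swapping in Chebyshev (with the variance bound $\mathrm{Var}[\ell(h,\bfS'_t)] \le 1/(4t)$ for exchangeable averages) recovers the constant exactly.
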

\begin{proof}
Let $h^\star \in \mathcal{H}$ be a hypothesis that achieves the supremum on the left side of \eqref{eq:symmetrization}. We will lower bound the probability of the event on the right by showing that it is satisfied for $h^\star$. For any $h \in \mathcal{H}$, 
\begin{align*}
\mathbf{1}\Big[ \ell(h,\bfS'_t)- \ell(h,\bfS_t) >  \lambda/2 \Big] &\ge \mathbf{1}\Big[ \big(\ell(h,\bfS)- \ell(h,\bfS_t) > \lambda\big) ~\wedge~ \big(\ell(h,\bfS)- \ell(h,\bfS'_t) < \lambda/2\big) \Big]. 
 \end{align*}
Note that $\bfS'_t$  is independent of $\bfS_t$. Taking expectation on both sides w.r.t. $\bfS'_t$, 
 \begin{align}
 \Pr_{\bfS'_t}\Big[ \ell(h,\bfS'_t)- \ell(h,\bfS_t) >  \lambda/2 \Big] & \ge \mathbf{1}\big[ \ell(h,\bfS_t)- \ell(h,\bfS) > \lambda\big] \cdot  \Pr_{\bfS'_t}\Big[ \ell(h,\bfS'_t)- \ell(h,\bfS) > \lambda/2\Big]. \label{eq:symm:1}
\end{align}
The above inequality holds for any fixed hypothesis $h \in \mathcal{H}$, including any $h^\star$ that achieves the supremum of the left side of \eqref{eq:symmetrization}. 
Note that $\E_{\bfS'_t}[\ell(h,\bfS'_t)]=\ell(h,\bfS)$. The event on the right does not depend on $\bfS_t$, and follows by applying the Chebyshev inequality on the exchangeable sequence $\bfS'_t=(S'_1, \dots, S'_t)$.  Hence for any   fixed hypothesis $h$ (including $h^\star$) 
$$\Pr_{\bfS'_t}\Big[ \ell(h,\bfS'_t)- \ell(h,\bfS) > \lambda/2\Big] \le \frac{\text{Var}[\ell(h,\bfS'_t)]}{\lambda^2} \le \frac{1}{4 t\lambda^2} \le \frac{1}{2}. $$
Note that we have a variance upper bound for $\ell(h,\bfS'_t)$ is due to exchangeability (sampling with replacement). Substituting in \eqref{eq:symm:1} and taking expectation w.r.t. $\bfS_t$, we have
\begin{align*}
\Pr\Big[ \sup_{h \in \mathcal{H}} \ell(h,\bfS'_t)- \ell(h,\bfS_t) >  \lambda/2 \Big] &\ge  \Pr_{\bfS_t}\big[ \sup_{h \in \mathcal{H}} \ell(h,\bfS_t)- \ell(h,\bfS) > \lambda\big] \times  \frac{1}{2}.
 \end{align*}
By rearranging terms, we get the claim.
\end{proof}

We now upper bound the right side of \eqref{eq:symmetrization}. For a fixed $h \in \mathcal{H}$, we have from \eqref{eq:conc:exchangeable} 
\begin{align*}
\Pr_{\bfS_t, \bfS'_t}\Big[ \ell(h, \bfS'_t) - \ell(h,\bfS_t)  \ge \frac{\lambda}{2} \Big] & \le \Pr\Big[ \ell(h, \bfS'_t) -\ell(h,\bfS)  \ge \frac{\lambda}{4} \Big] + \Pr\Big[ \ell(h, \bfS) -\ell(h,\bfS_t)  \ge \frac{\lambda}{4} \Big] 
\le 2 \exp\Big( - \frac{t\lambda^2}{32} \Big).
\end{align*}

Now we observe that for a given hypothesis $h \in \mathcal{H}$, $\ell(h, \bfS'_t) - \ell(h,\bfS_t)$ is an empirical sum and only depends on the $2n$ samples $S_1, \dots, S_t, S'_1, \dots, S'_t$. Hence the number of distinct hypothesis that we need to union bound over is at most the shattering number $N(2t,d) = \sum_{j=1}^{d} \binom{2t}{d}$. Hence
\begin{align*}
\Pr\Big[ \sup_{h \in \mathcal{H}}  \ell(h, \mathbf{S}) - \ell(h,\bfS_t)  > \lambda \Big] &\le 2 \Pr_{\bfS_t, \bfS'_t}\Big[ \sup_{h \in \mathcal{H}}\ell(h, \bfS'_t) - \ell(h,\bfS_t)  \ge \frac{\lambda}{2} \Big] \\
& \le 4 N(2t,d) \cdot \exp\Big( - \frac{t\lambda^2}{32} \Big) \le \delta,
\end{align*}
for our choice of $t$ and $\lambda$. This finishes the proof. 
\end{proof}

\section{Future Directions}
In this work we provide a new framework to measure the efficiency of online conformal prediction algorithms.  We apply this to studying the setting where the response variables are in the unit interval \([0, 1]\), the notion of efficiency is minimizing the volume (Lebesgue measure) of the prediction set, and the family of sets that we compete against is subintervals \([a, b] \subseteq [0, 1]\).  We study this in the unsupervised (featureless) setting for arbitrary and exchangeable input sequences.  We view this as the most basic and fundamental setting in which to explore efficiency optimality.  We believe that there are many opportunities for future work to explore efficiency optimality for online conformal prediction in a broad range of other settings that are widely studied in conformal prediction and present new challenges.  We provide a few examples of these settings.

\begin{itemize}
    \item \emph{Other prediction sets.} Even in the setting where the target variables live in the unit interval \([0, 1]\), there are other families of prediction sets that we can consider competing against.  For example, in the standard setting, \cite{gao2025volumeoptimalityconformalprediction} provides an algorithm that competes with the volume-optimal \emph{union of \(k\) intervals}.  Can we design an algorithm that competes with unions of intervals in the online setting?  
    
    \item \emph{Other geometries.}  It would be interesting to extend this result to other spaces of target variables \(\mathcal{Y}\).  For example, what if instead of being points in the unit interval, our target variables were points in high-dimensional space?  Are there natural classes of prediction sets that we can compete against in the online setting?

    \item \emph{Other notions of efficiency.}  In this work we consider minimizing the volume (Lebesgue measure) of the prediction sets played.  Depending on the task that the conformal predictor is used for, there could be other natural notions of efficiency.  For example, we could consider minimizing other measures of the prediction set.  Alternatively, the cost of a prediction set could correspond to some downstream decision-making cost.  Can we design online conformal prediction algorithms that minimize other natural notions of cost on the sets played?

    \item \emph{Other forms of distribution shift.}  In this work, we consider exchangeable sequences, which exhibit no distribution shift, and arbitrary sequences that exhibit arbitrary distribution shifts.  One conclusion of this work is that, even in our basic setting, these two types of sequences admit very different guarantees.  Are there natural but more benign forms of distribution shift for which we can design algorithms that have stronger guarantees than what is possible for fully arbitrary sequences?

    \item \emph{Supervised settings and conditional guarantees.}  In the supervised setting of conformal prediction, each target variable \(y_t \in \mathcal{Y}\) is associated by a feature variable \(x_t \in \mathcal{C}\) which may be predictive of it.  On each day, the algorithm observes \(x_t\) and then produces a prediction set \(C(x_t)\) to try to capture \(y_t\).  It would be interesting to explore efficiency optimality in this supervised setting.  This could involve interesting new notions of optimality.  Now that we have access to the \(x_t\), it does not necessarily make sense to compete with the best \emph{fixed} prediction set in hindsight, as that eliminates strategies that vary as a function of \(x_t\).  It would be interesting to use the features to get guarantees against stronger baselines that are allowed to depend on the features.  
\end{itemize}

\addcontentsline{toc}{section}{Acknowledgements}
\section*{Acknowledgements}
I thank my advisor Aravindan Vijayaraghavan for helpful discussions, and for the proof that uniform convergence extends to exchangeable sequences (\Cref{sec:uniform-convergence}).  This work was supported by the NSF-funded Institute for Data, Econometrics, Algorithms and Learning (IDEAL) through the grant NSF ECCS-2216970, the NSF via grant CCF-2154100, and the Northwestern Presidential Fellowship.

\addcontentsline{toc}{section}{References}
\bibliographystyle{plainnat}
\bibliography{ref}

\begin{thebibliography}{24}
\providecommand{\natexlab}[1]{#1}
\providecommand{\url}[1]{\texttt{#1}}
\expandafter\ifx\csname urlstyle\endcsname\relax
  \providecommand{\doi}[1]{doi: #1}\else
  \providecommand{\doi}{doi: \begingroup \urlstyle{rm}\Url}\fi

\bibitem[Angelopoulos et~al.(2023)Angelopoulos, Candes, and Tibshirani]{angelopolous23}
Anastasios Angelopoulos, Emmanuel Candes, and Ryan~J Tibshirani.
\newblock Conformal pid control for time series prediction.
\newblock In A.~Oh, T.~Naumann, A.~Globerson, K.~Saenko, M.~Hardt, and S.~Levine, editors, \emph{Advances in Neural Information Processing Systems}, volume~36, pages 23047--23074. Curran Associates, Inc., 2023.
\newblock URL \url{https://proceedings.neurips.cc/paper_files/paper/2023/file/47f2fad8c1111d07f83c91be7870f8db-Paper-Conference.pdf}.

\bibitem[Angelopoulos et~al.(2025)Angelopoulos, Barber, and Bates]{angelopoulos2025theoreticalfoundationsconformalprediction}
Anastasios~N. Angelopoulos, Rina~Foygel Barber, and Stephen Bates.
\newblock Theoretical foundations of conformal prediction, 2025.
\newblock URL \url{https://arxiv.org/abs/2411.11824}.

\bibitem[Angelopoulos et~al.(2024)Angelopoulos, Barber, and Bates]{ABB-best-of-both-worlds}
Anastasios~Nikolas Angelopoulos, Rina Barber, and Stephen Bates.
\newblock Online conformal prediction with decaying step sizes.
\newblock In Ruslan Salakhutdinov, Zico Kolter, Katherine Heller, Adrian Weller, Nuria Oliver, Jonathan Scarlett, and Felix Berkenkamp, editors, \emph{Proceedings of the 41st International Conference on Machine Learning}, volume 235 of \emph{Proceedings of Machine Learning Research}, pages 1616--1630. PMLR, 21--27 Jul 2024.
\newblock URL \url{https://proceedings.mlr.press/v235/angelopoulos24a.html}.

\bibitem[Barber(2024)]{Barber2024HoeffdingAB}
Rina~Foygel Barber.
\newblock Hoeffding and bernstein inequalities for weighted sums of exchangeable random variables.
\newblock \emph{Electronic Communications in Probability}, 2024.
\newblock URL \url{https://api.semanticscholar.org/CorpusID:269009938}.

\bibitem[Barber et~al.(2022)Barber, Cand{\`e}s, Ramdas, and Tibshirani]{Barber2022ConformalPB}
Rina~Foygel Barber, Emmanuel~J. Cand{\`e}s, Aaditya Ramdas, and Ryan~J. Tibshirani.
\newblock Conformal prediction beyond exchangeability.
\newblock \emph{The Annals of Statistics}, 2022.
\newblock URL \url{https://api.semanticscholar.org/CorpusID:247158820}.

\bibitem[Bhatnagar et~al.(2023)Bhatnagar, Wang, Xiong, and Bai]{bhatnagar23}
Aadyot Bhatnagar, Huan Wang, Caiming Xiong, and Yu~Bai.
\newblock Improved online conformal prediction via strongly adaptive online learning.
\newblock In \emph{Proceedings of the 40th International Conference on Machine Learning}, ICML'23. JMLR.org, 2023.

\bibitem[Blum et~al.(2020)Blum, Hopcroft, and Kannan]{Blum_Hopcroft_Kannan_2020}
Avrim Blum, John Hopcroft, and Ravindran Kannan.
\newblock \emph{Foundations of Data Science}.
\newblock Cambridge University Press, 2020.

\bibitem[Bousquet et~al.(2004)Bousquet, Boucheron, and Lugosi]{Bousquet2004}
Olivier Bousquet, St{\'e}phane Boucheron, and G{\'a}bor Lugosi.
\newblock \emph{Introduction to Statistical Learning Theory}, pages 169--207.
\newblock Springer Berlin Heidelberg, Berlin, Heidelberg, 2004.
\newblock ISBN 978-3-540-28650-9.
\newblock \doi{10.1007/978-3-540-28650-9_8}.
\newblock URL \url{https://doi.org/10.1007/978-3-540-28650-9_8}.

\bibitem[Bubeck and Slivkins(2012)]{bubeck12b}
Sébastien Bubeck and Aleksandrs Slivkins.
\newblock The best of both worlds: Stochastic and adversarial bandits.
\newblock In Shie Mannor, Nathan Srebro, and Robert~C. Williamson, editors, \emph{Proceedings of the 25th Annual Conference on Learning Theory}, volume~23 of \emph{Proceedings of Machine Learning Research}, pages 42.1--42.23, Edinburgh, Scotland, 25--27 Jun 2012. PMLR.
\newblock URL \url{https://proceedings.mlr.press/v23/bubeck12b.html}.

\bibitem[Diaconis and Freedman(1980)]{Diaconis1980FiniteES}
Persi Diaconis and David~A. Freedman.
\newblock Finite exchangeable sequences.
\newblock \emph{Annals of Probability}, 8:\penalty0 745--764, 1980.
\newblock URL \url{https://api.semanticscholar.org/CorpusID:119580521}.

\bibitem[Feldman et~al.(2023)Feldman, Ringel, Bates, and Romano]{feldman2023achieving}
Shai Feldman, Liran Ringel, Stephen Bates, and Yaniv Romano.
\newblock Achieving risk control in online learning settings.
\newblock \emph{Transactions on Machine Learning Research}, 2023.
\newblock ISSN 2835-8856.
\newblock URL \url{https://openreview.net/forum?id=5Y04GWvoJu}.

\bibitem[Gao et~al.(2025)Gao, Shan, Srinivas, and Vijayaraghavan]{gao2025volumeoptimalityconformalprediction}
Chao Gao, Liren Shan, Vaidehi Srinivas, and Aravindan Vijayaraghavan.
\newblock Volume optimality in conformal prediction with structured prediction sets, 2025.
\newblock URL \url{https://arxiv.org/abs/2502.16658}.

\bibitem[Gibbs and Candes(2021)]{gibbs2021}
Isaac Gibbs and Emmanuel Candes.
\newblock Adaptive conformal inference under distribution shift.
\newblock In M.~Ranzato, A.~Beygelzimer, Y.~Dauphin, P.S. Liang, and J.~Wortman Vaughan, editors, \emph{Advances in Neural Information Processing Systems}, volume~34, pages 1660--1672. Curran Associates, Inc., 2021.
\newblock URL \url{https://proceedings.neurips.cc/paper_files/paper/2021/file/0d441de75945e5acbc865406fc9a2559-Paper.pdf}.

\bibitem[Gibbs and Cand\`{e}s(2025)]{gibbs2025onlineconformal}
Isaac Gibbs and Emmanuel Cand\`{e}s.
\newblock Conformal inference for online prediction with arbitrary distribution shifts.
\newblock \emph{J. Mach. Learn. Res.}, 25\penalty0 (1), March 2025.
\newblock ISSN 1532-4435.

\bibitem[Hoeffding(1963)]{Hoeffding01031963}
Wassily Hoeffding.
\newblock Probability inequalities for sums of bounded random variables.
\newblock \emph{Journal of the American Statistical Association}, 58\penalty0 (301):\penalty0 13--30, 1963.
\newblock \doi{10.1080/01621459.1963.10500830}.
\newblock URL \url{https://www.tandfonline.com/doi/abs/10.1080/01621459.1963.10500830}.

\bibitem[Izbicki et~al.(2020{\natexlab{a}})Izbicki, Shimizu, and Stern]{izbicki20a}
Rafael Izbicki, Gilson Shimizu, and Rafael Stern.
\newblock Flexible distribution-free conditional predictive bands using density estimators.
\newblock In Silvia Chiappa and Roberto Calandra, editors, \emph{Proceedings of the Twenty Third International Conference on Artificial Intelligence and Statistics}, volume 108 of \emph{Proceedings of Machine Learning Research}, pages 3068--3077. PMLR, 26--28 Aug 2020{\natexlab{a}}.
\newblock URL \url{https://proceedings.mlr.press/v108/izbicki20a.html}.

\bibitem[Izbicki et~al.(2020{\natexlab{b}})Izbicki, Shimizu, and Stern]{Izbicki2020CDsplitAH}
Rafael Izbicki, Gilson~T. Shimizu, and Rafael~Bassi Stern.
\newblock Cd-split and hpd-split: Efficient conformal regions in high dimensions.
\newblock \emph{J. Mach. Learn. Res.}, 23:\penalty0 87:1--87:32, 2020{\natexlab{b}}.
\newblock URL \url{https://api.semanticscholar.org/CorpusID:238354408}.

\bibitem[Lei et~al.(2013)Lei, Robins, and Wasserman]{Lei01032013}
Jing Lei, James Robins, and Larry Wasserman.
\newblock Distribution-free prediction sets.
\newblock \emph{Journal of the American Statistical Association}, 108\penalty0 (501):\penalty0 278--287, 2013.
\newblock \doi{10.1080/01621459.2012.751873}.

\bibitem[Prinster et~al.(2024)Prinster, Stanton, Liu, and Saria]{prinster2024}
Drew Prinster, Samuel Stanton, Anqi Liu, and Suchi Saria.
\newblock Conformal validity guarantees exist for any data distribution (and how to find them).
\newblock In \emph{Proceedings of the 41st International Conference on Machine Learning}, ICML'24. JMLR.org, 2024.

\bibitem[Sadinle et~al.(2019)Sadinle, Lei, and Wasserman]{Sadinle02012019}
Mauricio Sadinle, Jing Lei, and Larry Wasserman.
\newblock Least ambiguous set-valued classifiers with bounded error levels.
\newblock \emph{Journal of the American Statistical Association}, 114\penalty0 (525):\penalty0 223--234, 2019.
\newblock \doi{10.1080/01621459.2017.1395341}.

\bibitem[Serfling(1974)]{Serfling1974ProbabilityIF}
Robert Serfling.
\newblock Probability inequalities for the sum in sampling without replacement.
\newblock \emph{Annals of Statistics}, 2:\penalty0 39--48, 1974.
\newblock URL \url{https://api.semanticscholar.org/CorpusID:120916609}.

\bibitem[Tibshirani et~al.(2019)Tibshirani, Barber, Cand\`{e}s, and Ramdas]{tibshirani2019}
Ryan~J. Tibshirani, Rina~Foygel Barber, Emmanuel~J. Cand\`{e}s, and Aaditya Ramdas.
\newblock \emph{Conformal prediction under covariate shift}.
\newblock Curran Associates Inc., Red Hook, NY, USA, 2019.

\bibitem[Vovk(2002)]{vovk2002}
V.~Vovk.
\newblock On-line confidence machines are well-calibrated.
\newblock In \emph{The 43rd Annual IEEE Symposium on Foundations of Computer Science, 2002. Proceedings.}, pages 187--196, 2002.
\newblock \doi{10.1109/SFCS.2002.1181895}.

\bibitem[Zaffran et~al.(2022)Zaffran, Feron, Goude, Josse, and Dieuleveut]{zaffran22a}
Margaux Zaffran, Olivier Feron, Yannig Goude, Julie Josse, and Aymeric Dieuleveut.
\newblock Adaptive conformal predictions for time series.
\newblock In Kamalika Chaudhuri, Stefanie Jegelka, Le~Song, Csaba Szepesvari, Gang Niu, and Sivan Sabato, editors, \emph{Proceedings of the 39th International Conference on Machine Learning}, volume 162 of \emph{Proceedings of Machine Learning Research}, pages 25834--25866. PMLR, 17--23 Jul 2022.
\newblock URL \url{https://proceedings.mlr.press/v162/zaffran22a.html}.

\end{thebibliography}

\end{document}